\newtheorem{theorem}{Theorem}
\newtheorem{assumption}{Assumption}
\newtheorem{proposition}{Proposition}
\newtheorem{lemma}{Lemma}
\newtheorem{fact}{Fact}
\newtheorem{definition}{Definition}
\newtheorem{remark}{Remark}
\newtheorem{claim}{Claim}
\definecolor{LightCyan}{rgb}{0.88,1,1}
\begin{document}

\title{Distributed Differential Privacy in Multi-Armed Bandits}

\author {
    Sayak Ray Chowdhury\thanks{Equal contributions} \footnote{Boston University, Massachusetts, USA. Email: \texttt{sayak@bu.edu}  }\quad
    Xingyu Zhou\footnotemark[1] \footnote{Wayne State University, Detroit, USA.  Email: \texttt{xingyu.zhou@wayne.edu}}
}

\date{}

\maketitle

\doparttoc 
 \faketableofcontents

\begin{abstract}
We consider the standard $K$-armed bandit problem under a distributed trust model of differential privacy (DP), which enables to guarantee privacy without a trustworthy server. Under this trust model, previous work largely focus on achieving privacy using a shuffle protocol, where a batch of users data are randomly permuted before sending to a central server. This protocol achieves ($\epsilon,\delta$) or approximate-DP guarantee by sacrificing an additional additive $O\!\left(\!\frac{K\log T\sqrt{\log(1/\delta)}}{\epsilon}\!\right)\!$ cost in $T$-step cumulative regret. In contrast, the optimal privacy cost for achieving a stronger ($\epsilon,0$) or pure-DP guarantee under the widely used central trust model is only $\Theta\!\left(\!\frac{K\log T}{\epsilon}\!\right)\!$, where, however, a trusted server is required. In this work, we aim to obtain a pure-DP guarantee under distributed trust model while sacrificing no more regret than that under central trust model. We achieve this by designing a generic bandit algorithm based on successive arm elimination, where privacy is guaranteed by corrupting rewards with an equivalent discrete Laplace noise ensured by a secure computation protocol. We also show that our algorithm, when instantiated with Skellam noise and the secure protocol, ensures
\emph{R\'{e}nyi differential privacy} -- a stronger notion than approximate DP -- under distributed trust model with a privacy cost of $O\!\left(\!\frac{K\sqrt{\log T}}{\epsilon}\!\right)\!$.
\end{abstract}

\section{Introduction}
The multi-armed bandit (MAB)~\cite{berry1985bandit} problem provides a simple but powerful framework for sequential decision-making under uncertainty with bandit feedback, which has attracted a wide range of practical applications such as online advertising \cite{abe2003reinforcement}, product recommendations \cite{li2010contextual}, clinical trials \cite{tewari2017ads}, to name a few. Along with its broad applicability, however, there is an increasing concern of privacy risk in MAB due to its intrinsic dependence on users' feedback, which could leak users' sensitive information~\cite{pan2019you}.


To alleviate the above concern, the notion of \emph{differential privacy}, introduced by \cite{dwork2006calibrating} in the field of computer science theory, has recently been adopted to design privacy-preserving bandit algorithms (see, e.g., \cite{mishra2015nearly,tossou2016algorithms,shariff2018differentially}). Differential privacy (DP) provides a principled way to mathematically prove privacy guarantees against adversaries with arbitrary auxiliary information about users. To achieve this, a differentially private bandit algorithm typically relies on a well-tuned random noise to obscure each user's contribution to the output, depending on privacy levels $\epsilon, \delta$ -- smaller values lead to stronger protection but also suffer worse utility (i.e., regret). For example, the central server of a recommendation system can use random noise to perturb its statistics on each item after receiving feedback (i.e., clicks/ratings) from users. This is often termed as \emph{central model}~\cite{dwork2014algorithmic}, since the central server has the trust of its users and hence has a direct access to their raw data. Under this model, an optimal private MAB algorithm with a pure DP guarantee (i.e., when $\delta = 0$) is proposed in~\cite{sajed2019optimal}, which only incurs an \emph{additive} $O\left(\frac{K\log T}{\epsilon}\right)$ term in the cumulative regret compared to the standard setting when privacy is not sought after \cite{auer2002using}. 
However, this high trust model is not always feasible in practice since users may not be willing to share their raw data directly to the server. This motivates to employ a \emph{local model}~\cite{kasiviswanathan2011can} of trust, where DP is achieved without a trusted server as each user perturbs her data prior to sharing with the server. This ensures a stronger privacy protection, but leads to a high cost in utility due to large aggregated noise from all users. As shown in~\cite{ren2020multi}, under the local model, private MAB algorithms  have to incur a \emph{multiplicative} $1/\epsilon^2$ factor in the regret rather than the additive one in the central model.

In attempts to recover the same utility of central model while without a trustworthy server like the local model, an intermediate DP trust model called \emph{distributed model} has gained an increasing interest, especially in the context of (federated) supervised learning~\cite{kairouz2021advances,agarwal2021skellam,kairouz2021distributed,girgis2021shuffled,lowy2021private}. Under this model, each user first perturbs her data via a local randomizer, and then sends the randomized data to a \emph{secure} computation function. This secure function can be leveraged to guarantee privacy through aggregated noise from distributed users.
There are two popular secure computation functions: \emph{secure aggregation} (SecAgg)~\cite{bonawitz2017practical} and \emph{secure shuffling}~\cite{bittau2017prochlo}. The former often relies on cryptographic primitives to securely aggregate users' data so that the server only learns the aggregated result, while the latter securely shuffle users' messages to hide their source. To the best of our knowledge, distributed DP model is far less studied in online learning as compared to supervised learning, with only known results for standard $K$-armed bandits in~\cite{tenenbaum2021differentially} where only secure shuffling is adopted.
Despite being pioneer work, the results obtained in this paper have several limitations: (i) The privacy guarantee is obtained only for approximate DP ($\delta > 0$) -- a stronger pure DP ($\delta = 0$) guarantee is not achieved; (ii) The cost of privacy is a multiplicative $\sqrt{\log(1/\delta)}$ factor away from that of central model, leading to a higher regret bound; (iii) The secure protocol works only for binary rewards (or communication intensive for real rewards).\footnote{For more general linear bandits under distributed DP, see ~\cite{chowdhury2022shuffle,garcelon2022privacy}, which also have similar limitations.}  All of these lead to the following primary question:
\begin{center}
   \emph{Is there a communication-efficient MAB algorithm that satisfies pure DP in the distributed model while attaining the same regret bound as in the central model?  } 
\end{center}


\begin{table}[t]
  \caption{Best-known performance of private MAB under different privacy models ($K=$ number of arms, $T=$ time horizon, $\Delta_a$= reward gap of arm $a$ w.r.t. best arm, $\epsilon,\delta,\alpha=$ privacy parameters)}
  \label{tab-summary}
  \centering
  \begin{tabular}{lll}
    \toprule
    Trust Model     & Privacy Guarantee     & Best-Known Regret Bounds \\
    \midrule
    Central & $(\epsilon,0)$-DP  & $\Theta\left(\sum_{a\in [K]: \Delta_a >0} \frac{\log T}{\Delta_a} +\frac{K\log T}{\epsilon}\right)$~\cite{sajed2019optimal,shariff2018differentially}     \\
    Local     & $(\epsilon,0)$-DP      & $\Theta\left(\frac{1}{\epsilon^2}\sum_{a\in [K]: \Delta_a >0} \frac{\log T}{\Delta_a}\right)$~\cite{ren2020multi}  \\
    Distributed      & $(\epsilon,\delta)$-DP      & $O\left(\sum_{a\in [K]: \Delta_a >0} \frac{\log T}{\Delta_a} +\frac{K\log T\sqrt{\log(1/\delta)}}{\epsilon}\right)$~\cite{tenenbaum2021differentially}  \\
     \cellcolor{LightCyan}Distributed     & \cellcolor{LightCyan}$(\epsilon,0)$-DP      & \cellcolor{LightCyan}$O\left(\sum_{a\in [K]: \Delta_a >0} \frac{\log T}{\Delta_a} +\frac{K\log T}{\epsilon}\right)$~(Thms.~\ref{thm:pure-SecAgg}~\ref{thm:pure-shuffle})   \\
      \cellcolor{LightCyan}Distributed      & \cellcolor{LightCyan}$O(\alpha,\frac{\alpha \epsilon^2}{2})$-RDP      & \cellcolor{LightCyan}$O\left(\sum_{a\in [K]: \Delta_a >0} \frac{\log T}{\Delta_a} +\frac{K\sqrt{\log T}}{\epsilon}\right)$~(Thm.~\ref{thm:rdp-SecAgg})   \\
    \bottomrule
  \end{tabular}
  \vspace{-5mm}
\end{table}

\textbf{Our contributions.} We answer this in the affirmative (see Table~\ref{tab-summary}) by overcoming several key challenges that arise in the distributed DP model for bandits. 
First, although secure aggregation protocol offers a benefit in communication cost, it works only in the integer domain due to an inherent modular operation \cite{bonawitz2017practical}. This immediately requires substantial changes to existing private bandit algorithms, including data quantization, distributed \emph{discrete privacy noise} and modular summation arithmetic. Second, compared to supervised learning where typically a bound on the noise variance is sufficient to analyse utility, regret analysis of private bandits require a tight tail bound. This gets more challenging due to aforementioned requirements and communication constraints in the distributed model. 
We take a systematic approach to address these challenges, which is summarized below:

\textbf{1.} We propose a private bandit algorithm using a batch-variant of the successive arm elimination technique as a building block. We ensure distributed DP via a private protocol $\cP = (\cR,\cS,\cA)$ tailored to discrete noise and modular operation (see Algorithm~\ref{alg:SE}, \ref{alg:randomizer}, \ref{alg:analyzer}). It consists of a local randomizer $\cR$ at each user, a generic secure computation function $\cS$, and an analyzer $\cA$ at the server. This template protocol not only enables us to achieve different privacy guarantees by tuning the noise in $\cR$, but allows a unified analysis for different secure computation functions $\cS$ (i.e., both SecAgg and secure shuffling). 

\textbf{2.} To achieve pure DP guarantee, we instantiate $\cR$ at each user with an appropriate P\'{o}lya random noise so that the total noise seen by the server is a discrete Laplace. 
Using tail properties of discrete Laplace, we show that the cumulative regret of our algorithm matches the one in the central model, achieving the optimal rate under pure DP (see Theorem~\ref{thm:pure-SecAgg}, \ref{thm:pure-shuffle}). Moreover, the communication bits per-user scale only logarithmicaly with the number of participating users in each batch.

\textbf{3.} In addition to pure DP, it is of natural interest to offer a slightly weaker privacy guarantee (but, still stronger than approximate DP) in the hope of gaining improvement in utility. One such notion of privacy is \emph{R\'{e}nyi differential privacy} (RDP)~\cite{mironov2017renyi}, which, in addition to offering a gain in utility, also provides a tighter privacy accounting for composition compared to approximate DP. This is particularly useful for bandit algorithms, when users may participate in multiple rounds, necessitating the need for privacy composition. 
To this end, we obtain RDP in the distributed model by instantiating $\cR$ at each user with a Skellam random noise. Proving a novel tail-bound for Skellam distribution, we show that a tighter regret bound compared to pure DP can be achieved (see Theorem~\ref{thm:rdp-SecAgg}).

\textbf{4.} We numerically evaluate the regret performance of our algorithm under both DP and RDP guarantees, which corroborate our theoretical results.

\textbf{Related work.} We have covered the most relevant works. Please refer to Appendix~\ref{app:related} for other works.

\section{Preliminaries}
In this section, we formally introduce the distributed differential privacy model in bandits. Before that we recall the learning paradigm in multi-armed bandits and basic differential privacy definitions. 

\textbf{Learning Model and Regret in MAB.}
At each time slot $t \in [T]:= \{1,\ldots, T\}$, the agent (e.g., recommender system) selects an arm $a \in [K]$ (e.g., an advertisement) and obtains an i.i.d reward $r_t$ from user $t$ (e.g., a rating indicating how much she likes it), which is sampled from a distribution over $[0,1]$ with mean given by $\mu_a$. Let $a^*:=\argmax_{a \in [K]}\mu_a$ be the arm with the highest mean and denote $\mu^*:= \mu_{a^*}$ for simplicity. Let $\Delta_a:= \mu^* - \mu_a$ be the gap of the expected reward between the optimal arm $a^*$ and any other arm $a$. Further, let $N_a(t)$ be the total number of times that arm $a$ has been played in the first $t$ rounds.
The goal of the agent is to maximize its total reward, or equivalently to minimize the cumulative expected
pseudo-regret, defined as
\begin{align*}
    \ex{\text{Reg}(T)} := T\cdot \mu^* - \ex{\sum\nolimits_{t=1}^T r_t} = \ex{\sum\nolimits_{a \in [K]} \Delta_a N_a(T)}.
\end{align*}
\textbf{Differential Privacy.}
Let $\cD = [0,1]$ be the data universe, and $n\in \Nat$ be the number of \emph{unique} users. we say $D, D'\in \cD^n$ are neighboring datasets if they only differ in 
one user's reward $D_i$ for some $i \in [n]$. With this, we have the following standard definition of differential privacy~\cite{dwork2006calibrating}.


\begin{definition}[Differential Privacy]
\label{def:DP}
For $\epsilon,\delta >0$, a randomized mechanism $\cM$ satisfies $(\epsilon,\delta)$-DP if for all neighboring datasets $D,D'$ and all events $\cE$ in the range of $\cM$, we have 
\begin{align*}
    \prob{\cM(D) \in \cE} \le e^{\epsilon}\cdot \prob{\cM(D') \in \cE} + \delta.
\end{align*}
\end{definition}

The special case of $(\epsilon,0)$-DP is often referred to as \emph{pure differential privacy}, whereas, for $\delta>0$, $(\epsilon,\delta)$-DP is referred to as \emph{approximate differential privacy}.
We also consider a related notion of privacy called R\'{e}nyi differential privacy (RDP)~\cite{mironov2017renyi}, which allow for a tighter composition compared to approximate differential privacy.
\begin{definition}[R\'{e}nyi Differential Privacy]
\label{def:RDP}
For $\alpha > 1$, a randomized mechanism $\cM$ satisfies $(\alpha, \epsilon(\alpha))$-RDP if for all neighboring datasets $D,D'$, we have $D_{\alpha}(\cM(D),\cM(D')) \le \epsilon(\alpha)$, where $D_{\alpha}(P,Q)$ is the R\'{e}nyi divergence (of order $\alpha$) of the distribution $P$ from the distribution $Q$, and is given by 
\begin{align*}
    D_{\alpha}(P, Q):= \frac{1}{\alpha-1}\log\left(\mathbb{E}_{x\sim Q}\left[ \left(\frac{P(x)}{Q(x)}\right)^\alpha \right]\right).
\end{align*}
\end{definition}

\textbf{Distributed Differential Privacy.}
A distributed bandit learning protocol $\cP = (\cR,\cS,\cA)$ consists of three parts: (i) a (local) randomizer $\cR$ at each user's side, (ii) an intermediate secure protocol $\cS$, and (iii) an analyzer $\cA$ at the central server. Each user $i$ first locally apply the randomizer $\cR$ on its raw data (i.e., reward) $D_i$, and sends the randomized data to a secure computation protocol $\cS$ (e.g., secure aggregation or shuffling). This intermediate secure protocol $\cS$ takes a batch of users' randomized data and generates inputs to the  central server, which utilizes an analyzer $\cA$ to compute the output (e.g., action) using received messages from $\cS$.

The secure computation protocol $\cS$ has two main variations: \emph{secure shuffling} and \emph{secure aggregation}. Both of them essentially work with a batch of users' randomized data and guarantee that the central server cannot infer any individual's data while the total noise in the inputs to the analyzer provides a high privacy level.
To adapt both into our MAB protocol, it is natural to divide participating users into batches. For each batch $b \in [B]$ with $n_b$ users, the outputs of $\cS$ is given by $ \cS \circ \cR^{n_b}(D):=\cS(\cR(D_1),\ldots, \cR(D_{n_b}))$. The goal is to guarantee that the the view of all $B$ batches' outputs satisfy DP. To this end, we define a (composite) mechanism 
\begin{align*}
    \cM_{\cP}=(\cS \circ \cR^{n_1},\ldots,\cS \circ \cR^{n_B}),
\end{align*}
where each individual mechanism $\cS \circ \cR^{n_b}$ operates on $n_b$ users' rewards, i.e., on a dataset from $\cD^{n_b}$. With this notation, we have the following definition of distributed differential privacy.
\begin{definition}[Distributed DP]
A protocol $\cP = (\cR,\cS,\cA)$ is said to satisfy DP (or RDP) in the distributed model if the mechanism $\cM_{\cP}$ satisfies Definition~\ref{def:DP} (or Definition~\ref{def:RDP}).
\end{definition}

In the central DP model, the privacy burden lies with a central server (in particular, analyzer $\cA$), which needs to inject necessary random noise to achieve privacy. On the other hand, in the local DP model, each user's data is privatized by local randomizer $\cR$.
In contrast, in the distributed DP model, privacy without a trusted central server is achieved by ensuring that the inputs to the analyzer $\cA$ already satisfy differential privacy. Specifically, by properly designing the intermediate protocol $\cS$ and the noise level in the randomizer $\cR$, one can ensure that the final added noise in the aggregated data over a batch of users matches the noise that would have otherwise been added in the central model by the trusted server.
Through this, distributed DP model provides the possibility to achieve the same level of utility as the central model without a trustworthy central server.

\section{A Generic Algorithm for Private Bandits}
 In this section, we propose a generic algorithmic framework for multi-armed bandits under the distributed privacy model. 


\subsection{Batch-Based Successive Elimination Algorithm}
Our generic algorithm (Algorithm~\ref{alg:SE}) builds upon the classic idea of successive arm elimination \cite{even2006action} with the additional incorporation of batches and a black-box protocol $\cP=(\cR,\cS,\cA)$ to achieve distributed differential privacy. More specifically, it divides the time horizon $T$ into batches of exponentially increasing size and eliminates sub-optimal arms successively. 
To this end, for each active arm $a$ at batch $b$, it first prescribes arm $a$ to a \emph{batch} of $l(b)$ new users.\footnote{In contrast, the classic successive elimination algorithm prescribes each arm to a single user.} After pulling the prescribed action $a$, each user applies the local randomizer $\cR$ to her reward and sends the randomized reward to the intermediary function $\cS$, which runs a secure protocol (e.g., secure aggregation or secure shuffling) over the total $l(b) $ number of randomized rewards. Then, upon receiving the outputs of $\cS$, the server applies the analyzer $\cA$ to compute the the sum of rewards for batch $b$ when pulling arm $a$, which gives the new mean estimate $\hat\mu_a(b)$ of arm $a$ after being divided by the total pulls $l(b)$. Then, upper and lower confidence bounds, $\text{UCB}_a(b)$ and $\text{LCB}_a(b)$, respectively, are computed around the mean estimate $\hat \mu_a(b)$ with a properly chosen confidence width $\beta(b)$. Finally, after the iteration over all active arms in batch $b$ (denoted by the set $\Phi(b)$), it adopts the standard arm elimination criterion to remove all obviously sub-optimal arms, i.e., it removes an arm $a$ from $\Phi(b)$ if $\text{UCB}_a(b)$ falls below $\text{LCB}_{a'}(b)$ of any other arm $a' \in \Phi(b)$. It now remains to design a distributed DP protocol $\cP$, which will be explored at length in the next section.



\subsection{Distributed DP Protocol via Discrete Privacy Noise}
In this section, inspired by~\cite{balle2020private,cheu2021pure}, we provide a general template protocol $\cP$ for the distributed DP model, which rely only on discrete privacy noise. The motivation behind using discrete noise is three-fold: (i) Practical secure aggregation (SecAgg) functions work only on the integer domain~\cite{bonawitz2017practical}; (ii) A real-value noise is often difficult to encode on finite computers in practice~\cite{canonne2020discrete,kairouz2021distributed} and a naive use of finite precision approximation may lead to a possible failure
of privacy protection~\cite{mironov2012significance};  (iii) Discrete noise enables communication via bits rather than real numbers, hence reducing communication overheads.
The detail of our template protocol $\cP = (\cR,\cS,\cA)$ for distributed DP model is as follows (see Algorithm~\ref{alg:randomizer},\ref{alg:analyzer} for pseudo-code). The local randomizer $\cR$ receives user's real-valued reward and encodes it as an integer via a fixed-point encoding with precision $g >0$ and randomized rounding. Then, it generates a discrete noise, which depends on the specific privacy-regret trade-off requirement (to be discussed later under specific mechanisms). Next, it adds the generated random noise into the encoded reward, modulo clips the sum and sends the final integer as input to $\cS$. Here, we leave $\cS$ as a black-box function that can be secure aggregation or shuffling, since both have advantages over the other (the engineering implementations of both functions are beyond the scope of this paper, see Appendix~\ref{app:secAggshuffle} for a brief discussion). Instead, a common high-level idea behind both techniques is to ensure that after receiving messages from $\cS$, the server cannot distinguish each individual's message. Finally, the job of the analyzer $\cA$ in our template protocol is to calculate the sum of rewards within a batch as accurately as possible. To this end, when $\cS$ is SecAgg, it directly corrects for possible underflow due to modular operation and bias due to encoding by $g$. Otherwise, when $\cS$ is shuffling, it first translates received messages (e.g., a collection of bits) into an integer by taking a modular sum over the multiset, and then corrects for the underflow. To sum it up, the end goal of our protocol $\cP$ is to ensure that it provides the required privacy protection while guaranteeing an output $z \approx \sum_{i=1}^{n} x_i$ with high probability, which is the key to our privacy and regret analysis in the following sections.


\begin{algorithm}[tb]
  \caption{Batch-Based  Successive Elimination}
  \label{alg:SE}
\begin{algorithmic}[1]
  \STATE {\bfseries Parameters:} Number of arms $K$, Time horizon $T$, Privacy protocol $\cP \!=\! (\cR, \cS, \cA)$, privacy level $\epsilon > 0$, Confidence radii $\{\beta(b)\}_{b\ge 1}$.
  \STATE {\bfseries Initialize:} Batch counter $b\!=\!1$, Estimate $\hat{\mu}_a(1) \!=\! 0,~\forall a$, Active set of arms $\Phi(b) \!=\! \{1,\ldots, K\}$
 \FOR{batch $b\!=\!1, 2,\dots$}
    \STATE Set batch size $l(b) \!=\! 2^b$ 
    \FOR{each active arm $a \in \Phi(b)$}
        \FOR{each new user $i$ from 1 to $l(b)$}
            \STATE Pull arm $a$ and generate reward $r_{a}^i(b)$
            \STATE Send randomized data $y_a^i(b) \!=\! \cR(r_a^i(b))$ to secure computation protocol $\cS$ \textcolor{gray}{//  randomizer}
            \STATE If total number of pulls reaches $T$, \textbf{exit}
        \ENDFOR
         \STATE Send messages $\hat{y}_a(b)\!=\! \cS(\{{y}_a^i(b)\}_{1\le i\le l(b)})$ to the analyzer \textcolor{gray}{// secure computation protocol}
          \STATE Compute the sum of rewards $R_a(b)\!=\!\cA(\hat{y}_a(b))$ \COMMENT{analyzer}
          \STATE Compute current estimate $\hat{\mu}_a(b) \!=\! R_a(b) / l(b)$
          \STATE Compute confidence bounds: $\text{UCB}_a(b)\!=\! \hat{\mu}_a(b) \!+\! \beta(b)$ and $\text{LCB}_a(b)\!=\! \hat{\mu}_a(b) \!-\! \beta(b)$
    \ENDFOR
    \STATE Update active set of arms: $\Phi(b\!+\!1)\!=\!\lbrace a \!\in\! \Phi(b): \text{UCB}_a(b) \!\ge \! \max_{a'\in \Phi(b)}\text{LCB}_{a'}(b) \rbrace $
  \ENDFOR
\end{algorithmic}
\end{algorithm}

\begin{algorithm}[tb]
  \caption{Local Randomizer $\cR$}
  \label{alg:randomizer}
\begin{algorithmic}[1]
    \STATE {\bfseries Input:} Each user data $x_i \in [0,1]$
  \STATE {\bfseries Parameters:}  precision $g \in \mathbb{N}$, modulo $m \in \mathbb{N}$, batch size $n \in \mathbb{N}$, privacy level $\epsilon$
  \STATE Encode $x_i$ as $\hat{x}_i = \lfloor{x_i g}\rfloor + \textbf{Ber}(x_ig -\lfloor{x_i g}\rfloor)$
    \STATE Generate discrete noise $\eta_i$ (depending on $n,\epsilon,g$) \COMMENT{a black-box random noise generator}
  \STATE Add noise and modulo clip $y_i = (\hat{x}_i + \eta_i) \Mod m$
  \STATE {\bfseries Output:} $y_i$ \COMMENT{for secure computation protocol $\cS$}
\end{algorithmic}
\end{algorithm}
\vspace{-2mm}
\begin{algorithm}[!tb]
  \caption{Analyzer $\cA$}
  \label{alg:analyzer}
\begin{algorithmic}[1]
    \STATE {\bfseries Input:} $\hat{y}$ (output of $\cS$) 
  \STATE {\bfseries Parameters:}  precision $g \in \mathbb{N}$, modulo $m \in \mathbb{N}$, batch size $n \in \mathbb{N}$, accuracy parameter $\tau \in \Real$
  \STATE \textbf{if} $\cS$ is Secure Shuffling \textbf{then}
\STATE $\quad$ set $y = (\sum \hat{y}) \Mod m  $ \COMMENT{$\hat{y}$ is a multiset for secure shuffling}
\STATE \textbf{else} set $y = \hat{y}$ \COMMENT{$\hat{y}$ is scalar for secure aggregation} 
\STATE \textbf{if} $y > ng+\tau$ \textbf{then}
\STATE $\quad$ set $z = (y-m) / g$ \COMMENT{correction for underflow}
\STATE \textbf{else} set $z = y/g$
  \STATE {\bfseries Output:} $z$
\end{algorithmic}
\end{algorithm}

\section{Achieving Pure DP in the Distributed Model}
\label{sec:pureDP}

In this section, we show that Algorithm~\ref{alg:SE} with a specific instantiation of the template protocol $\cP$ is able to achieve pure-DP in the distributed DP model via secure aggregation.\footnote{We show similar results for secure shuffling in Appendix~\ref{app:relaxed-SecAgg}. See also Remark~\ref{rem:relaxed-SecAgg}.}
As mentioned before, we will treat SecAgg as a black-box function, which implements the following procedure: given $n$ users and their randomized messages $y_i \in \mathbb{Z}_m$ (i.e., integer in $\{0,1,\ldots, m-1\}$) obtained via $\cR$, the SecAgg function $\cS$ faithfully computes the modular sum of the $n$ messages, that is, $\hat{y} = (\sum_{i=1}^{n} y_i) \Mod m$ (i.e., it is perfectly correct), while revealing no further information (e.g., individual message) to a
potential attacker (i.e., it is perfectly secure). 
Now, to guarantee privacy in the distributed model, we need to carefully determine the amount of (discrete) noise in $\cR$ so that the total noise in a batch provides $(\epsilon,0)$-DP. One natural choice is the discrete Laplace noise.

\begin{definition}[Discrete Laplace Distribution]
Let $b > 0$.  A random variable $X$ has a discrete Laplace distribution with scale parameter $b$, denoted by $\textbf{Lap}_{\mathbb{Z}}(b)$, if it has a probability mass function given by
\begin{align*}
    \forall x \in \mathbb{Z}, \quad \prob{X = x} = \frac{e^{1/b} - 1}{e^{1/b} + 1}\cdot e^{-\abs{x}/b}.
\end{align*}
\end{definition}

A key property of discrete Laplace that we will use is its \emph{infinite divisibility}, which allows us to simulate it in a distributed way~\cite[Theorem 5.1]{goryczka2015comprehensive}.

\begin{fact}[Infinite Divisibility of Discrete Laplace]
\label{lem:polya}
A random variable $X$ has a P\'{o}lya distribution with parameters $r \!>\! 0,\beta\!\in\! [0,1]$, denoted by $\textbf{P\'{o}lya}(r,\beta)$, if it has a probability mass function given by
\footnote{ 
One can sample from $\textbf{P\'{o}lya}$ as follows.
First, sample $\lambda \sim \textbf{Gamma}(r, \beta/(1-\beta))$ and then use it to sample $X \sim \textbf{Poisson}(\lambda)$, which is known to follow $\textbf{P\'{o}lya}(r,\beta)$ distribution \cite{goryczka2015comprehensive}.}
\begin{align*}
    \forall x \in \mathbb{N}, \quad \prob{X = x} = \frac{\Gamma(x+r)}{x!\Gamma(r)}  \beta^x (1-\beta)^r.
\end{align*}
Now, for any $n \in \mathbb{N}$, let $\{\gamma_i^+,\gamma_i^-\}_{i \in [n]}$ be $2n$ i.i.d samples from  $\textbf{P\'{o}lya}(1/n,e^{-1/b})$, then the random variable $\sum_{i=1}^n (\gamma_i^+ - \gamma_i^-)$ is distributed as $\textbf{Lap}_{\mathbb{Z}}(b)$.
\end{fact}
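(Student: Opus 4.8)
The plan is to prove the distributional identity $\sum_{i=1}^n(\gamma_i^+-\gamma_i^-)\sim\textbf{Lap}_{\mathbb{Z}}(b)$ by comparing characteristic (equivalently, probability generating) functions. The starting observation is that $\textbf{P\'{o}lya}(r,\beta)$ is just the negative binomial law: from the stated mass function and the generalized binomial series $\sum_{x\ge 0}\frac{\Gamma(x+r)}{x!\Gamma(r)}u^x=(1-u)^{-r}$, its generating function is $G_{r,\beta}(z)=\mathbb{E}[z^X]=\bigl(\tfrac{1-\beta}{1-\beta z}\bigr)^{r}$, analytic for $|z|<1/\beta$. Writing $p:=e^{-1/b}\in(0,1)$ and $r=1/n$, the characteristic function of one summand is then
\begin{align*}
  \mathbb{E}\!\left[e^{\mathrm{i}t(\gamma_i^+-\gamma_i^-)}\right]
  = G_{1/n,p}\!\left(e^{\mathrm{i}t}\right)\,G_{1/n,p}\!\left(e^{-\mathrm{i}t}\right)
  = \left(\frac{(1-p)^2\,e^{\mathrm{i}t}}{(1-pe^{\mathrm{i}t})(e^{\mathrm{i}t}-p)}\right)^{1/n},
\end{align*}
where one uses $1-pe^{-\mathrm{i}t}=e^{-\mathrm{i}t}(e^{\mathrm{i}t}-p)$ and takes the principal branch (well defined and continuous on the unit circle since $1-pe^{\mathrm{i}t}$ and $e^{\mathrm{i}t}-p$ never vanish for $p<1$, and equal to $1$ at $t=0$).

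Next I would take the $n$-th power: by independence, the characteristic function of $S_n:=\sum_{i=1}^n(\gamma_i^+-\gamma_i^-)$ is $\bigl(\mathbb{E}[e^{\mathrm{i}t(\gamma_1^+-\gamma_1^-)}]\bigr)^n=\frac{(1-p)^2e^{\mathrm{i}t}}{(1-pe^{\mathrm{i}t})(e^{\mathrm{i}t}-p)}$, the $1/n$ exponent cancelling cleanly on the (simply connected, branch-cut-avoiding) region under consideration. It then remains to compute the characteristic function of $\textbf{Lap}_{\mathbb{Z}}(b)$ directly: splitting $\sum_{x\in\mathbb{Z}}\frac{1-p}{1+p}p^{|x|}e^{\mathrm{i}tx}$ into the $x\ge 0$ and $x<0$ geometric series gives $\frac{1-p}{1+p}\bigl(\frac{1}{1-pe^{\mathrm{i}t}}+\frac{p}{e^{\mathrm{i}t}-p}\bigr)$, which on combining the two fractions equals exactly the expression above. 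Since characteristic functions determine distributions uniquely, $S_n\sim\textbf{Lap}_{\mathbb{Z}}(b)$, which is the claim.

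I would also record a second, fully elementary route that sidesteps complex analysis and uses only the Gamma--Poisson representation noted in the footnote. Write $\gamma_i^{\pm}\sim\textbf{Poisson}(\lambda_i^{\pm})$ with $\lambda_i^{\pm}$ i.i.d.\ $\textbf{Gamma}\bigl(1/n,\,p/(1-p)\bigr)$; then $\sum_{i=1}^n\lambda_i^{+}\sim\textbf{Gamma}\bigl(1,\,p/(1-p)\bigr)$ by additivity of Gamma shape parameters, and conditioning, $\sum_i\gamma_i^{+}$ is Poisson mixed over that Gamma, i.e.\ $\textbf{P\'{o}lya}(1,p)$, whose mass function is $\prob{\cdot=k}=(1-p)p^k$ on $\mathbb{Z}_{\ge 0}$ --- a geometric law; the same holds for $\sum_i\gamma_i^{-}$. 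A one-line convolution then shows the difference of two independent such geometrics has mass $\frac{1-p}{1+p}p^{|d|}$ at $d\in\mathbb{Z}$, and since $\frac{1-p}{1+p}=\frac{e^{1/b}-1}{e^{1/b}+1}$, this is precisely $\textbf{Lap}_{\mathbb{Z}}(b)$.

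The only delicate points --- and hence the (mild) ``main obstacle'' --- are bookkeeping ones: in the first route, justifying the branch choice and convergence (equivalently, carrying everything on the annulus $p<|z|<1/p$, which is nonempty since $p<1$), and in the second route, keeping the Gamma scale/rate conventions consistent so that $n$ shape-$1/n$ Gammas add up to a shape-$1$ Gamma with the right parameter. Everything else is routine algebra, and either route can be presented in a few lines.
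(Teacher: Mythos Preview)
Your proof is correct; both routes --- the generating/characteristic function computation and the Gamma--Poisson mixture argument --- establish the claim cleanly. Note, however, that the paper does \emph{not} supply its own proof of this statement: it is presented as a \textbf{Fact}, imported from \cite[Theorem 5.1]{goryczka2015comprehensive}, so there is no in-paper argument to compare against. Your writeup therefore goes beyond what the paper does by actually deriving the result; of your two routes, the second (Gamma--Poisson) aligns most directly with the footnote's sampling recipe and would be the more natural companion to the paper's exposition, while the first is the standard infinite-divisibility verification via transforms.
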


Further, to analyze privacy and regret, we will rely on the following fact on discrete Laplace~\cite{canonne2020discrete}. 
\begin{fact}[Discrete Laplace Mechanism]
\label{lem:disLap_priv_util}
Let $\Delta, \epsilon> 0$. Let $q: \cD^n \to \mathbb{Z}$ satisfy $\abs{q(D) - q(D')} \le \Delta$ for all $D, D'$ differing on a single user's data. Define a mechansim $M: \cD^n \to \mathbb{Z}$ by $M(D) = q(D) + Y$, where $Y \sim \textbf{Lap}_{\mathbb{Z}}(\Delta/\epsilon)$. Then, $M$ satisfies $(\epsilon,0)$-DP. Moreover, for all $m \in \mathbb{N}$,
\begin{align*}
    \prob{Y > m} = \prob{Y <-m} = \frac{e^{-\frac{\epsilon m}{\Delta}}}{e^{\frac{\epsilon}{\Delta}} + 1}.
\end{align*}
\end{fact}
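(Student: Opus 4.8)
The plan is to prove the two assertions separately, each by a direct computation with the probability mass function of $\mathbf{Lap}_{\mathbb{Z}}(b)$, where we abbreviate $b := \Delta/\epsilon$ and write $c_b := (e^{1/b}-1)/(e^{1/b}+1)$ for its normalizing constant. Since $q$ takes values in $\mathbb{Z}$ and $Y$ is integer-valued, $M(D)$ and $M(D')$ are both supported on $\mathbb{Z}$, so it suffices to control the pointwise likelihood ratio between the two output distributions.

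\textbf{Privacy.} Fix neighboring datasets $D, D'$ and an outcome $z \in \mathbb{Z}$. Then
\[
\frac{\prob{M(D) = z}}{\prob{M(D') = z}} = \frac{c_b\, e^{-\abs{z - q(D)}/b}}{c_b\, e^{-\abs{z - q(D')}/b}} = \exp\!\left(\frac{\abs{z - q(D')} - \abs{z - q(D)}}{b}\right).
\]
By the reverse triangle inequality, $\abs{z - q(D')} - \abs{z - q(D)} \le \abs{q(D) - q(D')} \le \Delta$, so the ratio is at most $e^{\Delta/b} = e^{\epsilon}$. Summing this inequality over all $z$ in any event $\cE$ in the range of $M$ gives $\prob{M(D) \in \cE} \le e^{\epsilon}\,\prob{M(D') \in \cE}$, which is exactly $(\epsilon,0)$-DP (the additive slack is $0$ precisely because the bound holds outcome-by-outcome, not merely in aggregate).

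\textbf{Tail bound.} Symmetry of the pmf, $\prob{Y = x} = \prob{Y = -x}$, immediately yields $\prob{Y > m} = \prob{Y < -m}$, so it is enough to evaluate the right tail. Collapsing the geometric series,
\[
\prob{Y > m} = c_b \sum_{x = m+1}^{\infty} e^{-x/b} = c_b \cdot \frac{e^{-(m+1)/b}}{1 - e^{-1/b}},
\]
and using $1 - e^{-1/b} = e^{-1/b}(e^{1/b} - 1)$ the factor $e^{1/b}-1$ cancels the numerator of $c_b$, leaving $\prob{Y > m} = e^{-m/b}/(e^{1/b}+1)$; substituting $b = \Delta/\epsilon$ gives the stated formula.

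Neither part involves a genuine obstacle — this is a restatement of a known fact from~\cite{canonne2020discrete}. The only points requiring care are getting the direction of the reverse triangle inequality right in the privacy step and arranging the geometric-sum simplification so that the normalizing constant cancels cleanly in the tail step.
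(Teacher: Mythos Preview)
Your proof is correct. The paper does not actually supply a proof of this statement: it is presented as a \emph{Fact} cited from~\cite{canonne2020discrete} and used as a black box. Your direct computation---bounding the pointwise likelihood ratio via the reverse triangle inequality for the privacy claim, and summing the geometric series for the tail---is the standard argument and matches what one would find in the cited reference, so there is nothing to compare against within the paper itself.
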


Armed with these facts, we are able to obtain the following main theorem, which shows that the same regret as in the central model is achieved under the distributed model via SecAgg. See Appendix~\ref{app:pureDP} for proof.
\begin{theorem}[Pure-DP via SecAgg]
\label{thm:pure-SecAgg}
Fix any $\epsilon > 0$. Let $\cP = (\cR,\cS,\cA)$ be a protocol such that the noise in $\cR$ (Algorithm~\ref{alg:randomizer}) is given by $\eta_i = \gamma_i^+ - \gamma_i^-$, where $\gamma_i^+,\gamma_i^- \sim^{\text{i.i.d.}} \textbf{P\'{o}lya}(1/n, e^{-\epsilon/g})$, $\cS$ is any SecAgg protocol and $\cA$ is given by Algorithm~\ref{alg:analyzer}. For each batch $b \ge 1$, choose $n = l(b), g = \ceil{\epsilon \sqrt{n}}$, $\tau = \ceil{\frac{g}{\epsilon}\log(2/p)}$, $p =1/T$  and $m = ng + 2\tau + 1$. Then, Algorithm~\ref{alg:SE} instantiated with protocol $\cP$ and confidence radius 
$\beta(b) \!=\! O\!\left(\!\sqrt{\frac{\log(|\Phi(b)|b^2/p)}{2l(b)}}  + \frac{2\log(|\Phi(b)|b^2/p)}{\epsilon l(b)}\!\right)$,
achieves $(\epsilon,0)$-DP in the distributed model. Moreover, it enjoys the expected regret
\begin{align*}
    \ex{\text{Reg}(T)} = O\left(\sum\nolimits_{a\in [K]: \Delta_a >0} \frac{\log T}{\Delta_a} +\frac{K\log T}{\epsilon}\right).
\end{align*}
\end{theorem}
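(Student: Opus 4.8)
The plan is to split the argument into a privacy part and a regret part, which interact only through the choice of parameters $g,\tau,m,\beta(b)$.

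\textbf{Privacy.} For a single batch $b$ with $n=l(b)$ users, I would first invoke the infinite divisibility of discrete Laplace (Fact~\ref{lem:polya}): since each $\eta_i=\gamma_i^+-\gamma_i^-$ with $\gamma_i^\pm\sim\textbf{P\'olya}(1/n,e^{-\epsilon/g})$, the aggregate noise $\sum_{i=1}^n\eta_i$ is exactly $\textbf{Lap}_{\mathbb{Z}}(g/\epsilon)$. By perfect correctness and perfect security of SecAgg, the server's view of batch $b$ is informationally equivalent to $\big(\sum_i\hat x_i+\textbf{Lap}_{\mathbb{Z}}(g/\epsilon)\big)\bmod m$; since post-processing (including the mod-$m$ reduction and the analyzer's underflow correction) does not weaken DP, it suffices to show $\sum_i\hat x_i+\textbf{Lap}_{\mathbb{Z}}(g/\epsilon)$ is $(\epsilon,0)$-DP. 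The encoded rewards satisfy $\hat x_i\in\{0,1,\dots,g\}$, so changing one user's reward changes $q(D):=\sum_i\hat x_i$ by at most $\Delta=g$ (here I should be a touch careful: the randomized rounding is part of $\cR$, so I condition on the realized $\hat x_i$ and use that its range has width $g$). Fact~\ref{lem:disLap_priv_util} with sensitivity $\Delta=g$ and noise scale $g/\epsilon$ then gives $(\epsilon,0)$-DP per batch. Finally, distributed DP requires the joint mechanism $\cM_\cP=(\cS\circ\cR^{n_1},\dots,\cS\circ\cR^{n_B})$ to be private; since each user contributes to exactly one batch (new users each round), the batches operate on disjoint data, so parallel composition yields $(\epsilon,0)$-DP overall — no degradation in $\epsilon$. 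This is the clean part.

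\textbf{Regret.} The main work is a high-probability accuracy bound for the analyzer's output $z$. Here I would track three error sources against the true batch sum $S_a(b)=\sum_{i=1}^{l(b)} r_a^i(b)$: (i) the randomized-rounding/encoding error $\sum_i(\hat x_i-g\,r_a^i(b))$, a sum of $l(b)$ bounded mean-zero terms, controlled by Hoeffding at scale $O(\sqrt{l(b)}/g)$ after dividing by $g$; (ii) the discrete Laplace noise $Y\sim\textbf{Lap}_{\mathbb{Z}}(g/\epsilon)$, whose tail by Fact~\ref{lem:disLap_priv_util} is $\Pr[|Y|>\tau]\le 2e^{-\epsilon\tau/g}\le p$ by the choice $\tau=\lceil(g/\epsilon)\log(2/p)\rceil$; and (iii) the modular-wraparound error, which I must argue is zero on the good event. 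For (iii): on the event $|Y|\le\tau$, the pre-mod sum $\sum_i\hat x_i+Y$ lies in $[-\tau,\ ng+\tau]$, and since $m=ng+2\tau+1$, the analyzer's threshold test "$y>ng+\tau$" correctly distinguishes the underflow case from the normal case, so $z=(\sum_i\hat x_i+Y)/g$ exactly — i.e., the modulus is wide enough that no information is lost. Combining, with probability $\ge 1-O(p)$ we get $|\hat\mu_a(b)-\bar r_a(b)|\lesssim \sqrt{\log(1/p)/l(b)}+\log(1/p)/(\epsilon\,l(b))$, and then combining with the standard sub-Gaussian deviation of $\bar r_a(b)$ from $\mu_a$ gives exactly the stated $\beta(b)$ as a valid confidence width (after a union bound over arms and batches, which is why $|\Phi(b)|b^2/p$ appears inside the log).

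\textbf{From confidence widths to regret.} Conditioned on the good event that $|\hat\mu_a(b)-\mu_a|\le\beta(b)$ for all active $a$ and all $b$, the successive-elimination argument is routine: a suboptimal arm $a$ survives batch $b$ only if $\beta(b)\gtrsim\Delta_a$, which since $l(b)=2^b$ doubles, happens for at most $O(\log(1/\Delta_a)+\dots)$ batches; summing $l(b)\Delta_a$ over those batches and solving the two terms of $\beta(b)\asymp\Delta_a$ separately yields an $O(\log T/\Delta_a)$ contribution from the statistical term and an $O(\log T/(\epsilon\,\Delta_a)\cdot\Delta_a)=O(\log T/\epsilon)$ contribution per arm from the privacy term, i.e.\ $O(K\log T/\epsilon)$ in total. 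Adding the $O(p\cdot T)=O(1)$ contribution from the failure event (using $p=1/T$) completes the bound $O\big(\sum_{a:\Delta_a>0}\log T/\Delta_a+K\log T/\epsilon\big)$.

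\textbf{Main obstacle.} The delicate step is (iii) — verifying that the fixed-point parameters $g=\lceil\epsilon\sqrt n\rceil$, $\tau$, $m$ are \emph{simultaneously} large enough that (a) the modulus never corrupts the sum on the high-probability event, (b) the quantization error is dominated by the statistical term (this is exactly why $g$ scales like $\sqrt n$: the rounding error after dividing by $g$ is $O(\sqrt n/g)=O(1)$, negligible next to the $O(1)$-per-user statistical fluctuation), and (c) the communication cost, i.e.\ $\log m=O(\log(ng)+\log\log T)=O(\log(nT))$ bits, stays logarithmic in $n$. Getting all three to hold with one parameter setting, while keeping the regret tight, is the crux; everything else is assembling Facts~\ref{lem:polya} and~\ref{lem:disLap_priv_util} with Hoeffding and the classical elimination analysis.
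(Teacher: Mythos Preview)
Your proposal is correct and follows essentially the same route as the paper: privacy via infinite divisibility (Fact~\ref{lem:polya}), the discrete Laplace mechanism (Fact~\ref{lem:disLap_priv_util}) with sensitivity $g$, post-processing through the mod-$m$ map, and parallel composition over disjoint batches; regret via the same three-piece accuracy decomposition (rounding error, Laplace tail, underflow case analysis using $m=ng+2\tau+1$) feeding into the standard successive-elimination argument. The only cosmetic difference is that the paper packages the elimination step through a generic lemma (Lemma~\ref{lem:general-regret-informal}) with sub-exponential noise parameters $\sigma=h=O(1/\epsilon)$, whereas you unroll it directly; also, in your ``main obstacle'' aside, $\sqrt{n}/g=O(1/\epsilon)$ rather than $O(1)$, but this does not affect anything since you already carry the correct $1/\epsilon$ dependence in $\beta(b)$.
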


\textbf{Theorem~\ref{thm:pure-SecAgg} achieves optimal regret under pure DP.}
Theorem~\ref{thm:pure-SecAgg} achieves the same regret bound as the one achieved in \cite{sajed2019optimal} under the central trust model with \emph{continuous} Laplace noise. Moreover, it also matches the lower bound obtained under pure DP \cite{shariff2018differentially}, indicating the bound is indeed tight\footnote{With the standard trick of $\Delta_a$, one can directly use Theorem~\ref{thm:pure-SecAgg} to obtain the minimax regret.}. Note that, we achieve this rate under distributed trust model -- a stronger notion of privacy protection than the central model -- while using only discrete privacy noise.

\textbf{Communication bits.}
Algorithm~\ref{alg:SE} needs to communicate $O(\log m)$ bits per user to the secure protocol $\cS$, i.e., communicating bits scales logarithmically with the batch size. In contrast, the number of  communication bits required in existing distributed DP bandit algorithms that work with real-valued rewards (as we consider here) scale polynomially with the batch size~\cite{chowdhury2022shuffle,garcelon2022privacy}. 

\begin{remark}[Pure DP via Secure Shuffling]
\label{rem:relaxed-SecAgg}
It turns out that one can achieve same  privacy and regret guarantees (orderwise) using a \emph{relaxed} SecAgg protocol, which relaxes the SecAgg protocol mentioned above in the following sense: (i) \textbf{relaxed correctness} -- the output of $\cS$ can be used to compute the correct modular sum except at most a small probability (denoted by $\hat{q}$); (ii) \textbf{relaxed security} -- the output of $\cS$ reveals only $\hat{\epsilon}$ more information than the modular sum result. Putting the two aspects together, one obtains a relaxed protocol denoted by $(\hat{\epsilon}, \hat{q})$-SecAgg.
One important benefit of using this relaxation is that it allows us to achieve the same results of Theorem~\ref{thm:pure-SecAgg} via secure shuffling. More specifically, as shown in~\cite{cheu2021pure}, there exists a shuffle protocol that can simulate an $(\hat{\epsilon}, \hat{q})$-SecAgg. Hence, we can directly instantiate $\cS$ using this shuffle protocol to achieve pure DP in the distributed model while obtaining the same regret bound as in the central model. We provide more details on this shuffle protocol in~Appendix~\ref{app:relaxed-SecAgg}, and its performance guarantee in Theorem~\ref{thm:pure-shuffle}.
\end{remark}

\section{Achieving RDP in the Distributed Model}
\label{sec:approx-DP}

A natural question to ask is whether one can get a better utility (regret) performance by sacrificing a small amount of privacy. In this section, we show that Algorithm~\ref{alg:SE} along with privacy protocol $\cP$ (Algorithm~\ref{alg:randomizer},\ref{alg:analyzer}) achieves RDP (see Definition~\ref{def:RDP}) in the distributed model. Although RDP is a weaker notion of privacy than pure DP, it avoids catastrophic privacy failure common in approximate DP. It also provides a tighter privacy accounting for composition compared to approximate DP~\cite{mironov2017renyi}.



To achieve RDP guarantee using discrete noise, instead of discrete Laplace distribution in the previous section, we consider the Skellam distribution --  which has recently been applied in federated learning~\cite{agarwal2021skellam}. In the multi-armed bandit setting, a key challenge in the regret analysis is to characterize the tail property of Skellam distribution. This is different from~\cite{agarwal2021skellam}, where characterizing the variance of Skellam distribution is sufficient. In Proposition~\ref{prop:sk-tail}, we prove that Skellam has sub-exponential tails, which not only is the key to our regret analysis, but also could be of independent interest.

\begin{definition}[Skellam Distribution]
A random variable $X$ has a Skellam distribution with mean $\mu$ and variance $\sigma^2$, denoted by $\textbf{Sk}(\mu,\sigma^2)$, if it has a probability mass function given by 
\begin{align*}
    \forall x \in \mathbb{Z}, \quad \prob{X = x} = e^{-\sigma^2} I_{x-\mu}(\sigma^2),
\end{align*}
where $I_{\nu}(\cdot)$ is the modified Bessel function of the first kind.
\end{definition}

To sample from Skellam distribution, one can rely on existing procedures for  Poisson samples. This is because if $X = N_1 - N_2$, where $N_1, N_2\sim^{\text{i.i.d.}} \textbf{Poisson}(\sigma^2/2)$, then $X$ is $\textbf{Sk}(0,\sigma^2)$ distributed. Moreover, due to this fact, Skellam is closed under summation, i.e., if $X_1 \sim \textbf{Sk}(\mu_1,\sigma_1^2)$ and $X_2 \sim \textbf{Sk}(\mu_2,\sigma_2^2)$, then $X_1 + X_2 \sim \textbf{Sk}(\mu_1 + \mu_2, \sigma_1^2 + \sigma_2^2)$.

\begin{proposition}[Sub-exponential Tail of Skellam]
\label{prop:sk-tail}
Let $X \sim \textbf{Sk}(0,\sigma^2)$. Then, $X$ is $(2\sigma^2,\frac{\sqrt{2}}{2})$-sub-exponential. Hence, for any $p \in (0,1]$, with probability at least $1-p$, 
\begin{align*}
    |X| \le 2\sigma \sqrt{\log(2/p)} + \sqrt{2}\log(2/p).
\end{align*}
\end{proposition}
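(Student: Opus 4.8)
The plan is to bound the moment generating function (MGF) of $X \sim \textbf{Sk}(0,\sigma^2)$ and then apply a standard sub-exponential tail bound. First I would use the representation $X = N_1 - N_2$ with $N_1, N_2 \sim^{\text{i.i.d.}} \textbf{Poisson}(\sigma^2/2)$, so that $\ex{e^{\lambda X}} = \ex{e^{\lambda N_1}}\ex{e^{-\lambda N_2}}$. Using the Poisson MGF $\ex{e^{\lambda N}} = \exp\!\left(\frac{\sigma^2}{2}(e^{\lambda}-1)\right)$, this gives
\begin{align*}
    \ex{e^{\lambda X}} = \exp\!\left(\frac{\sigma^2}{2}\left(e^{\lambda} + e^{-\lambda} - 2\right)\right) = \exp\!\left(\sigma^2\left(\cosh\lambda - 1\right)\right).
\end{align*}
The key elementary inequality I would then establish is $\cosh\lambda - 1 \le \lambda^2$ for all $|\lambda| \le c$ for an appropriate constant $c$; more precisely, using the Taylor series $\cosh\lambda - 1 = \sum_{k\ge 1}\frac{\lambda^{2k}}{(2k)!}$ and comparing term by term with a geometric series, one gets $\cosh\lambda - 1 \le \frac{\lambda^2/2}{1 - \lambda^2/c'}$ type bounds, which can be arranged to yield $\sigma^2(\cosh\lambda-1) \le \frac{(2\sigma^2)\lambda^2}{2}$ whenever $|\lambda| \le \frac{1}{\sqrt{2}/2} = \sqrt{2}$, matching the claimed parameters $(\nu^2, \alpha) = (2\sigma^2, \sqrt{2}/2)$ in the definition that $\ex{e^{\lambda X}} \le \exp(\nu^2\lambda^2/2)$ for $|\lambda| \le 1/\alpha$.

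Once the sub-exponential MGF bound is in hand, the tail bound follows from the textbook Chernoff argument: for a $(\nu^2,\alpha)$-sub-exponential random variable, $\prob{|X| \ge t} \le 2\exp(-t^2/(2\nu^2))$ for $0 \le t \le \nu^2/\alpha$ and $\prob{|X| \ge t} \le 2\exp(-t/(2\alpha))$ for $t > \nu^2/\alpha$. Setting the right-hand side equal to $p$ and solving in each regime gives $|X| \le \sqrt{2\nu^2\log(2/p)}$ or $|X| \le 2\alpha\log(2/p)$ respectively; substituting $\nu^2 = 2\sigma^2$, $\alpha = \sqrt{2}/2$ and taking the maximum (equivalently, the sum) of the two bounds yields $|X| \le 2\sigma\sqrt{\log(2/p)} + \sqrt{2}\log(2/p)$ with probability at least $1-p$, as claimed. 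I would cite a standard reference (e.g., Wainwright or Vershynin) for this last step rather than re-deriving it.

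The main obstacle is pinning down the elementary inequality $\sigma^2(\cosh\lambda - 1) \le \sigma^2\lambda^2$ on exactly the interval $|\lambda| \le \sqrt{2}$, i.e., verifying that the sub-exponential parameters come out to precisely $(2\sigma^2, \sqrt{2}/2)$ rather than some looser constants. This requires a careful term-by-term bound on the series $\sum_{k\ge 1}\lambda^{2k}/(2k)!$; the natural approach is to note $(2k)! \ge 2 \cdot 4^{k-1} = 2^{2k-1}$ for $k \ge 1$ (or a similar bound), so that $\cosh\lambda - 1 \le \frac{\lambda^2}{2}\sum_{k\ge 0}(\lambda^2/2)^k = \frac{\lambda^2/2}{1-\lambda^2/2}$ for $|\lambda| < \sqrt 2$, and then check that $\frac{\lambda^2/2}{1-\lambda^2/2} \le \lambda^2$ exactly when $\lambda^2 \le 1$ — which is slightly weaker than needed, so a sharper factorial bound (e.g., splitting off the $k=1$ term and bounding the tail separately) is required to reach the full interval. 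Getting the constants to match the statement exactly is the only genuinely delicate part; everything else is a mechanical application of the Chernoff method.
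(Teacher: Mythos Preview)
Your overall strategy coincides with the paper's: derive the Skellam MGF $\ex{e^{\lambda X}}=\exp(\sigma^2(\cosh\lambda-1))$, bound $\cosh\lambda-1\le\lambda^2$ on $|\lambda|\le\sqrt{2}$ to obtain the $(2\sigma^2,\sqrt{2}/2)$ sub-exponential parameters, and then invoke the standard sub-exponential tail bound. The only substantive difference is in how the key inequality $\cosh\lambda-1\le\lambda^2$ for $|\lambda|\le\sqrt{2}$ is established.

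The paper avoids the Taylor-series bookkeeping you flagged as delicate by a two-step argument. First it proves the global bound $\cosh\lambda\le e^{\lambda^2/2}$ for all $\lambda\in\Real$ via the infinite product
\[
\cosh\lambda=\prod_{k\ge 1}\Bigl(1+\tfrac{4\lambda^2}{\pi^2(2k-1)^2}\Bigr)\le \exp\Bigl(\sum_{k\ge 1}\tfrac{4\lambda^2}{\pi^2(2k-1)^2}\Bigr)=e^{\lambda^2/2},
\]
using $1+x\le e^x$ and $\sum_{k\ge 1}(2k-1)^{-2}=\pi^2/8$. Second, it applies the elementary bound $e^x-1\le 2x$ for $x\in[0,1]$ with $x=\lambda^2/2$, which gives $\cosh\lambda-1\le e^{\lambda^2/2}-1\le\lambda^2$ precisely on $|\lambda|\le\sqrt{2}$. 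This route lands exactly on the stated interval with no case-splitting or sharpened factorial bounds.

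Your direct series approach can certainly be pushed through (for instance, splitting off the $k=1$ term and bounding the tail by a geometric series with ratio $\lambda^2/12$ does work on $|\lambda|\le\sqrt{2}$), but as you yourself note, the naive geometric bound stops at $|\lambda|\le 1$, so some extra care is required. The paper's route is cleaner and explains why the constant $\sqrt{2}$ appears naturally: it is exactly the threshold at which $\lambda^2/2$ leaves the interval $[0,1]$ on which $e^x-1\le 2x$ holds.
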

With the above result, we can establish the following privacy and regret guarantee of Algorithm~\ref{alg:SE}. See Appendix~\ref{app:approx-DP} for proof.
\begin{theorem}[RDP via SecAgg]
\label{thm:rdp-SecAgg}
Fix any $\epsilon > 0$. Let $\cP = (\cR,\cS,\cA)$ be a protocol such that the noise in $\cR$ (Algorithm~\ref{alg:randomizer}) is given by $\eta_i \sim \textbf{Sk}(0, \frac{g^2}{n\epsilon^2})$, $\cS$ is any SecAgg protocol and $\cA$ is given by Algorithm~\ref{alg:analyzer}. Fix a scaling factor $s \ge 1$. For each batch $b \ge 1$, choose $n = l(b), g = \ceil{s \epsilon \sqrt{n}}$, $\tau = \ceil{\frac{2g}{\epsilon}\sqrt{\log(2/p)} + \sqrt{2}\log(2/p)}$, $p =1/T$  and $m = ng + 2\tau + 1$. Then, Algorithm~\ref{alg:SE} instantiated with protocol $\cP$ and  confidence radius 
$\beta(b) = O\left(\sqrt{\frac{\log(|\Phi(b)|b^2/p)}{2l(b)}}  + \frac{(1+1/s)\log(|\Phi(b)|b^2/p)}{\epsilon l(b)}\right)$, achieves $(\alpha, \hat{\epsilon}(\alpha) )$-RDP in the distributed model for all $\alpha = 2,3,\ldots$, with $\hat{\epsilon}(\alpha)= \frac{\alpha \epsilon^2}{2} + \min\left\{  \frac{(2\alpha -1) \epsilon^2}{4 s^2 }+ \frac{3\epsilon}{2 s^3  }, \frac{3\epsilon^2}{2s}\right\}$.
Moreover, it enjoys the regret bound 
\begin{align*}
    \ex{\text{Reg}(T)} = O\left(\sum\nolimits_{a\in [K]: \Delta_a >0} \frac{\log T}{\Delta_a} +\frac{K\sqrt{\log T}}{\epsilon} + \frac{K\log T}{s \epsilon}\right)~.
\end{align*}
\end{theorem}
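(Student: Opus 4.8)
The plan is to establish the two claims — the $(\alpha,\hat\epsilon(\alpha))$-RDP guarantee and the regret bound — separately, reusing the batched successive-elimination scaffolding of Theorem~\ref{thm:pure-SecAgg} with the discrete Laplace mechanism replaced by the Skellam mechanism throughout.

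\textbf{Privacy.} Since every active arm in a batch is prescribed to its own fresh group of $l(b)$ users, each user participates in exactly one $(\text{batch},\text{arm})$ pair, so any two neighbouring datasets differ only in the input of a single atomic sub-mechanism $\cS\circ\cR^{l(b)}$. The whole transcript of Algorithm~\ref{alg:SE} (prescribed actions and active sets across all batches) is a deterministic post-processing of the collection of SecAgg outputs over these atoms, and R\'{e}nyi divergence is non-increasing under post-processing, so it suffices to bound $D_\alpha$ for one atom; this also takes care of the data-adaptive assignment of users to atoms. For a fixed atom, perfect correctness and security of SecAgg mean the server's view is exactly $\big(\sum_i \hat x_i + \sum_i \eta_i\big)\bmod m$, and applying post-processing once more (under $u\mapsto u\bmod m$) lets us drop the modulus. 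Because Skellam is closed under convolution, $\sum_{i=1}^{l(b)}\eta_i \sim \textbf{Sk}(0, g^2/\epsilon^2)$, and because one user's randomised encoding $\hat x_i$ lies in $\{0,\dots,g\}$, conditioning on the coupled unchanged encodings and on the values of $\hat x_j,\hat x'_j$ together with quasi-convexity of R\'{e}nyi divergence reduces the task to the scalar Skellam mechanism with $\ell_1$- and $\ell_2$-sensitivity $g$ and noise variance $g^2/\epsilon^2$. Plugging these parameters and $g=\lceil s\epsilon\sqrt{l(b)}\rceil$ into the Skellam-mechanism RDP bound of~\cite{agarwal2021skellam} (valid for integer $\alpha$) and simplifying via $g\ge s\epsilon\sqrt{l(b)}\ge s\epsilon$ yields exactly $\hat\epsilon(\alpha)=\tfrac{\alpha\epsilon^2}{2}+\min\{\cdot\}$, the leading term coming from $\alpha g^2/(2\cdot g^2/\epsilon^2)=\alpha\epsilon^2/2$ and the two branches of the $\min$ being inherited from the two regimes of that bound.

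\textbf{Regret.} This is the standard batched successive-elimination argument, so the only real work is to justify the stated confidence radius. Define a good event that, for every batch $b$ and active arm $a$, simultaneously asserts: (i) the empirical mean of the true rewards is within $\sqrt{\log(|\Phi(b)|b^2/p)/(2l(b))}$ of $\mu_a$ (Hoeffding); (ii) the randomised-rounding residual $\sum_i(\hat x_i - g\, r_a^i(b))$, a sum of independent mean-zero terms bounded by $1$, has absolute value at most $\sqrt{(l(b)/2)\log(|\Phi(b)|b^2/p)}$ (Hoeffding); and (iii) the aggregate Skellam noise obeys $|\sum_i \eta_i| \le \tfrac{2g}{\epsilon}\sqrt{\log(|\Phi(b)|b^2/p)} + \sqrt{2}\,\log(|\Phi(b)|b^2/p)$ by Proposition~\ref{prop:sk-tail} with $\sigma^2=g^2/\epsilon^2$. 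A union bound over arms and batches with $p=1/T$ makes the good event fail with probability $O(1/T)$, adding $O(1)$ to the expected regret, and all logarithmic factors become $O(\log T)$. On the good event, (iii) together with the choices of $\tau$ and $m=l(b)g+2\tau+1$ forces $|\sum_i\eta_i|\le\tau$, so there is no harmful modular wrap-around and the analyzer's underflow correction returns precisely $R_a(b)=\tfrac1g\big(\sum_i\hat x_i+\sum_i\eta_i\big)$. Dividing by $l(b)$ and combining (i)--(iii), converting the $1/g$-factors into $1/(s\epsilon\,l(b))$-factors via $g\ge s\epsilon\sqrt{l(b)}$, gives
\[
 |\hat\mu_a(b)-\mu_a| = O\!\left(\sqrt{\tfrac{\log(|\Phi(b)|b^2/p)}{2l(b)}} + \tfrac{\sqrt{\log(|\Phi(b)|b^2/p)}}{\epsilon\,l(b)} + \tfrac{\log(|\Phi(b)|b^2/p)}{s\epsilon\,l(b)}\right) =: \beta(b),
\]
which is upper bounded by the radius in the statement (crudely $\sqrt{\log(\cdot)}\le\log(\cdot)$). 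Elimination bookkeeping then shows a sub-optimal arm $a$ survives only up to the first batch $b_a$ with $2\beta(b_a)<\Delta_a$ and is pulled $O(l(b_a))$ times; solving the three competing requirements on $l(b_a)$ gives $l(b_a)=O\big(\max\{\tfrac{\log T}{\Delta_a^2},\tfrac{\sqrt{\log T}}{\epsilon\Delta_a},\tfrac{\log T}{s\epsilon\Delta_a}\}\big)$, and $\sum_{a:\Delta_a>0}\Delta_a\,l(b_a)$ gives the claimed regret.

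\textbf{Main obstacle.} The crux is item (iii): the tail bound of Proposition~\ref{prop:sk-tail} must be sharp enough that its dominant contribution to $\beta(b)$ scales like $\sqrt{\log T}/(\epsilon\,l(b))$ rather than $\log T/(\epsilon\,l(b))$ — this is precisely what improves the privacy cost inside the regret from the $K\log T/\epsilon$ of pure DP to $K\sqrt{\log T}/\epsilon$. Secondary subtleties are (a) coordinating $g,\tau,m$ so that the modular arithmetic is lossless on the good event while everything stays integer-valued, (b) making the privacy reduction to the plain scalar Skellam mechanism rigorous despite randomised rounding (via quasi-convexity of R\'{e}nyi divergence) and despite the data-adaptive batching (via the post-processing argument above), and (c) the routine but careful substitution that turns the generic Skellam RDP bound into the closed form $\hat\epsilon(\alpha)$.
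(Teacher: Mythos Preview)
Your proposal is correct and follows essentially the same approach as the paper: the privacy argument reduces via post-processing and Skellam closure under convolution to the scalar Skellam RDP bound of~\cite{agarwal2021skellam} with sensitivity $g$ and variance $g^2/\epsilon^2$, and the regret argument decomposes the estimation error into the Hoeffding, rounding, and Skellam-tail pieces exactly as you do, yielding noise parameters $\sigma=O(1/\epsilon)$ and $h=O(1/(s\epsilon))$ that are then fed into the generic batched-elimination regret bound (Lemma~\ref{lem:generic}). The only cosmetic difference is that the paper packages the elimination bookkeeping and the Hoeffding term into Lemma~\ref{lem:generic} rather than spelling them out, and does not explicitly invoke quasi-convexity of R\'{e}nyi divergence for the randomised rounding (it simply appeals to the sensitivity of $\sum_i \hat x_i$ being $g$).
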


\textbf{Privacy-Regret-Communication Trade-off.}
Observe that the scaling factor $s$ allows us to achieve different trade-offs. If $s$ increases, both privacy and regret performances improve. In fact, for a sufficiently large value of $s$, the third term in the regret bound becomes sufficiently small, and we obtain an improved regret bound compared to Theorem~\ref{thm:pure-SecAgg}. Moreover,
the RDP privacy guarantee improves to $\hat \epsilon(\alpha)\approx \frac{\alpha\epsilon^2}{2}$, which is the standard RDP rate for Gaussian mechanism~\cite{mironov2017renyi}. However, a larger $s$ leads to an increase of communicating bits per user, but only grows logarithmically, since Algorithm~\ref{alg:SE} needs to communicate $O(\log m)$ bits to the secure protocol $\cS$. 

\textbf{RDP to Approximate DP.}
To shed more insight on Theorem~\ref{thm:rdp-SecAgg}, we convert our RDP guarantee to approximate DP for a sufficiently large $s$. It holds that under the setup of Theorem~\ref{thm:rdp-SecAgg}, for sufficiently large $s$, one can achieve $(O(\epsilon),\delta)$-DP with regret $O\big(\sum_{a\in [K]: \Delta_a >0} \frac{\log T}{\Delta_a} +\frac{K\sqrt{\log T\log(1/\delta)}}{\epsilon}\big)$ (via Lemma~\ref{applem:conversion}). The implication of this conversion is three-fold. \textbf{First}, this regret bound is $O(\sqrt{\log T})$ factor tighter than that achieved by \cite{tenenbaum2021differentially} using a shuffle protocol with same $(\epsilon,\delta)$-DP guarantee.
 \textbf{Second}, it yields a better regret performance compared to the bound achieved under $(\epsilon,0)$-DP in Theorem~\ref{thm:pure-SecAgg} when the privacy budget $\delta > 1/T$. This observation is consistent with the fact that a weaker privacy guarantee typically warrants a better utility bound. \textbf{Third}, this conversion via RDP also yields a gain of $O(\sqrt{\log(1/\delta)})$ in the regret when dealing with privacy composition (e.g., when participating users across different batches are \textit{not unique}) compared to~\cite{tenenbaum2021differentially} that can only rely on approximate DP (see Appendix~\ref{app:return} for details). This results from the fact that RDP provides a tighter composition compared to approximate DP.




\begin{remark}[Achieving RDP with discrete Gaussian]
As shown in~\cite{canonne2020discrete}, one can also achieve RDP guarantee using discrete Gaussian noise. In this work, we consider Skellam distribution since it is closed under summation and enjoys efficient sampling procedure as opposed to discrete Gaussian \cite{agarwal2021skellam}. Nevertheless, as a proof of flexibility of our proposed framework, we show in Appendix~\ref{app:CDP} that Algorithm~\ref{alg:SE} with discrete Gaussian noise can guarantee RDP with a similar regret bound. 
\end{remark}


\section{Key Techniques: Overview}
\label{sec:overview}
In this section, we provide an overview of the key techniques behind the privacy guarantees and regret bounds in previous sections. In fact, we will show that the results of Theorem~\ref{thm:pure-SecAgg} and \ref{thm:rdp-SecAgg} can be obtained via a clean generic analytical framework, which not only covers the analysis of distributed pure DP/RDP via both SecAgg and shuffling, but also offers a unified view of private MAB under central, local and distributed DP models. 

As in many private learning algorithms, the key is to characterize the impact of added privacy noise on the utility. In our case, this reduces to capturing the tail behavior of total noise (i.e., the term $n_a(b):= R_a(b) - \sum_{i=1}^{l(b)}r_a^i(b)$) added at each batch $b$ for each active arm $a$ by the privacy protocol $\cP$.  The following lemma gives a generic regret bound under mild tail assumptions on $n_a(b)$. See Appendix~\ref{app:proofLem1} for proof.
\begin{lemma}[Generic regret under tail bound on total noise]
\label{lem:general-regret-informal}
For any $p \in (0,1]$, let there exist constants $\sigma, h > 0$ such that for all $b\ge 1$, $a \in [K]$, it holds that $|n_a(b)| \le \cN:=O\left(\sigma \sqrt{\log(KT/p)} + h\log(KT/p)\right)$, with probability at least $1-p$. Then, running Algorithm~\ref{alg:SE} with $p=1/T$ and confidence width $\beta(b) = O\left(\sqrt{\log(KT/p)/l(b)} + \cN/l(b)\right)$ at batch $b$, we acieve the expected regret
\begin{align*}
    \ex{\text{Reg}(T)} = O\Big(\sum\nolimits_{a\in [K]: \Delta_a >0} \frac{\log T}{\Delta_a} + K\sigma \sqrt{\log T} + K h {\log T}\Big).
\end{align*}
\end{lemma}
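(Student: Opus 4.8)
The plan is to carry out a standard batched successive-elimination argument, the only twist being that the batch-$b$ estimate carries an additive perturbation on top of the empirical mean of the fresh rewards: writing $\bar\mu_a(b)$ for the average of the $l(b)$ i.i.d.\ rewards collected for arm $a$ in batch $b$, we have $\hat\mu_a(b) = \bar\mu_a(b) + n_a(b)/l(b)$. First I would fix the clean event on which all confidence intervals are valid. By Hoeffding's inequality, $|\bar\mu_a(b) - \mu_a| \le \sqrt{\log(KT/p)/(2 l(b))}$ except with probability $2p/(KT)$; combining this with the hypothesised tail bound $|n_a(b)| \le \cN$ and taking a union bound over the $K$ arms and the at most $\lceil \log_2 T \rceil$ batches, one obtains an event $\mathcal{E}$ with $\mathbb{P}[\mathcal{E}] \ge 1 - O(p)$ on which, for every batch $b$ and every arm $a \in \Phi(b)$, simultaneously $|\hat\mu_a(b) - \mu_a| \le \sqrt{\log(KT/p)/(2 l(b))} + \cN/l(b) \le \beta(b)$, for an appropriate choice of the $O(\cdot)$ constant in $\beta(b)$. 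It is here that one uses that fresh users are assigned in every batch, so that $\bar\mu_a(b)$ is genuinely an empirical mean of i.i.d.\ samples and Hoeffding applies per batch.

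On $\mathcal{E}$ I would then run the two usual steps. \emph{Step one:} the optimal arm is never eliminated, since $\text{UCB}_{a^*}(b) \ge \mu^* \ge \mu_{a'} \ge \text{LCB}_{a'}(b)$ for every $a' \in \Phi(b)$, so the elimination test in Algorithm~\ref{alg:SE} never removes $a^*$, and by induction $a^* \in \Phi(b)$ for all $b$. \emph{Step two:} a suboptimal arm $a$ is removed once the confidence width drops below a constant fraction of its gap. Indeed, on $\mathcal{E}$ we have $\text{UCB}_a(b) \le \mu_a + 2\beta(b)$ and $\text{LCB}_{a^*}(b) \ge \mu^* - 2\beta(b)$, so whenever $4\beta(b) < \Delta_a$ we get $\text{UCB}_a(b) < \text{LCB}_{a^*}(b)$ and $a$ leaves $\Phi$ at the end of batch $b$. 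Writing $b_a$ for the last batch in which $a$ is active, this forces $4\beta(b_a - 1) \ge \Delta_a$; since $\beta(b) = O\big(\sqrt{\log(KT/p)/l(b)} + \cN/l(b)\big)$ and $l(b) = 2^b$ shrinks geometrically, inverting this inequality gives $l(b_a) = 2\, l(b_a - 1) = O\big(\log(KT/p)/\Delta_a^2 + \cN/\Delta_a\big)$, and the geometric sum of batch sizes yields $N_a(T) \le \sum_{b \le b_a} l(b) \le 2\, l(b_a)$, hence $\Delta_a N_a(T) = O\big(\log(KT/p)/\Delta_a + \cN\big)$.

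Finally I would assemble the bound. Summing over suboptimal arms, on $\mathcal{E}$ the regret is $O\big(\sum_{a:\Delta_a>0} \log(KT/p)/\Delta_a + K\cN\big)$; on the complement, of probability $O(p)$, I bound the regret trivially by $T$ (as $\Delta_a \le 1$). Taking expectations and substituting $p = 1/T$ — so that $\log(KT/p) = O(\log T)$, $\cN = O(\sigma\sqrt{\log T} + h\log T)$, and the bad-event contribution is $O(pT) = O(1)$ — delivers $\mathbb{E}[\text{Reg}(T)] = O\big(\sum_{a:\Delta_a>0} \frac{\log T}{\Delta_a} + K\sigma\sqrt{\log T} + K h \log T\big)$. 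I expect the only delicate point to be the union-bound bookkeeping in the first paragraph: the per-pair Hoeffding failure probability and the per-pair (or global) noise-tail failure probability must be calibrated so that, after the union over $K$ arms and $\Theta(\log T)$ batches, the total failure probability is small enough for the $pT$ term from the bad event to be absorbed into the lower-order additive constant; everything past that is the textbook computation, with the privacy noise $n_a(b)/l(b)$ simply riding along inside $\beta(b)$ and producing the extra $K\cN$ term.
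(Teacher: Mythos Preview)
Your proposal is correct and follows essentially the same approach as the paper's proof: establish a clean event via Hoeffding plus the noise tail bound, show the optimal arm survives and any suboptimal arm is eliminated once $4\beta(b)<\Delta_a$, invert to bound $l(b_a)$, and absorb the bad event via $p=1/T$. The only cosmetic difference is the union-bound bookkeeping---the paper weights batches by $1/b^2$ inside the confidence radius whereas you use the cruder bound of at most $\lceil\log_2 T\rceil$ batches---but both yield the same $O(\log T)$ terms.
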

An acute reader may notice that the bound $\cN$ on the noise is the tail bound for a sub-exponential distribution and it reduces to the bound for sub-Gaussian tail if $h = 0$. Moreover, our SecAgg protocol $\cP$ with discrete Laplace noise (as given in Theorem~\ref{thm:pure-SecAgg}) satisfy this bound with $\sigma=\sqrt{2}/\epsilon, h=1/\epsilon$. Similarly, our protocol with Skellam noise (as given in Theorem~\ref{thm:rdp-SecAgg}) satisfy this bound with $\sigma=O(1/\epsilon), h=1/(s\epsilon)$. Therefore, we can build on the above general result to directly obtain our regret bounds (Theorem~\ref{thm:pure-SecAgg},\ref{thm:rdp-SecAgg}). In the following, we will highlight the high-level idea behind our privacy and regret analysis in the distributed DP model. 

\textbf{Privacy.} For distributed DP, by definition, the view of the server during the entire algorithm needs to be private. Since each user only contributes once\footnote{This assumption that users only contributes once is adopted in nearly all previous works for privacy analysis in bandits. 
We also provide a privacy analysis for returning users via RDP, see Appendix~\ref{app:return}.
}, by the parallel-composition of DP, it suffices to ensure that the each view $\hat{y}_a(b)$ is private. To this end, under both SecAgg and shuffling,  the distribution of $\hat{y}_a(b)$ can be simulated via $(\sum_{i} y_i) \Mod m$, which can be further reduced to $\left(\sum_i \hat{x}_i + \eta_i\right) \Mod m$ by the distributive property of modular sum. Now, we consider a  mechanism $\cH$ that accepts an input dataset $\{\hat{x}_i\}_i$ and outputs $\sum_i \hat{x}_i + \sum_i \eta_i$. By post-processing, it suffices to show that $\cH$ satisfies pure DP or RDP. To this end, the variance $\sigma_{tot}^2$ of the total noise $\sum_i \eta_i$ needs to scale with the sensitivity of $\sum_i \hat{x}_i$. Thus, each user within a batch only needs to add a proper noise with variance of $\sigma_{tot}^2/n$.
Finally, by the particular distribution properties of the noise, one can show that $\cH$ is pure DP or RDP, and hence, obtain the privacy guarantees. 

\textbf{Regret.} Thanks to Lemma~\ref{lem:general-regret-informal}, we only need to focus on the tail of $n_a(b)$. To this end, fix any batch $b$ and arm $a$. We have $\hat{y}= \hat{y}_a(b)$, $x_i = r_a^i(b)$, $n = l(b)$ for $\cP$ and we need to establish that with probability at least $1-p$, for some $\sigma$ and $h$,
\begin{align}
\label{eq:goal-int}
    \abs{\cA(\hat{y}) - \sum\nolimits_i x_i} \le O\left(\sigma \sqrt{\log(1/p)} + h \log (1/p)\right).
\end{align}

To get the bound, inspired by~\cite{balle2020private,cheu2021pure}, we divide the LHS into $\text{Term (i)} \!=\! \abs{\cA(\hat{y}) -  \sum_i \hat{x}_i/g}$ and $\text{Term (ii)} \!=\! \abs{ \sum_i \hat{x}_i/g - \sum_i x_i}$,
where $\text{Term (i)}$ captures the error due to privacy noise, modular operation and possible relaxed correctness in secure shuffling, while Term (ii) captures the error due to random rounding. In particular, Term (ii) can be easily bounded via sub-Gaussian tail since the noise is bounded. Term (i) needs care for the possible underflow due to modular operation by considering two different cases (see the second \emph{if-else} clause in Algorithm~\ref{alg:analyzer}). In both cases, one can show that Term (i) is upper bounded by $\tau /g$ with high probability, where $\tau$ is the tail bound on the total privacy noise $\sum_i \eta_i$. Thus, depending on particular privacy noise and parameter choices, one can find $\sigma$ and $h$ such that~\eqref{eq:goal-int} holds, and  hence, obtain the corresponding regret bound by Lemma~\ref{lem:general-regret-informal}.

\begin{remark}
As a by-product of our generic analysis technique, Algorithm~\ref{alg:SE} and privacy protocol $\cP$ along with Lemma~\ref{lem:general-regret-informal} provide a new and structured way to design and analyze private MAB algorithms under central and local models with discrete private noise (see Appendix~\ref{app:central-local} for details). This enables us to reap the benefits of working with discrete noise (e.g., finite-computer representations, bit communications) in all three trust models (central, local and distributed).
\end{remark}

\section{Simulation Results}\label{sec:sim}




\begin{figure}[t]
		\begin{subfigure}[t]{.32\linewidth}
			\includegraphics[width = 2.2in]{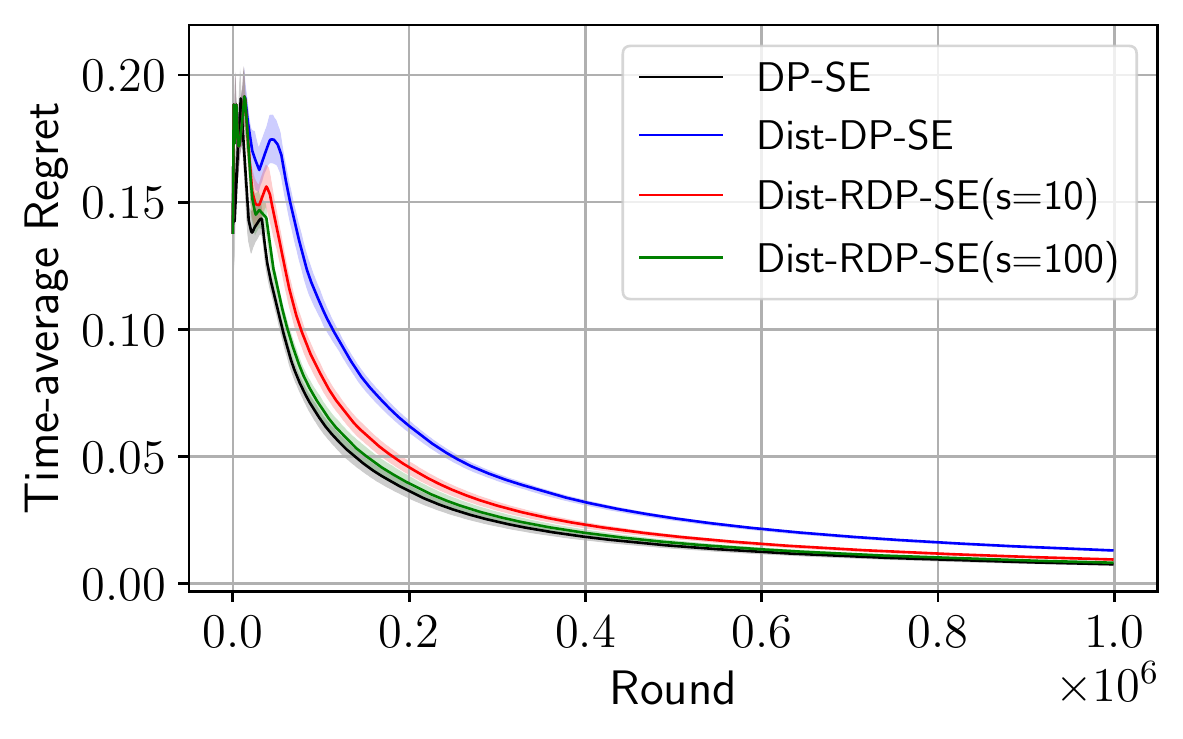}
			\vskip -2mm
			\caption{$\epsilon=0.1, K =10$}
		\end{subfigure}\ \
		\begin{subfigure}[t]{.32\linewidth}
			\includegraphics[width = 2.2in]{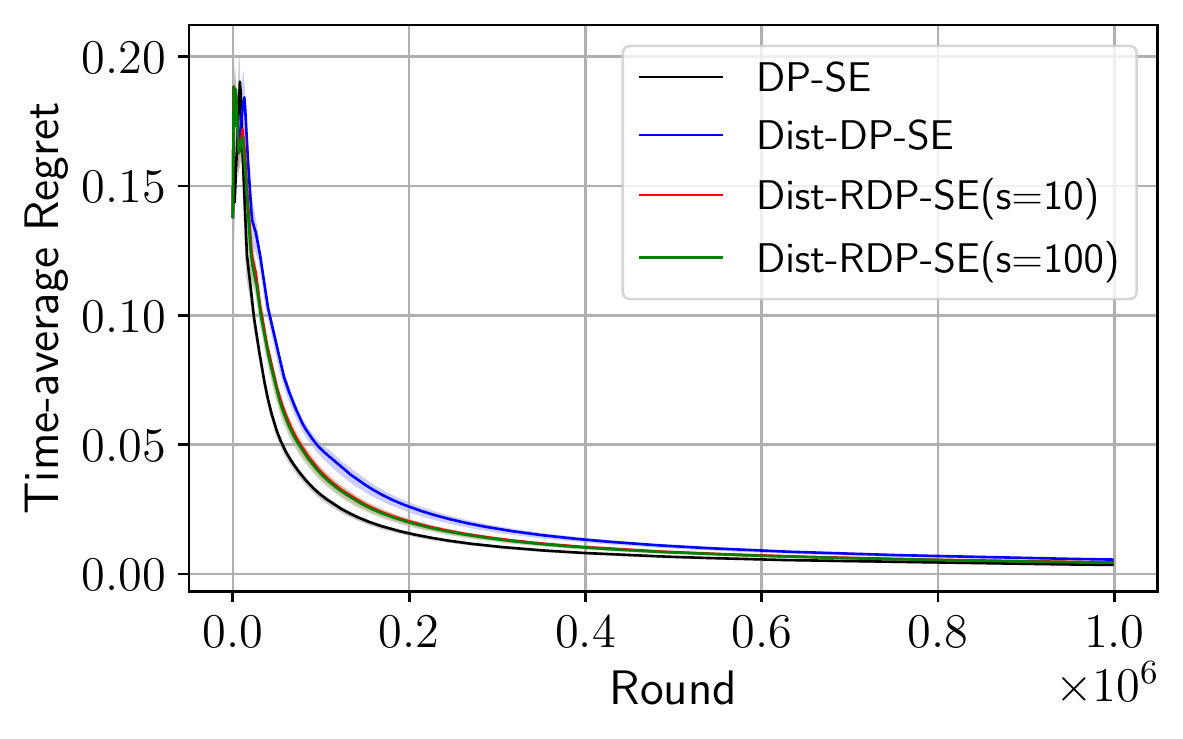}
			\vskip -2mm
			\caption{$\epsilon=0.5,K=10$}
		\end{subfigure}\ \
		\begin{subfigure}[t]{.34\linewidth}
			\includegraphics[width = 2.2in]{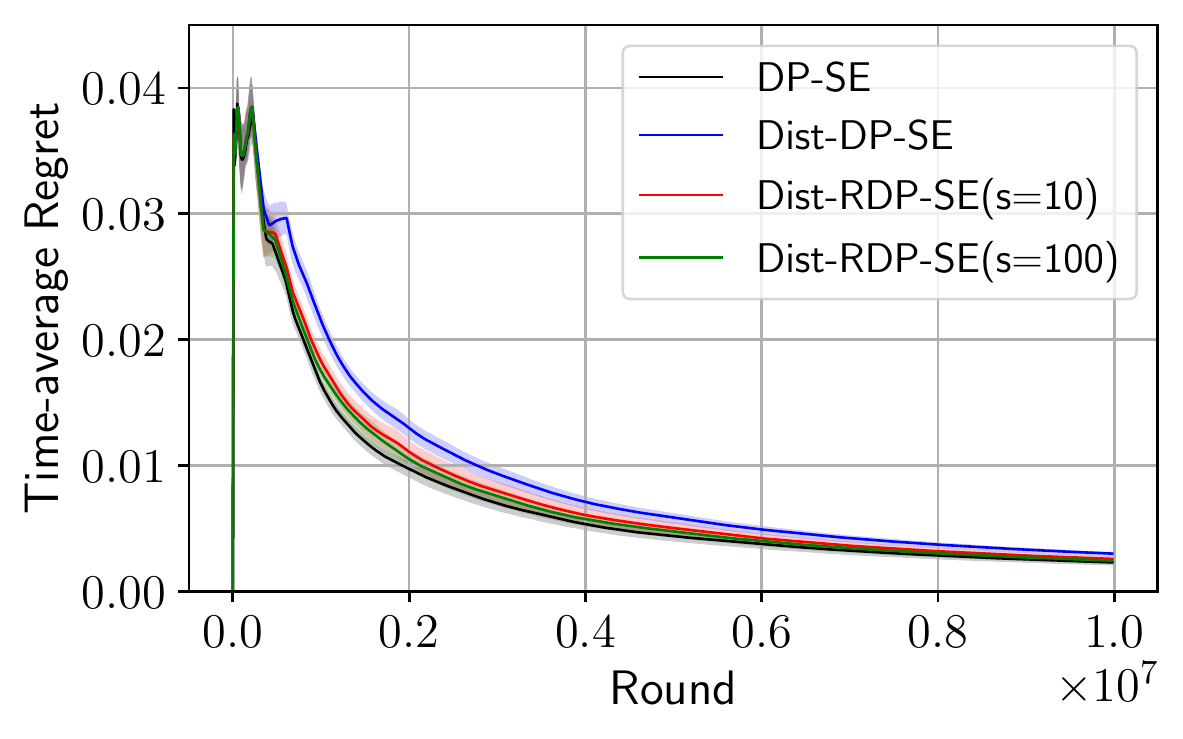}
			\vskip -2mm
			\caption{$\epsilon=0.1,K=10$}
		\end{subfigure}
		\vskip -2mm
		\caption{\footnotesize{Comparison of time-average regret for Dist-DP-SE, Dist-RDP-SE, and DP-SE in Gaussian bandit instances under (a, b) large reward gap (easy instance) and (c) small reward gap (hard instance). 
}  }
\label{fig:all_algos}

		\vspace{-5mm}
\end{figure}

In this section, we empirically evaluate the regret performance of our successive elimination scheme with SecAgg protocol (Algorithm~\ref{alg:SE}) under distributed trust model, which
we abbreviate as Dist-DP-SE and Dist-RDP-SE when the randomizer $\cR$ is instantiated with P\'{o}lya noise (achieves pure DP) and Skellam noise (achieves RDP), respectively. We compare them with the DP-SE algorithm of \cite{sajed2019optimal} that works only with \emph{continuous} Laplace noise and achieves optimal regret under pure DP in the central model. We fix confidence level $p = 0.1$ and study comparative performances under varying privacy levels $\epsilon < 1$.  Similar to \cite{vaswani2020old}, we consider easy and hard MAB instances: in the former, arm means are sampled uniformly in $[0.25, 0.75]$,
while in the latter, those are sampled in $[0.45, 0.55]$. We consider real rewards -- sampled from Gaussian distribution with aforementioned means and projected to $[0,1]$.
We plot time-average regret $\text{Reg}(T)/T$ in Figure~\ref{fig:all_algos} by averaging results over 20 randomly generated bandit instances.
We observe that as $T$ becomes large, the regret performance of Dist-DP-SE matches the regret of DP-SE. The slight gap in small $T$ regime is the cost that we pay to achieve distributed privacy using discrete noise without access to a trusted server (for higher $\epsilon$ value, this gap is even smaller). In addition, we find that a relatively small scaling factor ($s=10$) provides a considerable gain in regret under RDP compared to pure DP, especially when $\epsilon$ is small (i.e., when the cost of privacy is not dominated by the non-private part of regret).
The experimental findings are consistent with our theoretical results. Here, we note that our simulations are proof-of-concept only and we did not tune any hyperparameters.
More details and additional plots are given in Appendix~\ref{app:exps}.

\section{Conclusion}\label{sec:conclusion}
We show that MAB under distributed trust model can achieve pure DP while maintaining the same regret under central model. In addition, RDP is also achieved in MAB udner distributed trust model for the first time. Both results are obtained via a unified algorithm design and performance analysis. More importantly, our work also opens the door to a promising research direction -- private online learning with distributed DP guarantees, including contextual bandits and reinforcement learning. 
\bibliographystyle{alpha}
\bibliography{ref}

\newcommand{\etalchar}[1]{$^{#1}$}
\begin{thebibliography}{EDMMM06}

\bibitem[ABL03]{abe2003reinforcement}
Naoki Abe, Alan~W Biermann, and Philip~M Long.
\newblock Reinforcement learning with immediate rewards and linear hypotheses.
\newblock {\em Algorithmica}, 37(4):263--293, 2003.

\bibitem[AKL21]{agarwal2021skellam}
Naman Agarwal, Peter Kairouz, and Ziyu Liu.
\newblock The skellam mechanism for differentially private federated learning.
\newblock {\em Advances in Neural Information Processing Systems}, 34, 2021.

\bibitem[ALC{\etalchar{+}}20]{asoodeh2020better}
Shahab Asoodeh, Jiachun Liao, Flavio~P Calmon, Oliver Kosut, and Lalitha
  Sankar.
\newblock A better bound gives a hundred rounds: Enhanced privacy guarantees
  via f-divergences.
\newblock In {\em 2020 IEEE International Symposium on Information Theory
  (ISIT)}, pages 920--925. IEEE, 2020.

\bibitem[AS17]{agarwal2017price}
Naman Agarwal and Karan Singh.
\newblock The price of differential privacy for online learning.
\newblock In {\em International Conference on Machine Learning}, pages 32--40.
  PMLR, 2017.

\bibitem[Aue02]{auer2002using}
Peter Auer.
\newblock Using confidence bounds for exploitation-exploration trade-offs.
\newblock {\em Journal of Machine Learning Research}, 3(Nov):397--422, 2002.

\bibitem[BBG{\etalchar{+}}20]{bell2020secure}
James~Henry Bell, Kallista~A Bonawitz, Adri{\`a} Gasc{\'o}n, Tancr{\`e}de
  Lepoint, and Mariana Raykova.
\newblock Secure single-server aggregation with (poly) logarithmic overhead.
\newblock In {\em Proceedings of the 2020 ACM SIGSAC Conference on Computer and
  Communications Security}, pages 1253--1269, 2020.

\bibitem[BBGN19]{balle2019privacy}
Borja Balle, James Bell, Adri{\`a} Gasc{\'o}n, and Kobbi Nissim.
\newblock The privacy blanket of the shuffle model.
\newblock In {\em Annual International Cryptology Conference}, pages 638--667.
  Springer, 2019.

\bibitem[BBGN20]{balle2020private}
Borja Balle, James Bell, Adria Gasc{\'o}n, and Kobbi Nissim.
\newblock Private summation in the multi-message shuffle model.
\newblock In {\em Proceedings of the 2020 ACM SIGSAC Conference on Computer and
  Communications Security}, pages 657--676, 2020.

\bibitem[BEM{\etalchar{+}}17]{bittau2017prochlo}
Andrea Bittau, {\'U}lfar Erlingsson, Petros Maniatis, Ilya Mironov, Ananth
  Raghunathan, David Lie, Mitch Rudominer, Ushasree Kode, Julien Tinnes, and
  Bernhard Seefeld.
\newblock Prochlo: Strong privacy for analytics in the crowd.
\newblock In {\em Proceedings of the 26th Symposium on Operating Systems
  Principles}, pages 441--459, 2017.

\bibitem[BF85]{berry1985bandit}
Donald~A Berry and Bert Fristedt.
\newblock Bandit problems: sequential allocation of experiments (monographs on
  statistics and applied probability).
\newblock {\em London: Chapman and Hall}, 5(71-87):7--7, 1985.

\bibitem[BIK{\etalchar{+}}17]{bonawitz2017practical}
Keith Bonawitz, Vladimir Ivanov, Ben Kreuter, Antonio Marcedone, H~Brendan
  McMahan, Sarvar Patel, Daniel Ramage, Aaron Segal, and Karn Seth.
\newblock Practical secure aggregation for privacy-preserving machine learning.
\newblock In {\em proceedings of the 2017 ACM SIGSAC Conference on Computer and
  Communications Security}, pages 1175--1191, 2017.

\bibitem[BNO08]{beimel2008distributed}
Amos Beimel, Kobbi Nissim, and Eran Omri.
\newblock Distributed private data analysis: Simultaneously solving how and
  what.
\newblock In {\em Annual International Cryptology Conference}, pages 451--468.
  Springer, 2008.

\bibitem[BS16]{bun2016concentrated}
Mark Bun and Thomas Steinke.
\newblock Concentrated differential privacy: Simplifications, extensions, and
  lower bounds.
\newblock In {\em Theory of Cryptography Conference}, pages 635--658. Springer,
  2016.

\bibitem[CJMP21]{cheu2021shuffle}
Albert Cheu, Matthew Joseph, Jieming Mao, and Binghui Peng.
\newblock Shuffle private stochastic convex optimization.
\newblock {\em arXiv preprint arXiv:2106.09805}, 2021.

\bibitem[CKS20]{canonne2020discrete}
Cl{\'e}ment~L Canonne, Gautam Kamath, and Thomas Steinke.
\newblock The discrete gaussian for differential privacy.
\newblock {\em Advances in Neural Information Processing Systems},
  33:15676--15688, 2020.

\bibitem[CSS12]{chan2012optimal}
TH~Hubert Chan, Elaine Shi, and Dawn Song.
\newblock Optimal lower bound for differentially private multi-party
  aggregation.
\newblock In {\em European Symposium on Algorithms}, pages 277--288. Springer,
  2012.

\bibitem[CSU{\etalchar{+}}19]{cheu2019distributed}
Albert Cheu, Adam Smith, Jonathan Ullman, David Zeber, and Maxim Zhilyaev.
\newblock Distributed differential privacy via shuffling.
\newblock In {\em Annual International Conference on the Theory and
  Applications of Cryptographic Techniques}, pages 375--403. Springer, 2019.

\bibitem[CY21]{cheu2021pure}
Albert Cheu and Chao Yan.
\newblock Pure differential privacy from secure intermediaries.
\newblock {\em arXiv preprint arXiv:2112.10032}, 2021.

\bibitem[CZ21]{chowdhury2021differentially}
Sayak~Ray Chowdhury and Xingyu Zhou.
\newblock Differentially private regret minimization in episodic markov
  decision processes.
\newblock {\em arXiv preprint arXiv:2112.10599}, 2021.

\bibitem[CZ22]{chowdhury2022shuffle}
Sayak~Ray Chowdhury and Xingyu Zhou.
\newblock Shuffle private linear contextual bandits.
\newblock {\em arXiv preprint arXiv:2202.05567}, 2022.

\bibitem[CZS21]{chowdhury2021adaptive}
Sayak~Ray Chowdhury, Xingyu Zhou, and Ness Shroff.
\newblock Adaptive control of differentially private linear quadratic systems.
\newblock In {\em 2021 IEEE International Symposium on Information Theory
  (ISIT)}, pages 485--490. IEEE, 2021.

\bibitem[CZZ{\etalchar{+}}20]{chen2020locally}
Xiaoyu Chen, Kai Zheng, Zixin Zhou, Yunchang Yang, Wei Chen, and Liwei Wang.
\newblock (locally) differentially private combinatorial semi-bandits.
\newblock In {\em International Conference on Machine Learning}, pages
  1757--1767. PMLR, 2020.

\bibitem[DMNS06]{dwork2006calibrating}
Cynthia Dwork, Frank McSherry, Kobbi Nissim, and Adam Smith.
\newblock Calibrating noise to sensitivity in private data analysis.
\newblock In {\em Theory of cryptography conference}, pages 265--284. Springer,
  2006.

\bibitem[DMS04]{dingledine2004tor}
Roger Dingledine, Nick Mathewson, and Paul Syverson.
\newblock Tor: The second-generation onion router.
\newblock Technical report, Naval Research Lab Washington DC, 2004.

\bibitem[DR{\etalchar{+}}14]{dwork2014algorithmic}
Cynthia Dwork, Aaron Roth, et~al.
\newblock The algorithmic foundations of differential privacy.
\newblock {\em Found. Trends Theor. Comput. Sci.}, 9(3-4):211--407, 2014.

\bibitem[EDMMM06]{even2006action}
Eyal Even-Dar, Shie Mannor, Yishay Mansour, and Sridhar Mahadevan.
\newblock Action elimination and stopping conditions for the multi-armed bandit
  and reinforcement learning problems.
\newblock {\em Journal of machine learning research}, 7(6), 2006.

\bibitem[EFM{\etalchar{+}}19]{erlingsson2019amplification}
{\'U}lfar Erlingsson, Vitaly Feldman, Ilya Mironov, Ananth Raghunathan, Kunal
  Talwar, and Abhradeep Thakurta.
\newblock Amplification by shuffling: From local to central differential
  privacy via anonymity.
\newblock In {\em Proceedings of the Thirtieth Annual ACM-SIAM Symposium on
  Discrete Algorithms}, pages 2468--2479. SIAM, 2019.

\bibitem[FMT22]{feldman2022hiding}
Vitaly Feldman, Audra McMillan, and Kunal Talwar.
\newblock Hiding among the clones: A simple and nearly optimal analysis of
  privacy amplification by shuffling.
\newblock In {\em 2021 IEEE 62nd Annual Symposium on Foundations of Computer
  Science (FOCS)}, pages 954--964. IEEE, 2022.

\bibitem[GCPP22]{garcelon2022privacy}
Evrard Garcelon, Kamalika Chaudhuri, Vianney Perchet, and Matteo Pirotta.
\newblock Privacy amplification via shuffling for linear contextual bandits.
\newblock In {\em International Conference on Algorithmic Learning Theory},
  pages 381--407. PMLR, 2022.

\bibitem[GDD{\etalchar{+}}21]{girgis2021shuffled}
Antonious Girgis, Deepesh Data, Suhas Diggavi, Peter Kairouz, and
  Ananda~Theertha Suresh.
\newblock Shuffled model of differential privacy in federated learning.
\newblock In {\em International Conference on Artificial Intelligence and
  Statistics}, pages 2521--2529. PMLR, 2021.

\bibitem[GGK{\etalchar{+}}20]{ghazi2020pure}
Badih Ghazi, Noah Golowich, Ravi Kumar, Pasin Manurangsi, Rasmus Pagh, and
  Ameya Velingker.
\newblock Pure differentially private summation from anonymous messages.
\newblock {\em arXiv preprint arXiv:2002.01919}, 2020.

\bibitem[GMPV20]{ghazi2020private}
Badih Ghazi, Pasin Manurangsi, Rasmus Pagh, and Ameya Velingker.
\newblock Private aggregation from fewer anonymous messages.
\newblock In {\em Annual International Conference on the Theory and
  Applications of Cryptographic Techniques}, pages 798--827. Springer, 2020.

\bibitem[GPPBP21]{garcelon2021local}
Evrard Garcelon, Vianney Perchet, Ciara Pike-Burke, and Matteo Pirotta.
\newblock Local differential privacy for regret minimization in reinforcement
  learning.
\newblock {\em Advances in Neural Information Processing Systems}, 34, 2021.

\bibitem[GX15]{goryczka2015comprehensive}
Slawomir Goryczka and Li~Xiong.
\newblock A comprehensive comparison of multiparty secure additions with
  differential privacy.
\newblock {\em IEEE transactions on dependable and secure computing},
  14(5):463--477, 2015.

\bibitem[HW13]{hillar2013maximum}
Christopher Hillar and Andre Wibisono.
\newblock Maximum entropy distributions on graphs.
\newblock {\em arXiv preprint arXiv:1301.3321}, 2013.

\bibitem[KLN{\etalchar{+}}11]{kasiviswanathan2011can}
Shiva~Prasad Kasiviswanathan, Homin~K Lee, Kobbi Nissim, Sofya Raskhodnikova,
  and Adam Smith.
\newblock What can we learn privately?
\newblock {\em SIAM Journal on Computing}, 40(3):793--826, 2011.

\bibitem[KLS21]{kairouz2021distributed}
Peter Kairouz, Ziyu Liu, and Thomas Steinke.
\newblock The distributed discrete gaussian mechanism for federated learning
  with secure aggregation.
\newblock In {\em International Conference on Machine Learning}, pages
  5201--5212. PMLR, 2021.

\bibitem[KMA{\etalchar{+}}21]{kairouz2021advances}
Peter Kairouz, H~Brendan McMahan, Brendan Avent, Aur{\'e}lien Bellet, Mehdi
  Bennis, Arjun~Nitin Bhagoji, Kallista Bonawitz, Zachary Charles, Graham
  Cormode, Rachel Cummings, et~al.
\newblock Advances and open problems in federated learning.
\newblock {\em Foundations and Trends{\textregistered} in Machine Learning},
  14(1--2):1--210, 2021.

\bibitem[LCLS10]{li2010contextual}
Lihong Li, Wei Chu, John Langford, and Robert~E Schapire.
\newblock A contextual-bandit approach to personalized news article
  recommendation.
\newblock In {\em Proceedings of the 19th international conference on World
  wide web}, pages 661--670, 2010.

\bibitem[LGLP21]{luyo2021differentially}
Paul Luyo, Evrard Garcelon, Alessandro Lazaric, and Matteo Pirotta.
\newblock Differentially private exploration in reinforcement learning with
  linear representation.
\newblock {\em arXiv preprint arXiv:2112.01585}, 2021.

\bibitem[LHG21]{liao2021locally}
Chonghua Liao, Jiafan He, and Quanquan Gu.
\newblock Locally differentially private reinforcement learning for linear
  mixture markov decision processes.
\newblock {\em arXiv preprint arXiv:2110.10133}, 2021.

\bibitem[LR21]{lowy2021private}
Andrew Lowy and Meisam Razaviyayn.
\newblock Private federated learning without a trusted server: Optimal
  algorithms for convex losses.
\newblock {\em arXiv preprint arXiv:2106.09779}, 2021.

\bibitem[Mir12]{mironov2012significance}
Ilya Mironov.
\newblock On significance of the least significant bits for differential
  privacy.
\newblock In {\em Proceedings of the 2012 ACM conference on Computer and
  communications security}, pages 650--661, 2012.

\bibitem[Mir17]{mironov2017renyi}
Ilya Mironov.
\newblock R{\'e}nyi differential privacy.
\newblock In {\em IEEE 30th computer security foundations symposium (CSF)},
  pages 263--275. IEEE, 2017.

\bibitem[MT15]{mishra2015nearly}
Nikita Mishra and Abhradeep Thakurta.
\newblock (nearly) optimal differentially private stochastic multi-arm bandits.
\newblock In {\em Proceedings of the Thirty-First Conference on Uncertainty in
  Artificial Intelligence}, pages 592--601, 2015.

\bibitem[PWZ{\etalchar{+}}19]{pan2019you}
Xinlei Pan, Weiyao Wang, Xiaoshuai Zhang, Bo~Li, Jinfeng Yi, and Dawn Song.
\newblock How you act tells a lot: Privacy-leaking attack on deep reinforcement
  learning.
\newblock In {\em Proceedings of the 18th International Conference on
  Autonomous Agents and MultiAgent Systems}, pages 368--376, 2019.

\bibitem[RZLS20]{ren2020multi}
Wenbo Ren, Xingyu Zhou, Jia Liu, and Ness~B Shroff.
\newblock Multi-armed bandits with local differential privacy.
\newblock {\em arXiv preprint arXiv:2007.03121}, 2020.

\bibitem[SS18]{shariff2018differentially}
Roshan Shariff and Or~Sheffet.
\newblock Differentially private contextual linear bandits.
\newblock {\em Advances in Neural Information Processing Systems}, 31, 2018.

\bibitem[SS19]{sajed2019optimal}
Touqir Sajed and Or~Sheffet.
\newblock An optimal private stochastic-mab algorithm based on optimal private
  stopping rule.
\newblock In {\em International Conference on Machine Learning}, pages
  5579--5588. PMLR, 2019.

\bibitem[TD16]{tossou2016algorithms}
Aristide~CY Tossou and Christos Dimitrakakis.
\newblock Algorithms for differentially private multi-armed bandits.
\newblock In {\em Thirtieth AAAI Conference on Artificial Intelligence}, 2016.

\bibitem[TD17]{tossou2017achieving}
Aristide Charles~Yedia Tossou and Christos Dimitrakakis.
\newblock Achieving privacy in the adversarial multi-armed bandit.
\newblock In {\em Thirty-First AAAI Conference on Artificial Intelligence},
  2017.

\bibitem[TKMS21]{tenenbaum2021differentially}
Jay Tenenbaum, Haim Kaplan, Yishay Mansour, and Uri Stemmer.
\newblock Differentially private multi-armed bandits in the shuffle model.
\newblock {\em Advances in Neural Information Processing Systems}, 34, 2021.

\bibitem[TM17]{tewari2017ads}
Ambuj Tewari and Susan~A Murphy.
\newblock From ads to interventions: Contextual bandits in mobile health.
\newblock In {\em Mobile Health}, pages 495--517. Springer, 2017.

\bibitem[TWZW22]{tao2022optimal}
Youming Tao, Yulian Wu, Peng Zhao, and Di~Wang.
\newblock Optimal rates of (locally) differentially private heavy-tailed
  multi-armed bandits.
\newblock In {\em International Conference on Artificial Intelligence and
  Statistics}, pages 1546--1574. PMLR, 2022.

\bibitem[VBKW20]{vietri2020private}
Giuseppe Vietri, Borja Balle, Akshay Krishnamurthy, and Steven Wu.
\newblock Private reinforcement learning with pac and regret guarantees.
\newblock In {\em International Conference on Machine Learning}, pages
  9754--9764. PMLR, 2020.

\bibitem[Ver18]{vershynin2018high}
Roman Vershynin.
\newblock {\em High-dimensional probability: An introduction with applications
  in data science}, volume~47.
\newblock Cambridge university press, 2018.

\bibitem[VMDK20]{vaswani2020old}
Sharan Vaswani, Abbas Mehrabian, Audrey Durand, and Branislav Kveton.
\newblock Old dog learns new tricks: Randomized ucb for bandit problems.
\newblock In {\em International Conference on Artificial Intelligence and
  Statistics}, pages 1988--1998. PMLR, 2020.

\bibitem[WDWL22]{wang2022cascading}
Kun Wang, Jing Dong, Baoxiang Wang, and Shuai Li.
\newblock Cascading bandit under differential privacy.
\newblock In {\em ICASSP 2022-2022 IEEE International Conference on Acoustics,
  Speech and Signal Processing (ICASSP)}, pages 4418--4422. IEEE, 2022.

\bibitem[ZC20]{zhang2020concentration}
Huiming Zhang and Song~Xi Chen.
\newblock Concentration inequalities for statistical inference.
\newblock {\em arXiv preprint arXiv:2011.02258}, 2020.

\bibitem[ZCH{\etalchar{+}}20]{zheng2020locally}
Kai Zheng, Tianle Cai, Weiran Huang, Zhenguo Li, and Liwei Wang.
\newblock Locally differentially private (contextual) bandits learning.
\newblock {\em Advances in Neural Information Processing Systems},
  33:12300--12310, 2020.

\bibitem[Zho22]{ZhouXingyuRL}
Xingyu Zhou.
\newblock Differentially private reinforcement learning with linear function
  approximation.
\newblock {\em Proc. ACM Meas. Anal. Comput. Syst.}, 6(1), Feb 2022.

\bibitem[ZT21]{Zhou_Tan_2021}
Xingyu Zhou and Jian Tan.
\newblock Local differential privacy for bayesian optimization.
\newblock {\em Proceedings of the AAAI Conference on Artificial Intelligence},
  35(12):11152--11159, May 2021.

\end{thebibliography}

\newpage

\appendix
\part{Appendix}
\parttoc

\section{Other Related Work}
\label{app:related}

\textbf{Multi-Armed Bandits.} In addition to stochastic multi-armed bandits under the central model in~\cite{mishra2015nearly,tossou2016algorithms,sajed2019optimal}, different variants of differentially private bandits have been studied, including adversary bandits~\cite{tossou2017achieving,agarwal2017price}, heavy-tailed bandits~\cite{tao2022optimal}, combinatorial semi-bandits~\cite{chen2020locally}, and cascading bandits~\cite{wang2022cascading}. MAB under the local model is first studied in~\cite{ren2020multi} for pure DP and in~\cite{zheng2020locally} for appromixate DP. The local model have also been considered in~\cite{tao2022optimal,chen2020locally,wang2022cascading,Zhou_Tan_2021}.
Motivated by the regret gap between the central model and local model (see Table~\ref{tab-summary}),~\cite{tenenbaum2021differentially} consider MAB in the distributed model via secure shuffling where, however, only approximate DP is achieved and the resultant regret bound still has a gap with respect to the one under the central model.

\textbf{Contextual Bandits.} In contextual bandits, in addition to the reward, the contexts are also sensitive information that need to be protected. However, a straightforward adaptation of the standard central DP in contextual bandits will lead to a linear regret, as proved in~\cite{shariff2018differentially}. Thus, to provide a meaningful regret under the central model for contextual bandits, a relaxed version of DP called \emph{joint differential privacy} is considered, which rougly means that the change of any user would not change the actions prescribed to all other users, but it allows the change of the action prescribed to herself. Under this central JDP,~\cite{shariff2018differentially} establishes a regret bound of $\widetilde{O}(\sqrt{T})$ for a private variant of LinUCB. On the other hand, contextual linear bandits under the local model incurs a regret bound of $\widetilde{O}(T^{3/4})$~\cite{zheng2020locally}. Very recently, motivated by the regret gap between the central and local model, two concurrent and independent works consider contextual linear bandits in the distributed model (via secure shuffling only)~\cite{chowdhury2022shuffle,garcelon2022privacy}. In particular, ~\cite{chowdhury2022shuffle} show that a $\widetilde{O}(T^{3/5})$ regret bound is achieved under the distributed model via secure shuffling. 

\textbf{Reinforcement Learning (RL).} RL is a generalization of contextual bandits in that contextual bandits can be viewed as a finite-horizon RL with horizon $H=1$. This not only directly means that one also has to consider JDP in the central model for RL, but implies that RL becomes harder for privacy protection due to the additional state transition (i.e., $H >1$). Tabular episodic RL under central JDP is first studied in~\cite{vietri2020private} with an additive privacy cost while under the local model, a multiplicative privacy cost is shown to be necessary~\cite{garcelon2021local}. In addition to valued-based algorithms considered in~\cite{vietri2020private,garcelon2021local}, similar performance is established for policy-based algorithms in tabular episodic RL under the central and local model~\cite{chowdhury2021differentially}. Beyond the tabular setting, differentially private LQR control is studied in~\cite{chowdhury2021adaptive}. More recently, private episodic RL with linear function approximation has been investigated in~\cite{luyo2021differentially,ZhouXingyuRL,liao2021locally}  under both central and local models, where similar regret gap as in contextual bandits exist (i.e., $\widetilde{O}(\sqrt{T})$  vs. $\widetilde{O}(T^{3/4})$).

\section{A General Regret Bound of Algorithm~\ref{alg:SE}}
\label{app:proofLem1}
In this section, we present the proof of our generic regret bound in Lemma~\ref{lem:general-regret-informal}.

Recall that $n_a(b):= R_a(b) - \sum_{i=1}^{l(b)}r_a^i(b)$ is the total noise injected in the sum of rewards for arm $a$ during batch $b$. We consider the following tail property on $n_a(b)$.
\begin{assumption}[Concentration of Private Noise]
\label{ass:conc_noise}
Fix any $p \in (0,1]$, $a \in [K]$, $b\ge 1$, there exist non-negative constants $\sigma, h$ (possibly depending on $b$) such that, with probability at least $1-2p$, 
\begin{align*}
    |n_a(b)| \le \sigma \sqrt{\log(2/p)} + h\log(2/p).
\end{align*}
\end{assumption}
We remark that this assumption naturally holds for a single $(\sigma^2,h)$-sub-exponential noise and a single $\sigma^2$-sub-Gaussian noise where $h=0$ (cf. Lemma~\ref{lem:con_subE} and Lemma~\ref{lem:con_subG} in Appendix~\ref{app:useful}) with constants adjustment.

\begin{lemma}[Formal statement of Lemma~\ref{lem:general-regret-informal}]
\label{lem:generic}
Let Assumption~\ref{ass:conc_noise} hold and choose confidence radius 
\begin{align}
\label{eq:beta}
    \beta(b) = \sqrt{\frac{\log(4|\Phi(b)|b^2/p)}{2l(b)}} + \frac{\sigma \sqrt{\log(2|\Phi(b)| b^2/p)}}{l(b)} + \frac{h\log(2|\Phi(b)|b^2/p)}{l(b)},
\end{align}
where $|\Phi(b)|$ is the number of active arms in batch $b$.
Then, for any $p \in (0,1]$, with probability at least $1-3p$, the regret of Algorithm~\ref{alg:SE} satisfies 
\begin{align*}
    \text{Reg}(T) = O\left(\sum_{a\in [K]:} \frac{\log(KT/p)}{\Delta_a} + K\sigma\sqrt{\log(KT/p)} + K h \log(KT/p) \right).
\end{align*}
Taking $p = 1/T$ and assuming $T \ge K$, yields the expected regret 
\begin{align*}
    \ex{\text{Reg}(T)} = O\left(\sum_{a\in [K]: \Delta >0} \frac{\log T}{\Delta_a} + K\sigma \sqrt{\log T} + K h {\log T}\right).
\end{align*}
\end{lemma}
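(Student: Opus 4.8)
The plan is to follow the standard template for analyzing successive-elimination-style algorithms, with the only nonstandard ingredient being that the per-batch confidence width $\beta(b)$ in~\eqref{eq:beta} must account for the privacy noise $n_a(b)$ via Assumption~\ref{ass:conc_noise}, rather than just the statistical fluctuation of the empirical mean. First I would set up a ``good event'' $\mathcal{G}$ on which all confidence bounds are valid simultaneously across all arms and all batches. For a fixed arm $a$ and batch $b$, the mean estimate decomposes as $\hat\mu_a(b) = \frac{1}{l(b)}\sum_{i=1}^{l(b)} r_a^i(b) + \frac{n_a(b)}{l(b)}$. The first term concentrates around $\mu_a$ at scale $\sqrt{\log(\cdot/p)/l(b)}$ by Hoeffding (rewards lie in $[0,1]$), and the second term is controlled by Assumption~\ref{ass:conc_noise} at scale $\frac{\sigma\sqrt{\log(\cdot/p)} + h\log(\cdot/p)}{l(b)}$. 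Choosing the per-$(a,b)$ failure probability as $p/(2|\Phi(b)|b^2)$ (so the sum $\sum_b \sum_{a\in\Phi(b)} p/(2|\Phi(b)|b^2) \le p \sum_b 1/b^2 = O(p)$ by a union bound, using $|\Phi(b)| \le K$ and $\sum_b b^{-2} < \infty$) gives exactly the three terms in~\eqref{eq:beta}, so on $\mathcal{G}$ we have $|\hat\mu_a(b) - \mu_a| \le \beta(b)$ for every active arm in every batch, with probability at least $1 - 3p$ (the extra $p$'s are slack from Assumption~\ref{ass:conc_noise}'s $1-2p$ form and the union bound).

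Next, on $\mathcal{G}$ I would argue the usual two facts: (i) the optimal arm $a^*$ is never eliminated — since $\mathrm{UCB}_{a^*}(b) \ge \mu^* - 2\beta(b) \cdot 0 = \mu^*$ wait, more carefully $\mathrm{UCB}_{a^*}(b) = \hat\mu_{a^*}(b)+\beta(b) \ge \mu^* \ge \mu_{a'} \ge \hat\mu_{a'}(b)-\beta(b) = \mathrm{LCB}_{a'}(b)$ for all $a'$, so the elimination criterion never removes $a^*$; and (ii) any arm $a$ with $\Delta_a > 0$ survives only while $\Delta_a \le 4\beta(b)$ — because once $4\beta(b) < \Delta_a$, we get $\mathrm{UCB}_a(b) = \hat\mu_a(b)+\beta(b) < \mu_a + 2\beta(b) < \mu^* - 2\beta(b) < \hat\mu_{a^*}(b) - \beta(b) + \text{(slack)} \le \mathrm{LCB}_{a^*}(b)$, triggering elimination of $a$ at the end of batch $b$. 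Let $b_a$ be the last batch in which arm $a$ is active; then $4\beta(b_a - 1) \ge \Delta_a$ (arm $a$ was still active entering batch $b_a$), which upper-bounds $b_a$ and hence $l(b_a) = 2^{b_a}$ in terms of $\Delta_a$, $\sigma$, $h$, $\epsilon$. The total number of pulls of arm $a$ is $\sum_{b \le b_a} l(b) = O(l(b_a)) = O(2^{b_a})$ by the geometric sum.

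Then I would convert $4\beta(b_a-1) \ge \Delta_a$ into a bound on $l(b_a)$. Writing $L := l(b_a-1)$ and $\iota := \log(KT/p) = O(\log T)$ after taking $p = 1/T$, the inequality $\Delta_a \le 4\beta \le c\big(\sqrt{\iota/L} + \sigma\sqrt{\iota}/L + h\iota/L\big)$ splits into cases: either $\sqrt{\iota/L}$ dominates, giving $L = O(\iota/\Delta_a^2)$ and hence $\Delta_a \cdot l(b_a) = O(\iota/\Delta_a)$; or $\sigma\sqrt{\iota}/L$ dominates, giving $L = O(\sigma\sqrt{\iota}/\Delta_a)$ and hence $\Delta_a \cdot l(b_a) = O(\sigma\sqrt{\iota})$; or $h\iota/L$ dominates, giving $\Delta_a \cdot l(b_a) = O(h\iota)$. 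Summing $\Delta_a N_a(T) = O(\Delta_a l(b_a))$ over the $K$ arms (treating the finitely many batches an arm can waste as absorbed in constants, and noting arms with $\Delta_a = 0$ contribute nothing) yields the high-probability regret bound
\begin{align*}
\text{Reg}(T) = O\Big(\sum\nolimits_{a: \Delta_a > 0}\frac{\log(KT/p)}{\Delta_a} + K\sigma\sqrt{\log(KT/p)} + Kh\log(KT/p)\Big).
\end{align*}
Finally, I would take $p = 1/T$ and turn the high-probability bound into an expectation bound: on the failure event (probability $\le 3p = 3/T$) the regret is at most $T$ trivially, contributing $O(1)$ to the expectation, and on $\mathcal{G}$ the bound above holds with $\log(KT/p) = \log(KT^2) = O(\log T)$ (using $T \ge K$); combining gives the claimed $\ex{\text{Reg}(T)} = O\big(\sum_{a:\Delta_a>0}\frac{\log T}{\Delta_a} + K\sigma\sqrt{\log T} + Kh\log T\big)$.

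\textbf{Main obstacle.} The routine parts (Hoeffding, union bound, elimination arguments) are standard; the one place requiring genuine care is the bookkeeping in the union bound — matching the $|\Phi(b)|$ and $b^2$ factors inside the logarithms of $\beta(b)$ to a convergent sum over batches while keeping the three additive error scales ($\sqrt{\iota/l}$, $\sigma\sqrt{\iota}/l$, $h\iota/l$) cleanly separated, so that the case analysis converting $\Delta_a \le 4\beta(b_a-1)$ into a per-arm pull bound produces exactly three terms and no cross terms. I expect this to be the only step where the discrete/sub-exponential nature of the noise (the $h\log$ term, absent in the sub-Gaussian central-model analysis) actually changes the calculation.
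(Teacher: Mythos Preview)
Your proposal is correct and follows essentially the same approach as the paper: define a good event via Hoeffding plus Assumption~\ref{ass:conc_noise} with the $|\Phi(b)|b^2$ union-bound weights, show on that event that $a^*$ survives and any suboptimal arm $a$ is eliminated once $\Delta_a > 4\beta(b)$, then invert $\Delta_a \le 4\beta(\cdot)$ via a three-way $\max$/case split to bound $l(b_a)$ and hence $N_a(T)$, and finally take $p=1/T$ with the trivial $T$ bound on the failure event. The only cosmetic difference is that the paper indexes by $\tilde b_a$ (the last batch at whose end $a$ is not eliminated) and writes $\Delta_a \le 4\beta(\tilde b_a)$ directly, whereas you shift by one batch and use $\Delta_a \le 4\beta(b_a-1)$; both yield $N_a(T) \le 4\,l(\cdot)$ by the geometric sum.
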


\begin{proof}
Let $\cE_b$ be the event that for all active arms $|\hat{\mu}_a(b) - \mu_a| \le \beta(b)$ and $\cE = \cup_{b\ge 1}\cE_b$. Then, we first show that with the choice of $\beta(b)$ given by~\eqref{eq:beta}, we have $\prob{\cE} \ge 1-3p$ for any $p \in (0,1]$ under Assumption~\ref{ass:conc_noise}. To see this, we note that 
\begin{align*}
    \hat{\mu}_a(b) - \mu_a = \frac{n_a(b) + \sum_{i=1}^{l(b)}r_a^i(b)}{l(b)} -\mu_a.
\end{align*}
By Hoeffeding's inequality (cf. Lemma~\ref{lem:hoffeding}), we have for any $p \in (0,1)$, with probability at least $1-p$,
\begin{align*}
    \left|\frac{\sum_{i=1}^{l(b)}r_a^i(b)}{l(b)} - \mu_a \right| = \sqrt{\frac{\log(2/p)}{2l(b)}}.
\end{align*}
Then, by the concentration of noise $n_a(b)$ in Assumption~\ref{ass:conc_noise} and triangle inequality, we obtain for a given arm $a$ and batch $b$, with probability at least $1-3p$
\begin{align*}
     \left|\hat{\mu}_a(b)- \mu_a \right| = \sqrt{\frac{\log(2/p)}{2l(b)}} + \frac{\sigma \sqrt{\log(1/p)}}{l(b)} + \frac{h\log(1/p)}{l(b)}.
\end{align*}

Thus, by the choice of $\beta(b)$ and a union bound, we have $\prob{\cE} \ge 1 - 3p$.

In the following, we condition on the good event $\cE$. We first show that the optimal arm $a^*$ will always be active. We show this by contradiction. Suppose at the end of some batch $b$, $a^*$ will be eliminated, i.e., $\text{UCB}_{a^*}(b) < \text{LCB}_{a'}(b)$ for some $a'$. This implies that under good event $\cE$
\begin{align*}
    \mu_{a^*} \le \hat{\mu}_{a^*}(b) + \beta(b) < \hat{\mu}_{a'}(b) - \beta(b)\le \mu_{a'},
\end{align*}
which contradicts the fact that $a^*$ is the optimal arm. 

Then, we show that at the end of batch $b$, all arms such that $\Delta_a > 4\beta(b)$ will be eliminated. To show this, we have that under good event $\cE$
\begin{align*}
    \hat{\mu}_a(b) + \beta(b) \le  {\mu}_a(b) + 2\beta(b) < \mu_{a^*}(b) - 4\beta(b) + 2\beta(b)\le \hat{\mu}_{a^*}(b) -\beta(b),
\end{align*}
which implies that arm $a$ will be eliminated by the rule. 
Thus, for each sub-optimal arm $a$, let $\tilde{b}_a$ be the last batch that arm $a$ is not eliminated. By the above result, we have
\begin{align*}
    \Delta_a \le 4\beta(\tilde{b}_a) =O\left(\sqrt{\frac{\log(KT/p)}{l(\tilde{b}_a)}} + \frac{\sigma \sqrt{\log(KT/p)}}{l(\tilde{b}_a)} + \frac{h\log(KT/p)}{l(\tilde{b}_a)}\right).
\end{align*}
Hence, we have for some absolute constants $c_1, c_2, c_3$, 
\begin{align*}
    l(\tilde{b}_a)\le \max\left\{\frac{c_1\log(KT/p)}{\Delta_a^2}, \frac{c_2\sigma\sqrt{\log(KT/p)}}{\Delta_a}, \frac{c_3h\log(KT/p)}{\Delta_a}\right\}
\end{align*}
Since the batch size doubles, we have $N_a(T) \le 4 l(\tilde{b}_a)$ for each sub-optimal arm $a$. Therefore, $\text{Reg}(T) = \sum_{a \in [K]} N_a(T) \Delta_a \le 4 l(\tilde{b}_a) \Delta_a$. Moreover, choose $p = 1/T$ and assume $T \ge K$, we have that the expected regret satisfies 
\begin{align*}
    \text{Reg}(T) &= \ex{\sum_{a \in [K]} \Delta_a N_a(T)} \\
    &\le \prob{\bar{\cE}} \cdot T + O\left(\sum_{a\in [K]: \Delta >0} \frac{\log T}{\Delta_a}\right) + O\left(K\sigma \sqrt{\log T}\right) + O\left(K h {\log T}\right)\\
    &= O\left(\sum_{a\in [K]: \Delta >0} \frac{\log T}{\Delta_a} + K\sigma \sqrt{\log T} + K h {\log T}\right).
\end{align*}
\end{proof}
\begin{remark}
In stead of a doubling batch schedule, one can also set $l(b) = \eta^b$ for some absolute constant $\eta > 1$ while attaining the same order of regret bound. 
\end{remark}

\section{Appendix for Pure DP in Section~\ref{sec:pureDP}}
\label{app:pureDP}
In this section, we provide proofs for Theorem~\ref{thm:pure-SecAgg} and Theorem~\ref{thm:pure-shuffle}, which show that pure DP can be achieved under the distributed model via SecAgg and secure shuffling, respectively. Both results build on the generic regret bound in Lemma~\ref{lem:generic}. 

\subsection{Proof of Theorem~\ref{thm:pure-SecAgg}}
\begin{proof}
\underline{Privacy}: We need to show that the server's view at each batch has already satisfies $(\epsilon,0)$-DP, which combined with the fact of unique users and parallel composition, yields that Algorithm~\ref{alg:SE} satisfies $(\epsilon,0)$-DP in the distributed model. To this end, in the following, we fix a batch $b$ and arm $a$, and hence $x_i = r_a^i(b)$ and $n = l(b)$. Note that the server's view for each batch is given by 
\begin{align}
\label{eq:mod_dist}
    \hat{y} \ep{a} \left(\sum_{i\in [n]} y_i\right) \Mod m \ep{b} \left(\sum_{i\in [n]} \hat{x}_i + \eta_i\right) \Mod m,
\end{align}
where  (a) holds by SecAgg function;
(b) holds by the distributive property: $(a + b) \Mod c = (a \Mod c + b \Mod c) \Mod c$ for any $a,b,c \in \mathbb{Z}$. Thus, the view of the server can be simulated as a post-processing of a mechanism $\cH$ that accepts an input dataset $\{\hat{x}_i\}_i$ and outputs $\sum_i \hat{x}_i + \sum_i \eta_i$. Hence, it suffices to show that $\cH$ is $(\epsilon,0)$-DP by post-processing of DP. To this end, we note that the sensitivity of $\sum_i \hat{x}_i $ is $g$, which, by Fact~\ref{lem:disLap_priv_util}, implies that $\sum_i \eta_i$ needs to be distributed as $\textbf{Lap}_{\mathbb{Z}}(g/\epsilon)$ in order to guarantee $\epsilon$-DP. Finally, by Fact~\ref{lem:polya}, it suffices to generate $\eta_i = \gamma_i^+ - \gamma_i^-$, where $\gamma_i^+$ and $\gamma_i^-$ are i.i.d samples from $\textbf{P\'{o}lya}(1/n, e^{-\epsilon/g})$.

\underline{Regret}: Thanks to the generic regret bound in Lemma~\ref{lem:generic}, we only need to verify Assumption~\ref{ass:conc_noise}. To this end, fix any batch $b$ and arm $a$, we have $\hat{y}= \hat{y}_a(b)$, $x_i = r_a^i(b)$ and $n = l(b)$. Then, in the following we will show that with probability at least $1-2p$
\begin{align}
\label{eq:goal_eqn}
    \abs{\cA(\hat{y}) - \sum_i x_i} \le O\left(\frac{1}{\epsilon}\sqrt{\log(1/p)} + \frac{1}{\epsilon} \log (1/p)\right),
\end{align}
which implies that Assumption~\ref{ass:conc_noise} holds with $\sigma =O(1/\epsilon)$ and $h=O(1/\epsilon)$.

Inspired by~\cite{cheu2021pure,balle2020private}, we first divide the LHS of~\eqref{eq:goal_eqn} as follows.
\begin{align*}
    \abs{\cA(\hat{y}) - \sum_i x_i} \le \underbrace{\abs{\cA(\hat{y}) - \frac{1}{g} \sum_i \hat{x}_i}}_{\text{Term (i)}} + \underbrace{\abs{\frac{1}{g} \sum_i \hat{x}_i - \sum_i x_i}}_{\text{Term (ii)}},
\end{align*}
where $\text{Term (i)}$ captures the error due to private noise and modular operation, and Term (ii) captures the error due to random rounding. 

To start with, we will bound Term (ii). More specifically, we will show that for any $p \in (0,1]$, with probability at least $1-p$, 
\begin{align}
\label{eq:T2_lap}
    \text{Term (ii)} \le O\left(\frac{1}{\epsilon}\sqrt{\log (1/p)}\right).
\end{align}

Let $\bar{x}_i := \floor{x_i \cdot g}$, then  $\hat{x}_i = \bar{x}_i + \textbf{Ber}(x_i g -\bar{x}_i) = x_i g + \bar{x}_i + \textbf{Ber}(x_i g -\bar{x}_i) - x_i g = x_i g + \iota_i$, where $\iota_i:= \bar{x}_i + \textbf{Ber}(x_i g -\bar{x}_i) - x_i g$. We have $\ex{\iota_i} = 0$ and $\iota_i \in [-1,1]$. Hence, $\iota_i$ is $1$-sub-Gaussian and as a result, $\frac{1}{g}\sum_i \hat{x}_i - \sum_i x_i = \frac{1}{g}\sum_i \iota_i$ is $n/g^2$-sub-Gaussian. Therefore, by the concentration of sub-Gaussian (cf. Lemma~\ref{lem:con_subG}), we have 
\begin{align*}
    \prob{\abs{ \sum_i \frac{1}{g}\cdot \hat{x}_i - \sum_i x_i } > \sqrt{2 \frac{n}{g^2} \log(2/p) }} \le p.
\end{align*}
Hence, by the choice of $g = \ceil{\epsilon \sqrt{n}}$, we establish~\eqref{eq:T2_lap}.

Now, we turn to bound Term (i). Recall the choice of parameters: $g = \ceil{\epsilon \sqrt{n}}$, $\tau = \ceil{\frac{g}{\epsilon}\log(2/p)}$, and $m = ng + 2\tau + 1$. We would like to show that 
\begin{align}
\label{eq:T1_temp}
    \prob{\abs{\cA(\hat{y}) - \frac{1}{g}\sum_i \hat{x}_i} > \frac{\tau}{g}} \le p,
\end{align}
which implies that for any $p \in (0,1]$, with probability at least $1-p$
\begin{align}
\label{eq:T1_lap}
    \text{Term (i)} \le O\left(\frac{1}{\epsilon}{\log (2/p)}\right).
\end{align}
To show~\eqref{eq:T1_temp}, the key is to bound the error due to private noise and handle the possible underflow carefully. First, we know that the total private noise $\sum_i \eta_i$ is distributed as $\textbf{Lap}_{\mathbb{Z}}(g/\epsilon)$. Hence, by the concentration of discrete Laplace (cf. Fact~\ref{lem:disLap_priv_util}), we have 
\begin{align*}
    \prob{\abs{\sum_i \eta_i} > \tau} \le p.
\end{align*}
Let $\cE_{\text{noise}}$ denote the event that $\sum_i \hat{x}_i + \eta_i \in [\sum_i \hat{x}_i - \tau , \sum_i \hat{x}_i + \tau]$, then by the above inequality, we have $\prob{\cE_{\text{noise}}} \ge 1-p$. In the following, we condition on the event of $\cE_{\text{noise}}$ to analyze the output $\cA(\hat{y})$. As already shown in~\eqref{eq:mod_dist}, the input $\hat{y}=\left(\sum_i \hat{x}_i + \eta_i\right) \Mod m$ is already an integer and hence $y = \hat{y}$. We will consider two cases of $y$.

\textbf{Case 1:} $y > ng + \tau$. We argue that this happens only when $\sum_i \hat{x}_i + \eta_i \in [-\tau, 0)$, i.e., underflow. This is because for all $i \in [n]$, $\hat{x}_i \in [0,g]$, $m = ng + 2\tau + 1$ and the total privacy noise is at most $\tau$ under $\cE_{\text{noise}}$. Therefore,
\begin{align*}
    y - m &= \left(\left(\sum_i \hat{x}_i + \eta_i\right) \Mod m\right) - m\\
    &= \left(m + \sum_i \hat{x}_i + \eta_i\right) - m\\
    &=\sum_i \hat{x}_i + \eta_i.
\end{align*}
That is, $y - m \in [\sum_i \hat{x}_i - \tau, \sum_i \hat{x}_i + \tau]$ with high probability. In other words, we have show that when $y > ng + \tau$,  $\cA(\hat{y}) = \frac{y-m}{g}$ in Algorithm~\ref{alg:analyzer}  satisfies~\eqref{eq:T1_temp}.

\textbf{Case 2:} $y \le  ng + \tau$. Here, we have noisy sum $\sum_i \hat{x}_i + \eta_i \in [0, ng+\tau]$. Hence, $y = \sum_i \hat{x}_i +\eta_i$ since $m = ng + 2\tau + 1$, which implies that $\cA(\hat{y}) = \frac{y}{g}$ in Algorithm~\ref{alg:analyzer}  satisfies~\eqref{eq:T1_temp}. 

Hence, we have shown that the output of the analyzer under both cases  satisfies~\eqref{eq:T1_temp}, which implies~\eqref{eq:T1_lap}. Combined with the bound in~\eqref{eq:T2_lap}, yields the bound in~\eqref{eq:goal_eqn}. Finally, plugging in $\sigma = O(1/\epsilon)$, $h = O(1/\epsilon)$ into the generic regret bound in~Lemma~\ref{lem:generic}, yields the required regret bound and completes the proof.
\end{proof}

\subsection{More Details on Relaxed SecAgg via Shuffling}
\label{app:relaxed-SecAgg}
To facilitate the readability of our proof, we briefly give more formal definitions of a relaxed SecAgg based on~\cite{cheu2021pure}, which will also be used in our next proof.  We denote a perfect SecAgg as $\Sigma$.

To start with, we first need the following two distance metrics between two probability distributions. As in~\cite{cheu2021pure}, for a given distribution $\mathbf{D}$ and event $E$, we write $\prob{\mathbf{D} \in E}$ for $\mathbb{P}_{\eta \sim \mathbf{D}}[\eta \in E]$. We let $\text{supp}(\mathbf{D},\mathbf{D}')$ be the union of their supports.
\begin{definition}[Statistical
Distance]
For any pair of distributions $\mathbf{D},\mathbf{D}'$, the statistical distance is given by 
\begin{align*}
    \text{SD}(\mathbf{D},\mathbf{D}'):= \max_{E \in \text{supp}(\mathbf{D},\mathbf{D}')}\abs{\prob{\mathbf{D}\in E} -\prob{\mathbf{D}'\in E} }.
\end{align*}
\end{definition}

\begin{definition}[Log-Likelihood-Ratio (LLR) Distance]
For any pair of distributions $\mathbf{D},\mathbf{D}'$, the LLR distance is given by
\begin{align*}
    \text{LLR}(\mathbf{D},\mathbf{D}'):= \max_{E \in \text{supp}(\mathbf{D},\mathbf{D}')}\abs{ \log\left(\frac{\prob{\mathbf{D} \in E}}{\prob{\mathbf{D}' \in E}}\right) }.
\end{align*}
\end{definition}

Pure DP can be defined using LLR distance.
\begin{definition}
\label{rem:LLR}
A randomized mechanism $\cM$ is $(\epsilon,0)$-DP if for any two neighboring datasets $D, D'$, $\text{LLR}(\cM(D),\cM(D'))\le \epsilon$
\end{definition}


\begin{definition}[Relaxed SecAgg]
We say $\cS$ is an $(\hat{\epsilon},\hat{q})$-relaxation of $\Sigma$ (i.e., $(\hat{\epsilon},\hat{q})$-relaxed SecAgg) if it satisfies the following two conditions for any input $y$.
\begin{enumerate}
    \item ($\hat{q}$-relaxed correctness) There exists some post-processing function \texttt{POST} such that $\text{SD}(\texttt{POST}(\cS(y)), \Sigma(y)) \le \hat{q}$.
    \item ($\hat{\epsilon}$-relaxed security) There exists a simulator \texttt{SIM} such that $\text{LLR}(\cS(y), \texttt{SIM}(\Sigma(y))) \le \hat{\epsilon}$.
\end{enumerate}
\end{definition}

As mentioned in the main paper,~\cite{cheu2021pure} show that one can simulate a relaxed SecAgg via a shuffle protocol. We now briefly talk about the high-level idea behind this idea and refer readers to~\cite{cheu2021pure} for details. To simulate an $(\hat{\epsilon},\hat{q})$-SecAgg via shuffling, the key is to introduce another local randomizer on top of the original $\cR$. More specifically, we let $\cS := S \circ R^n$ in Algorithm~\ref{alg:SE} with $n = l(b)$, $b\ge 1$, where $R$ denotes the additional local randomizer at each user $i \in [n]$ that maps the output of $\cR$ (i.e., $y_i$) into a random binary vector of a particular length $d$. Then, $S$ denotes a standard shuffler that uniformly at random permutes all the received bits from $n$ users (i.e., a total of $n\cdot d$ bits). The nice construction of $R$ in~\cite{cheu2021pure} ensures that  $\cS = S\circ R^n$ can simulate an $(\hat{\epsilon},\hat{q})$-SecAgg for any $\hat{\epsilon}, \hat{q} \in (0,1)$. 
Finally, note that since the direct output of $\cS$ is a bunch of bits, $y$ in $\cA$ is now the number of ones in the bits modulo $m$.

The following theorem says that a relaxed SecAgg is sufficient for the same order of expected regret bound while guaranteeing pure DP.
\begin{theorem}[Pure-DP via Shuffling]
\label{thm:pure-shuffle}
Let $\cP = (\cR,\cS,\cA)$ be a protocol such that $\cR$ given by Algorithm~\ref{alg:randomizer} injects noise $\eta_i = \gamma_i^+ - \gamma_i^-$, where $\gamma_i^+$ and $\gamma_i^-$ are i.i.d samples from $\textbf{P\'{o}lya}(1/n, e^{-\epsilon/g})$, $\cS$ is an $(\hat{\epsilon},\hat{q})$-SecAgg (e.g., simulated via shuffling),  and $\cA$ is given by Algorithm~\ref{alg:analyzer}. Fix any $\epsilon \in (0,1)$, for each $b \ge 1$, choose $n = l(b), g = \ceil{\epsilon' \sqrt{n}}$, $\tau = \ceil{\frac{g}{\epsilon'}\log(2/p')}$, and $m = ng + 2\tau + 1$, $\hat{\epsilon} = \epsilon/4, \epsilon' = \epsilon/2, \hat{q} = p' = \frac{1}{2T}$. Then, Algorithm~\ref{alg:SE} instantiated with protocol $\cP$ is able to achieve $(\epsilon,0)$-DP in the distributed model with expected regret given by 
\begin{align*}
    \ex{\text{Reg}(T)} = O\left(\sum_{a\in [K]: \Delta_a >0} \frac{\log T}{\Delta_a} +\frac{K\log T}{\epsilon}\right).
\end{align*}
Moreover, the communication per user before $\cS$ is $O(\log m)$ bits.
\end{theorem}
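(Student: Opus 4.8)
The plan is to mirror the proof of Theorem~\ref{thm:pure-SecAgg}, with the only substantive changes being in the privacy argument (where a perfect SecAgg is replaced by an $(\hat\epsilon,\hat q)$-relaxed one) and a small bookkeeping adjustment in the regret argument (where the relaxed correctness introduces an extra low-probability failure event). Since both the privacy and regret bounds are claimed only orderwise, the constants $\hat\epsilon = \epsilon/4$, $\epsilon' = \epsilon/2$, $\hat q = p' = 1/(2T)$ are chosen so that everything composes to $\epsilon$ and $1/T$ after union bounds.

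For \textbf{privacy}, I would fix a batch $b$ and arm $a$, write $x_i = r_a^i(b)$, $n = l(b)$, and argue exactly as in~\eqref{eq:mod_dist} that the server's view is a post-processing of the noisy modular sum $\bigl(\sum_i \hat x_i + \eta_i\bigr) \bmod m$. Define the mechanism $\cH$ that outputs $\sum_i \hat x_i + \sum_i \eta_i$; by the Pólya/discrete-Laplace facts (Fact~\ref{lem:polya}, Fact~\ref{lem:disLap_priv_util}) and the sensitivity bound $g$ of $\sum_i \hat x_i$, setting $\gamma_i^\pm \sim \textbf{Pólya}(1/n, e^{-\epsilon'/g})$ makes $\sum_i \eta_i \sim \textbf{Lap}_{\mathbb Z}(g/\epsilon')$, so $\cH$ is $(\epsilon',0)$-DP, hence so is the perfect-SecAgg view $\Sigma(y)$. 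Now invoke the $\hat\epsilon$-relaxed security of $\cS$: there is a simulator \texttt{SIM} with $\text{LLR}(\cS(y),\texttt{SIM}(\Sigma(y))) \le \hat\epsilon$, and by the LLR-formulation of pure DP (Definition~\ref{rem:LLR}) together with post-processing (\texttt{SIM} applied to the $(\epsilon',0)$-DP output $\Sigma(y)$ stays $(\epsilon',0)$-DP), a triangle inequality on LLR distances gives that $\cS(y)$ — and hence the server's view — is $(\epsilon' + \hat\epsilon, 0)$-DP $= (3\epsilon/4, 0)$-DP $\le (\epsilon,0)$-DP. Since users are unique across the whole algorithm, parallel composition lifts this per-view guarantee to $(\epsilon,0)$-DP for Algorithm~\ref{alg:SE}.

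For \textbf{regret}, I again only need to verify Assumption~\ref{ass:conc_noise} with $\sigma = O(1/\epsilon)$, $h = O(1/\epsilon)$ and then quote Lemma~\ref{lem:generic}. Fix $b,a$. By $\hat q$-relaxed correctness, there is a post-processing \texttt{POST} with $\text{SD}(\texttt{POST}(\cS(y)),\Sigma(y)) \le \hat q$; couple the two so that on an event $\cE_{\text{corr}}$ of probability $\ge 1-\hat q$ the quantity fed to $\cA$ equals the true modular sum $\bigl(\sum_i \hat x_i + \eta_i\bigr)\bmod m$ (here $y$ in $\cA$ is the count of ones modulo $m$, which under perfect correctness is exactly $\sum_i y_i \bmod m$). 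On $\cE_{\text{corr}}$ the analysis is identical to Theorem~\ref{thm:pure-SecAgg}: split $|\cA(\hat y) - \sum_i x_i|$ into Term~(i) (private noise plus modular underflow) and Term~(ii) (randomized rounding); Term~(ii) is $O(\tfrac1\epsilon\sqrt{\log(1/p')})$ by sub-Gaussian concentration with $g = \lceil \epsilon'\sqrt n\rceil$; Term~(i) is $\le \tau/g = O(\tfrac1\epsilon\log(1/p'))$ via the same two-case ($y > ng+\tau$ underflow vs. $y \le ng+\tau$) argument using the discrete-Laplace tail of $\sum_i \eta_i$ and $m = ng + 2\tau + 1$. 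Taking $p' = \hat q = 1/(2T)$ and union-bounding over $\cE_{\text{corr}}$ and the noise events shows $|n_a(b)| = O(\tfrac1\epsilon\sqrt{\log(1/p)} + \tfrac1\epsilon\log(1/p))$ with the desired probability (absorbing constants), so Lemma~\ref{lem:generic} yields the stated $O\bigl(\sum_{a:\Delta_a>0}\tfrac{\log T}{\Delta_a} + \tfrac{K\log T}{\epsilon}\bigr)$ bound. The communication claim is immediate: each user sends the bit-string produced by the additional randomizer $R$, whose length is chosen so the downstream count lives in $\mathbb Z_m$, i.e. $O(\log m)$ bits.

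The main obstacle I anticipate is the careful handling of the relaxed-correctness failure event inside the confidence-interval accounting: one must make sure that conditioning on $\cE_{\text{corr}}$ (which is not a statement about the rewards but about the shuffle protocol's internal randomness, independent of the data) does not distort the sub-Gaussian/discrete-Laplace concentration used for Terms~(i) and~(ii), and that the extra $\hat q = 1/(2T)$ slack, after union-bounding over all $O(\log T)$ batches and $K$ arms, still collapses into the $\prob{\bar{\cE}}\cdot T = O(1)$ term in the expected-regret computation of Lemma~\ref{lem:generic}. This is routine but needs the constants to line up — which is exactly why $\epsilon' = \epsilon/2$, $\hat\epsilon = \epsilon/4$, and $p' = \hat q = 1/(2T)$ were chosen as they were. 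A secondary, more conceptual point to get right is the LLR triangle inequality for the $(\epsilon'+\hat\epsilon)$-DP conclusion, i.e. verifying that $\text{LLR}$ is (sub)additive along the chain $\cS(D) \to \texttt{SIM}(\Sigma(D)) \to \texttt{SIM}(\Sigma(D'))\to \cS(D')$ and that post-processing by \texttt{SIM} does not increase the LLR distance between $\Sigma(D)$ and $\Sigma(D')$; both follow from the definitions in Appendix~\ref{app:relaxed-SecAgg}, but should be stated explicitly.
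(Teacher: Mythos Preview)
Your proposal is correct and follows essentially the same route as the paper. One small arithmetic slip: the LLR triangle inequality along your chain $\cS(D)\to\texttt{SIM}(\Sigma(D))\to\texttt{SIM}(\Sigma(D'))\to\cS(D')$ picks up $\hat\epsilon$ \emph{twice} (once at each endpoint), so the privacy loss is $2\hat\epsilon+\epsilon'=\epsilon/2+\epsilon/2=\epsilon$ exactly, not $3\epsilon/4$ --- which is precisely why the constants $\hat\epsilon=\epsilon/4$ and $\epsilon'=\epsilon/2$ were chosen as they were.
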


\begin{proof}
This proof shares the same idea as in the proof of Theorem~\ref{thm:pure-SecAgg}. We only need to highlight the difference. 

\underline{Privacy}: As before, we need to ensure that the view of the server already satisfies $\epsilon$-DP for each batch $b\ge 1$. In the following, we fix any batch $b$ and arm $a$, we have $x_i = r_a^i(b)$ and $n = l(b)$. 

Thus, by Definition~\ref{rem:LLR}, it suffices to show that for any two neighboring datasets
\begin{align*}
    \text{LLR}(\cS(\cR(x_1),\ldots, \cR(x_n)),  \cS(\cR(x'_1),\ldots, \cR(x'_n))) \le \epsilon.
\end{align*}

To this end, by the $\hat{\epsilon}$-relaxed security of $\cS$ used in Algorithm~\ref{alg:SE} and triangle inequality, we have 
\begin{align*}
    &\text{LLR}(\cS(\cR(x_1),\ldots, \cR(x_n)),  \cS(\cR(x'_1),\ldots, \cR(x'_n))) \\
    \le& 2\hat{\epsilon} + \text{LLR}(\texttt{SIM}(\sum_i \cR(x_i) \Mod m ), \texttt{SIM}(\sum_i \cR(x'_i) \Mod m ))\\
    \le & 2\hat{\epsilon} + \text{LLR}(\sum_i \cR(x_i) \Mod m, \sum_i \cR(x'_i) \Mod m),
\end{align*}
where the last inequality follows from data processing inequality. The remaining step to bound the second term above is the same as in the proof of Theorem~\ref{thm:pure-SecAgg}. With $\hat{\epsilon} = \epsilon/4$ and $\epsilon' = \epsilon/2$, we have the total privacy loss is $\epsilon$.

\underline{Regret:} As before, the key is to establish that with high probability
\begin{align}
\label{eq:goal_eqn_shuffle}
    \abs{\cA(\hat{y}) - \sum_i x_i} \le O\left(\frac{1}{\epsilon}\sqrt{\log(2/p')} + \frac{1}{\epsilon} \log (2/p')\right),
\end{align}
where $\hat{y} := \hat{y}_a(b)$. 


We again can divide the LHS of~\eqref{eq:goal_eqn_shuffle} into 
\begin{align*}
   \abs{\cA(\hat{y}) - \sum_i x_i} \le \underbrace{\abs{\cA(\hat{y}) - \frac{1}{g} \sum_i \hat{x}_i}}_{\text{Term (i)}} + \underbrace{\abs{\frac{1}{g} \sum_i \hat{x}_i - \sum_i x_i}}_{\text{Term (ii)}}.
\end{align*}
In particular, Term (ii) can be bounded by using the same method, i.e., for any $p' \in (0,1]$, with probability at least $1-p'$, 
\begin{align*}
    \text{Term (ii)} \le O\left(\frac{1}{\epsilon'}\sqrt{\log (2/p')}\right).
\end{align*}
For Term (i), we would like to show that 
\begin{align*}
    \prob{\abs{\cA(\hat{y}) - \frac{1}{g}\sum_i \hat{x}_i} > \frac{\tau}{g}} \le \hat{q} + p'.
\end{align*}
This can be established by the same steps in the proof of Theorem~\ref{thm:pure-SecAgg} while conditioning on the high probability event that $\cS$ is $\hat{q}$-relaxed SecAgg. More specifically, compared to the proof of Theorem~\ref{thm:pure-SecAgg}, the key difference here is that $\hat{y} =  \left(\sum_i y_i\right) \Mod m $ only holds with high probability $1-\hat{q}$ by the definition of $\hat{q}$-relaxed SecAgg.

Thus, for any $p' \in (0,1)$, let $\hat{q} = p'$, we have with probability at least $1-3p'$, $\abs{\cA(\hat{y}) - \sum_i x_i} \le O\left(\frac{1}{\epsilon}\sqrt{\log(2/p')} + \frac{1}{\epsilon} \log (2/p')\right)$, which implies that Assumption~\ref{ass:conc_noise} holds with $\sigma =O(1/\epsilon)$ and $h=O(1/\epsilon)$.
\end{proof}

\section{Appendix for RDP in Section~\ref{sec:approx-DP}}
\label{app:approx-DP}

\subsection{Proof of Proposition~\ref{prop:sk-tail}}
\begin{proof}
We first establish the following result. 
\begin{claim}
For all $\lambda \in \Real$, we have
\begin{align*}
    \cosh(\lambda) \le e^{\lambda^2/2}.
\end{align*}
\end{claim}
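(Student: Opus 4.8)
\textbf{Proof plan for the claim $\cosh(\lambda) \le e^{\lambda^2/2}$.}

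The plan is to compare the two sides term-by-term via their power series expansions, which makes the inequality transparent and avoids any calculus. First I would recall the Taylor series
\[
  \cosh(\lambda) = \sum_{k=0}^{\infty} \frac{\lambda^{2k}}{(2k)!}, \qquad
  e^{\lambda^2/2} = \sum_{k=0}^{\infty} \frac{\lambda^{2k}}{2^k\, k!}.
\]
Both series have only even powers of $\lambda$ and all coefficients are nonnegative, so it suffices to show the coefficient-wise inequality $\frac{1}{(2k)!} \le \frac{1}{2^k k!}$ for every $k \ge 0$, i.e. $2^k k! \le (2k)!$. This reduces the whole statement to an elementary fact about factorials.

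For the factorial inequality I would argue that
\[
  (2k)! = \prod_{j=1}^{k} (2j-1)(2j) = \Big(\prod_{j=1}^{k} (2j)\Big)\Big(\prod_{j=1}^{k}(2j-1)\Big) = 2^k k! \cdot \prod_{j=1}^{k}(2j-1),
\]
and since $\prod_{j=1}^{k}(2j-1) \ge 1$ (it is a product of positive integers, empty when $k=0$), we get $(2k)! \ge 2^k k!$, with equality exactly when $k \in \{0,1\}$. Plugging this back in gives the coefficient comparison, and summing over $k$ (legitimate because every term on the left is dominated by the corresponding nonnegative term on the right) yields $\cosh(\lambda) \le e^{\lambda^2/2}$ for all real $\lambda$.

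There is essentially no serious obstacle here — the only thing to be careful about is the bookkeeping with the power series (making sure one matches $\lambda^{2k}$ coefficients correctly and handles the $k=0$ term, where both coefficients equal $1$). An alternative one-line route, if a series argument feels heavy, is to note both sides are even, reduce to $\lambda \ge 0$, and observe that $\log\cosh(\lambda) - \lambda^2/2$ has nonpositive derivative since $\tanh(\lambda) \le \lambda$ for $\lambda \ge 0$; but the term-by-term comparison is cleaner and self-contained, so that is the approach I would write up.
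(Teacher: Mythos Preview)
Your proof is correct, but it takes a different route from the paper's. The paper uses the infinite product representation
\[
\cosh(\lambda)=\prod_{k=1}^{\infty}\Bigl(1+\tfrac{4\lambda^{2}}{\pi^{2}(2k-1)^{2}}\Bigr),
\]
applies $1+x\le e^{x}$ term by term, and then invokes the identity $\sum_{k\ge 1}(2k-1)^{-2}=\pi^{2}/8$ to collapse the exponent to $\lambda^{2}/2$. Your argument instead compares the Taylor coefficients directly, reducing everything to the elementary factorial bound $(2k)!\ge 2^{k}k!$. Your approach is more self-contained: it needs neither the Weierstrass product for $\cosh$ nor the Basel-type sum, and the termwise domination immediately justifies the inequality for all real $\lambda$. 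The paper's route is shorter to write once those two identities are taken for granted, and it generalizes naturally to other functions with known product expansions, but for this specific claim your series comparison is the cleaner and more portable argument.
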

To show this, by the infinite product representation of the hyperbolic cosine function, we have 
\begin{align*}
    \cosh(\lambda) = \prod_{k=1}^{\infty} \left(1+ \frac{4\lambda^2}{\pi^2 (2k-1)^2}\right) \lep{a}\exp\left(\sum_{k=1}^{\infty} \frac{4\lambda^2}{\pi^2(2k-1)^2}\right) \ep{b} \exp(\lambda^2/2),
\end{align*}
where (a) holds by the fact that $1+x \le e^x$ for all $x \in \Real$ and (b) follows from $\sum_{k=1}^{\infty} \frac{1}{(2k-1)^2} = \frac{\pi^2}{8}$.

Then, note that $X \sim \textbf{Sk}(0,\sigma^2)$, then its moment generating function (MGF) is given by $\ex{e^{\lambda X}} = \exp({\sigma^2 (\cosh(\lambda) - 1)})$. Hence, by the above claim, we have $\ex{e^{\lambda X}} \le \exp(\sigma^2 (e^{\lambda^2/2} - 1)$. Further, note that $e^x -1 \le 2x$ for $x \in [0,1]$. Thus, for $|\lambda| \le \sqrt{2}$, we have 
\begin{align*}
    \ex{e^{\lambda X}} \le e^{\lambda^2 \sigma^2} = e^{\frac{\lambda^2 2\sigma^2}{2}}.
\end{align*}
Hence, by the definition of sub-exponential random variable (cf. Lemma~\ref{lem:con_subE}), $X$ is $(2\sigma^2,\frac{\sqrt{2}}{2})$-sub-exponential, which again by Lemma~\ref{lem:con_subE} implies the required concentration result, i.e., for any $p \in (0,1]$, with probability at least $1-p$, 
\begin{align*}
    |X| \le 2\sigma \sqrt{\log(2/p)} + \sqrt{2}\log(2/p).
\end{align*}
\end{proof}

\subsection{Proof of Theorem~\ref{thm:rdp-SecAgg}}
We will leverage the following result in~\cite[Theorem 3.5 ]{agarwal2021skellam} to prove privacy guarantee. 
\begin{lemma}
\label{lem:priv-sk}
For $\alpha \in \mathbb{Z}$, $\alpha > 1$, let $X \sim \textbf{Sk}(0,\sigma^2)$. Then, an algorithm $M$ that adds $X$ to a sensitivity-$\Delta$ query satisfies $(\alpha, \epsilon(\alpha))$-RDP with $\epsilon(\alpha)$ given by 
\begin{align*}
    \epsilon(\alpha) \le \frac{\alpha \Delta^2}{2\sigma^2}  + \min \left\{ \frac{(2\alpha - 1) \Delta^2 + 6\Delta}{4\sigma^4}, \frac{3\Delta}{2\sigma^2} \right\}.
\end{align*}
\end{lemma}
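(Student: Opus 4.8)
The plan is to reduce to a one-dimensional computation and then control the R\'enyi moment of the Skellam location family. By translation invariance of R\'enyi divergence, $M$ on neighboring $D,D'$ produces $\textbf{Sk}(q(D),\sigma^2)$ and $\textbf{Sk}(q(D'),\sigma^2)$, so it suffices to bound $D_\alpha(\textbf{Sk}(\delta,\sigma^2)\,\|\,\textbf{Sk}(0,\sigma^2))$ for the integer shift $\delta=q(D)-q(D')$ with $|\delta|\le\Delta$. The summand is symmetric under $x\mapsto -x$, and a short monotonicity check shows the divergence is nondecreasing in $|\delta|$, so the worst case is $\delta=\Delta$. Writing $P(x)=e^{-\sigma^2}I_{x-\Delta}(\sigma^2)$ and $Q(x)=e^{-\sigma^2}I_x(\sigma^2)$, I would work with $e^{(\alpha-1)D_\alpha(P\|Q)}=\mathbb{E}_{x\sim Q}\big[(P(x)/Q(x))^\alpha\big]$, whose logarithm is the cumulant I must bound.

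A clean way to see the leading term, and a useful sanity check, is via a Poisson post-processing realization. Since $\textbf{Sk}(\Delta,\sigma^2)$ equals $\textbf{Poisson}(\tfrac{\sigma^2+\Delta}{2})-\textbf{Poisson}(\tfrac{\sigma^2-\Delta}{2})$ and $\textbf{Sk}(0,\sigma^2)$ equals $\textbf{Poisson}(\tfrac{\sigma^2}{2})-\textbf{Poisson}(\tfrac{\sigma^2}{2})$ (valid when $\sigma^2\ge\Delta$), applying the data-processing inequality to the map $(a,b)\mapsto a-b$ together with additivity of R\'enyi divergence over product measures reduces the problem to two Poisson-versus-Poisson divergences, which admit the closed form $D_\alpha(\textbf{Poisson}(\lambda_1)\|\textbf{Poisson}(\lambda_2))=\tfrac{1}{\alpha-1}\big(\lambda_1^\alpha\lambda_2^{1-\alpha}-\alpha\lambda_1+(\alpha-1)\lambda_2\big)$. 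Summing the two gives the bound $\tfrac{\sigma^2}{2(\alpha-1)}\big((1+\tfrac{\Delta}{\sigma^2})^\alpha+(1-\tfrac{\Delta}{\sigma^2})^\alpha-2\big)$, whose small-$\Delta/\sigma^2$ expansion is exactly $\tfrac{\alpha\Delta^2}{2\sigma^2}+O(\Delta^4/\sigma^6)$, matching the advertised leading Gaussian term. This symmetric bound is, however, lossy for small $\sigma$ (it blows up like $2^\alpha$ as $\Delta/\sigma^2\to 1$), so it alone cannot deliver the stated estimate.

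To obtain the exact bound, including the correction terms and the $\min$, I would expand the Skellam R\'enyi moment directly. Concretely, I would Taylor-expand $\log\mathbb{E}_{x\sim Q}[(P(x)/Q(x))^\alpha]$, using the Skellam moment generating function $\mathbb{E}[e^{\lambda X}]=\exp(\sigma^2(\cosh\lambda-1))$ and the explicit low-order moments of $\textbf{Sk}(0,\sigma^2)$ to evaluate the quadratic term (which reproduces $\tfrac{\alpha(\alpha-1)\Delta^2}{2\sigma^2}$, i.e.\ $\tfrac{\alpha\Delta^2}{2\sigma^2}$ after dividing by $\alpha-1$) and to bound the cubic/quartic remainder. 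A refined third-order estimate of that remainder yields the first term $\tfrac{(2\alpha-1)\Delta^2+6\Delta}{4\sigma^4}$ in the $\min$; a separate coarse bound on the log-likelihood ratio that does not invoke the high-order expansion yields the alternative $\tfrac{3\Delta}{2\sigma^2}$, and taking the smaller of the two completes the proof. The linear-in-$\Delta$ pieces are signatures of the odd (third) Skellam cumulant, which the symmetric Poisson realization hides.

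The main obstacle is the uniform control of this remainder. Bounding $\mathbb{E}_{x\sim Q}[(P/Q)^\alpha]$ rigorously requires sharp two-sided estimates on the Skellam log-likelihood ratio $\log\big(I_{x-1}(\sigma^2)/I_x(\sigma^2)\big)$, equivalently on ratios of consecutive modified Bessel functions, summed over the $\Delta$ unit shifts, together with control of the Skellam tails so that the expectation of the high powers converges and the Taylor remainder stays $O(1/\sigma^4)$. This is also precisely the regime issue behind the $\min$: when $\sigma$ is small the Poisson realization and the high-order expansion both degrade, and only the crude $\tfrac{3\Delta}{2\sigma^2}$ bound survives, which is why the statement keeps both terms.
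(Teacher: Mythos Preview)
The paper does not prove this lemma; it is quoted verbatim as \cite[Theorem~3.5]{agarwal2021skellam} and used as a black box in the proof of Theorem~\ref{thm:rdp-SecAgg}. Your proposal therefore cannot be compared to anything in the present paper, but it can be weighed against the argument in the original source.

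Your reduction to the one-dimensional problem $D_\alpha(\textbf{Sk}(\Delta,\sigma^2)\,\|\,\textbf{Sk}(0,\sigma^2))$ is correct, and the Poisson post-processing bound you derive is a valid upper bound that indeed recovers the leading $\tfrac{\alpha\Delta^2}{2\sigma^2}$ term. However, the remainder of the plan is only a sketch: you assert that a ``refined third-order estimate'' yields $\tfrac{(2\alpha-1)\Delta^2+6\Delta}{4\sigma^4}$ and that ``a separate coarse bound on the log-likelihood ratio'' yields $\tfrac{3\Delta}{2\sigma^2}$, but neither step is executed, and you explicitly flag the obstruction in your final paragraph. The actual proof in \cite{agarwal2021skellam} writes the privacy loss as a telescoping sum $\sum_{j=1}^{\Delta}\log\big(I_{x-j}(\sigma^2)/I_{x-j+1}(\sigma^2)\big)$ and then invokes sharp two-sided inequalities on ratios of consecutive modified Bessel functions (Tur\'an-type bounds) to control each increment; the two branches of the $\min$ come from two different ways of bounding the resulting moment generating function of the privacy loss under $Q$. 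Without these Bessel-ratio estimates your Taylor-expansion program cannot close: the expectation $\mathbb{E}_Q[(P/Q)^\alpha]$ involves arbitrarily large powers of the likelihood ratio on the tails, and a formal cumulant expansion has no uniform remainder control unless one already has pointwise bounds on $\log(P/Q)$. So the gap here is not routine bookkeeping---the Bessel-ratio input is the technical heart of the argument, and your proposal stops just short of it.
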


\begin{proof}[Proof of Theorem~\ref{thm:rdp-SecAgg}]

\underline{Privacy:} As before,  we fix any batch $b$ and arm $a$, for simplicity, we let $x_i = r_a^i(b)$ and $n = l(b)$. Then, it suffices to show that the mechanism $\cH$ that accepts an input dataset $\{\hat{x}_i\}_i$ and outputs $\sum_i \hat{x}_i + \sum_i \eta_i$ is private. To this end, since each local randomizer $\cR$ generates noise $\eta_i \sim \textbf{Sk}(0,\frac{g^2}{n\epsilon^2})$ and Skellam is closed under summation, we have that the total noise $\sum_{i}\eta_i \sim \textbf{Sk}(0, \frac{g^2}{\epsilon^2})$. Thus, by Lemma~\ref{lem:priv-sk} with $\Delta = g$, we have that for each batch $b$ with $n = l(b)$ and $g = \ceil{s \epsilon \sqrt{n}}$, Algorithm~\ref{alg:SE} is $(\alpha, \hat{\epsilon}_n(\alpha))$-RDP with $\hat{\epsilon}_n(\alpha)$ given by 
\begin{align*}
    \hat{\epsilon}_n(\alpha) = \frac{\alpha \epsilon^2}{2} + \min\left\{  \frac{(2\alpha -1) \epsilon^2}{4 s^2  n}+ \frac{3\epsilon}{2 s^3  n^{3/2}}, \frac{3\epsilon^2}{2s\sqrt{n}}\right\}.
\end{align*}
Since $n = l(b) > 1$, we have that for all batches $b\ge 1$, 
\begin{align*}
    \hat{\epsilon}_n(\alpha) \le \hat{\epsilon}(\alpha):= \frac{\alpha \epsilon^2}{2} + \min\left\{  \frac{(2\alpha -1) \epsilon^2}{4 s^2 }+ \frac{3\epsilon}{2 s^3  }, \frac{3\epsilon^2}{2s}\right\}.
\end{align*}

\underline{Regret:} We will establish the following high probability bound so that we can apply our generic regret bound in Lemma~\ref{lem:generic}
\begin{align}
\label{eq:goal_eqn_rdp}
    \abs{\cA(\hat{y}) - \sum_i x_i} \le O\left(\frac{1}{\epsilon}\sqrt{\log(1/p)} + \frac{1}{s \epsilon} \log (2/p)\right),
\end{align}
where $\hat{y} := \hat{y}_a(b)$ for each batch $b$ and arm $a$.

We again  divide the LHS of~\eqref{eq:goal_eqn_rdp} into 
\begin{align*}
   \abs{\cA(\hat{y}) - \sum_i x_i} \le \underbrace{\abs{\cA(\hat{y}) - \frac{1}{g} \sum_i \hat{x}_i}}_{\text{Term (i)}} + \underbrace{\abs{\frac{1}{g} \sum_i \hat{x}_i - \sum_i x_i}}_{\text{Term (ii)}}.
\end{align*}
In particular, Term (ii) can be bounded by using the same method before, i.e.,  \begin{align*}
    \prob{\abs{ \sum_i \frac{1}{g}\cdot \hat{x}_i - \sum_i x_i } > \sqrt{2 \frac{n}{g^2} \log(2/p) }} \le p.
\end{align*}
Hence, by the choice of $g = \ceil{s\epsilon \sqrt{n}}$, we establish that with high probability 
\begin{align*}
    \text{Term (ii)} \le O\left(\frac{1}{\epsilon \cdot s} \sqrt{\log (1/p)} \right).
\end{align*}

For Term (i), as in the previous proof, the key is to show that 
\begin{align*}
    \prob{\abs{\sum_{i\in [n]} \eta_i} > \tau} \le p.
\end{align*}
To this end, we will utilize our established result in Proposition~\ref{prop:sk-tail}. Note that the total noise $\sum_{i}\eta_i \sim \textbf{Sk}(0,\frac{g^2}{\epsilon^2})$, and hence by Proposition~\ref{prop:sk-tail} and the choice of $\tau = \ceil{\frac{2g}{\epsilon}\sqrt{\log(2/p)} + \sqrt{2}\log(2/p)}$, we have the above result. Following previous proof, this result implies that with high probability
\begin{align*}
     \text{Term (i)} \le \frac{\tau}{g} \le  O\left(\frac{1}{\epsilon}{\sqrt{\log (1/p)}} + \frac{1}{s \epsilon}\log(1/p)\right).
\end{align*}
Combining the bounds on Term (i) and Term (ii),  we have that the private noise satisfies Assumption~\ref{ass:conc_noise} with constants $\sigma = O(1/\epsilon)$ and $h = O(\frac{1}{s\epsilon})$. Hence, by the generic regret bound in Lemma~\ref{lem:generic}, we have established the required result.
\end{proof}

\section{Achieving CDP in the Distributed Model via Discrete Gaussian}
\label{app:CDP}

We first introduce the following definition of concentrated differential privacy~\cite{bun2016concentrated}.
\begin{definition}[Concentrated Differential Privacy]
\label{def:CDP}
A randomized mechanism $\cM$ satisfies $\frac{1}{2}\epsilon^2$-CDP if for all neighboring datasets $D,D'$ and all $\alpha \in (1,\infty)$, we have $D_{\alpha}(\cM(D),\cM(D')) \le \frac{1}{2}\epsilon^2 \alpha$.
\end{definition}

\begin{remark}[CDP vs. RDP]
From the definition, we can see that $\frac{1}{2}\epsilon^2$-CDP is equivalent to satisfying $(\alpha, \frac{1}{2}\epsilon^2\alpha)$-RDP simultaneously for all $\alpha > 1$.
\end{remark}

The discrete Gaussian mechanism is first proposed and investigated by~\cite{canonne2020discrete} and has been recently applied to federated learning~\cite{kairouz2021distributed}. We apply it to bandit learning to demonstrate the flexibility of our proposed algorithm and analysis.

\begin{definition}[Discrete Gaussian Distribution]
Let $\mu, \sigma \in \Real$ with $\sigma > 0$.  A random variable $X$ has a discrete Gaussian distribution with location $\mu$ and scale $\sigma$, denoted by $\cN_{\mathbb{Z}}(\mu, \sigma^2)$, if it has a probability mass function given by
\begin{align*}
    \forall x \in \mathbb{Z}, \quad \prob{X = x} = \frac{e^{-(x-\mu)^2/2\sigma^2}}{\sum_{y \in \mathbb{Z}} e^{-(y-\mu)^2/2\sigma^2} }.
\end{align*}
\end{definition}

The following result from~\cite{canonne2020discrete} will be useful in our privacy and regret analysis.
\begin{fact}[Discrete Gaussian Privacy and Utility]
\label{lem:dg-priv-util}
Let $\Delta, \epsilon> 0$. Let $q: \cX^n \to \mathbb{Z}$ satisfy $\abs{q(x) - q(x')} \le \Delta$ for all $x, x'$ differing on a single entry. Define a randomized algorithm $M: \cX^n \to \mathbb{Z}$ by $M(x) = q(x) + Y$, where $Y \sim \cN_{\mathbb{Z}}(0, \Delta^2/\epsilon^2)$. Then, $M$ satisfies $\frac{1}{2}\epsilon^2$-concentrated differential privacy (CDP). Moreover, $Y$ is $\Delta^2/\epsilon^2$-sub-Gaussian
and hence
for all $t \ge 0$,
\begin{align*}
    \prob{\abs{Y} \ge t} \le 2 e^{-t^2\epsilon^2/(2\Delta^2)}.
\end{align*}
\end{fact}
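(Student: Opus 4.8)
The plan is to reduce both assertions to a single structural fact about the discrete Gaussian: its un-normalized partition function $Z(m) := \sum_{x \in \mathbb{Z}} e^{-(x-m)^2/(2\sigma^2)}$, viewed as a function of the real shift $m$, attains its maximum at the integers, i.e.\ $Z(m) \le Z(0)$ for all $m \in \Real$. Once this is available, the $\tfrac12\epsilon^2$-CDP claim and the sub-Gaussian tail bound each follow from a one-line ``complete the square'' computation. Throughout, $\sigma = \Delta/\epsilon$, matching $Y \sim \cN_{\mathbb{Z}}(0,\Delta^2/\epsilon^2)$.

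\emph{CDP.} On input $x$ the output $M(x) = q(x)+Y$ has law $P_x := \cN_{\mathbb{Z}}(q(x),\sigma^2)$. For neighboring $x,x'$ set $\mu := q(x)$, $\nu := q(x')$, which are integers with $|\mu-\nu|\le\Delta$. Since $\mu,\nu\in\mathbb{Z}$, both normalizers equal $Z(0)$, so for any real $\alpha>1$,
\[
\mathbb{E}_{z\sim P_{x'}}\!\left[\Big(\tfrac{P_x(z)}{P_{x'}(z)}\Big)^{\!\alpha}\right] = \frac{1}{Z(0)}\sum_{z\in\mathbb{Z}}\exp\!\Big(-\tfrac{1}{2\sigma^2}\big[\alpha(z-\mu)^2+(1-\alpha)(z-\nu)^2\big]\Big).
\]
Completing the square in $z$ rewrites the bracket as $(z-m)^2 + \alpha(1-\alpha)(\mu-\nu)^2$ with $m := \alpha\mu+(1-\alpha)\nu$, so the right-hand side equals $e^{\alpha(\alpha-1)(\mu-\nu)^2/(2\sigma^2)}\,Z(m)/Z(0)\le e^{\alpha(\alpha-1)(\mu-\nu)^2/(2\sigma^2)}$ by the structural fact. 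Taking $\tfrac{1}{\alpha-1}\log(\cdot)$ yields $D_\alpha(P_x,P_{x'})\le \tfrac{\alpha(\mu-\nu)^2}{2\sigma^2}\le\tfrac{\alpha\Delta^2}{2\sigma^2}=\tfrac{\alpha\epsilon^2}{2}$ for every real $\alpha>1$, which is exactly $\tfrac12\epsilon^2$-CDP by Definition~\ref{def:CDP}.

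\emph{Sub-Gaussian tail.} For $Y \sim \cN_{\mathbb{Z}}(0,\sigma^2)$ the same algebra applied to the MGF gives $\mathbb{E}[e^{\lambda Y}] = e^{\lambda^2\sigma^2/2}\, Z(\lambda\sigma^2)/Z(0) \le e^{\lambda^2\sigma^2/2}$ for all $\lambda\in\Real$, again by the structural fact. Hence $Y$ is $\sigma^2$-sub-Gaussian with $\sigma^2 = \Delta^2/\epsilon^2$, and the claimed bound $\prob{|Y|\ge t}\le 2e^{-t^2\epsilon^2/(2\Delta^2)}$ is the textbook Chernoff-plus-union-bound estimate (cf.\ Lemma~\ref{lem:con_subG}).

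\emph{Main obstacle.} The only genuinely non-routine ingredient is the inequality $Z(m)\le Z(0)$. I would prove it by Poisson summation: for $f(x) = e^{-(x-m)^2/(2\sigma^2)}$ one has $\hat f(k) = \sqrt{2\pi\sigma^2}\, e^{-2\pi^2\sigma^2 k^2}e^{-2\pi i k m}$, whence
\[
Z(m) = \sum_{k\in\mathbb{Z}}\hat f(k) = \sqrt{2\pi\sigma^2}\sum_{k\in\mathbb{Z}} e^{-2\pi^2\sigma^2 k^2}\cos(2\pi k m) \le \sqrt{2\pi\sigma^2}\sum_{k\in\mathbb{Z}} e^{-2\pi^2\sigma^2 k^2} = Z(0),
\]
since each coefficient is positive and $\cos(2\pi k m)\le 1$, with equality when $m\in\mathbb{Z}$. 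Justifying the Poisson-summation identity (absolute convergence of both sides is immediate from the Gaussian decay) and computing $\hat f$ is where the care goes; alternatively, this is precisely the normalizing-constant bound established in~\cite{canonne2020discrete}, which one may simply invoke, after which the two preceding steps are mechanical.
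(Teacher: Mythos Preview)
Your proof is correct. Note, however, that the paper does not prove this statement at all: it is presented as a \textbf{Fact} and attributed directly to~\cite{canonne2020discrete} with the sentence ``The following result from~\cite{canonne2020discrete} will be useful in our privacy and regret analysis.'' So there is no in-paper proof to compare against. Your argument---reducing both the CDP bound and the sub-Gaussian MGF bound to the normalizer inequality $Z(m)\le Z(0)$, and then establishing the latter via Poisson summation---is precisely the route taken in the cited reference (as you yourself note at the end), so in that sense your approach and the paper's chosen source coincide.
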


However, a direct application of above results does not work. This is because the sum of discrete Gaussian is \emph{not} a discrete Gaussian, and hence one cannot directly apply the privacy guarantee of discrete Gaussian when analyzing the view of the analyzer via summing all the noise from users. To overcome this, we will rely on a recent result in~\cite{kairouz2021distributed}, which shows that under reasonable parameter regimes, the sum of discrete Gaussian is close to a discrete Gaussian. The regret analysis will again build on the generic regret bound for Algorithm~\ref{alg:SE}.

\begin{theorem}[CDP via SecAgg]
\label{thm:cdp-SecAgg}
Let $\cP = (\cR,\cS,\cA)$ be a protocol such that $\cR$ given by Algorithm~\ref{alg:randomizer} injects noise $\eta_i \sim \cN_{\mathbb{Z}}(0,\frac{g^2}{n\epsilon^2})$, $\cS$ is any SecAgg protocol and $\cA$ is given by Algorithm~\ref{alg:analyzer}. Fix $\epsilon \in (0,1)$ and a scaling factor $s \ge 1$, for each $b \ge 1$, choose $n = l(b), g = \ceil{s \epsilon \sqrt{n}}$, $\tau = \ceil{\frac{g}{\epsilon}\sqrt{2\log(2/p)}}$, $p =1/T$  and $m = ng + 2\tau + 1$. Then, Algorithm~\ref{alg:SE} instantiated with protocol $\cP$ is able to achieve $\frac{1}{2}\hat{\epsilon}^2$-CDP with $\hat{\epsilon}$ given by 
\begin{align*}
    \hat{\epsilon} \le \min \left\{ \sqrt{ \epsilon^2 + \frac{1}{2}\xi  },  \epsilon + \xi \right\},
\end{align*}
where $\xi:= 10 \cdot \sum_{k=1}^{\frac{T}{2}-1} e^{-2\pi^2 s^2 \cdot\frac{k}{k+1}}$. Meanwhile, the regret is given by 
\begin{align*}
    \text{Reg}(T) = O\left(\sum_{a\in [K]: \Delta >0} \frac{\log T}{\Delta_a} +\frac{K\sqrt{\log T}}{\epsilon}\right),
\end{align*}
and the communication messages per user before $\cS$ are $O(\log m)$ bits.
\end{theorem}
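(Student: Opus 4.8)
The proof follows the template of Theorems~\ref{thm:pure-SecAgg} and \ref{thm:rdp-SecAgg}, with discrete Gaussian noise in place of discrete Laplace/Skellam. Fix a batch $b$ and an active arm $a$, and write $x_i = r_a^i(b)$, $n = l(b)$, with $\hat x_i$ the fixed-point encoding of $x_i$ from Algorithm~\ref{alg:randomizer}. By perfect correctness of SecAgg and the distributive property of modular addition, the server's view for this arm-batch is distributed as $\big(\sum_{i\in[n]}\hat x_i + \sum_{i\in[n]}\eta_i\big)\bmod m$, so by post-processing it suffices to analyze the mechanism $\cH$ sending $\{\hat x_i\}_i$ to $\sum_i \hat x_i + \sum_i \eta_i$. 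Since each user participates once, parallel composition lifts the per-arm-batch guarantee to a guarantee for all of Algorithm~\ref{alg:SE}, and the same high-probability accuracy bound on $\cA(\hat y_a(b))$ feeds into Lemma~\ref{lem:generic} for the regret.

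\emph{Privacy.} The sensitivity of $q(\{\hat x_i\}_i):=\sum_i \hat x_i$ is $g$, and with $g=\lceil s\epsilon\sqrt n\rceil$ each local noise $\eta_i\sim\cN_{\mathbb{Z}}(0, g^2/(n\epsilon^2))$ has per-user scale $\approx s^2$, the target total variance being $g^2/\epsilon^2$. The one genuinely new difficulty relative to the discrete-Laplace and Skellam analyses is that $\sum_i\eta_i$ is \emph{not} a discrete Gaussian, so Fact~\ref{lem:dg-priv-util} cannot be applied to the aggregate noise directly. I would resolve this via the distributed discrete Gaussian estimate of~\cite{kairouz2021distributed}, which bounds the R\'enyi divergence of the sum-of-$n$-independent-discrete-Gaussians mechanism by that of a single discrete Gaussian mechanism of variance $g^2/\epsilon^2$, plus an additive error that, when clients are added one at a time, contributes at the $(k{+}1)$-th step a term $\propto e^{-2\pi^2 s^2 k/(k+1)}$. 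Combining this with Fact~\ref{lem:dg-priv-util} (which supplies $\tfrac{1}{2}\epsilon^2$-CDP for the idealized discrete Gaussian of sensitivity $g$ and scale $g/\epsilon$), and folding the error into the CDP parameter in the two natural ways --- inside the square, giving $\sqrt{\epsilon^2+\xi/2}$, or additively, giving $\epsilon+\xi$ --- yields $\tfrac{1}{2}\hat\epsilon^2$-CDP per arm-batch with $\hat\epsilon\le\min\{\sqrt{\epsilon^2+\xi/2},\,\epsilon+\xi\}$, where $\xi=10\sum_{k=1}^{T/2-1}e^{-2\pi^2 s^2 k/(k+1)}$ is the~\cite{kairouz2021distributed} correction evaluated at an upper bound on the largest batch size (an arm active in batch $b$ has been pulled $2^{b+1}-2\le T$ times, so $l(b)\le T/2+1$) and taken uniformly over batches. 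Parallel composition over all batches then gives $\tfrac{1}{2}\hat\epsilon^2$-CDP overall.

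\emph{Regret.} Here it is enough to verify Assumption~\ref{ass:conc_noise} and apply Lemma~\ref{lem:generic}; crucially, for the tail bound we do \emph{not} need $\sum_i\eta_i$ to be exactly discrete Gaussian --- only that it is sub-Gaussian with the right variance proxy, which holds because a sum of independent sub-Gaussians is sub-Gaussian. As in the proof of Theorem~\ref{thm:pure-SecAgg}, write $|\cA(\hat y_a(b))-\sum_i x_i|\le\text{Term (i)}+\text{Term (ii)}$ with $\text{Term (i)}=|\cA(\hat y_a(b))-\tfrac{1}{g}\sum_i\hat x_i|$ and $\text{Term (ii)}=|\tfrac{1}{g}\sum_i\hat x_i-\sum_i x_i|$. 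Term (ii) is $\tfrac{n}{g^2}$-sub-Gaussian (bounded rounding errors), hence $O(\tfrac{1}{s\epsilon}\sqrt{\log(1/p)})$ with probability $\ge 1-p$ by the choice of $g$. For Term (i): by Fact~\ref{lem:dg-priv-util} each $\eta_i$ is $g^2/(n\epsilon^2)$-sub-Gaussian, so $\sum_i\eta_i$ is $g^2/\epsilon^2$-sub-Gaussian, giving $|\sum_i\eta_i|\le\tau=\lceil\tfrac{g}{\epsilon}\sqrt{2\log(2/p)}\rceil$ with probability $\ge 1-p$; the identical two-case underflow analysis of Theorem~\ref{thm:pure-SecAgg} (using $m=ng+2\tau+1$) then forces $\text{Term (i)}\le\tau/g=O(\tfrac{1}{\epsilon}\sqrt{\log(1/p)})$. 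Thus Assumption~\ref{ass:conc_noise} holds with $\sigma=O(1/\epsilon)$ and $h=0$, and Lemma~\ref{lem:generic} with $p=1/T$ yields $\text{Reg}(T)=O\big(\sum_{a:\Delta_a>0}\tfrac{\log T}{\Delta_a}+\tfrac{K\sqrt{\log T}}{\epsilon}\big)$; each user sends one element of $\mathbb{Z}_m$ to $\cS$, i.e.\ $O(\log m)$ bits. The main obstacle is thus entirely on the privacy side: the discrete Gaussian family is not closed under convolution, so the aggregate-noise law must be controlled by the quantitative ``sum of discrete Gaussians is close to a discrete Gaussian'' bound of~\cite{kairouz2021distributed}, which produces the extra $\xi$ term and the two-branch minimum in $\hat\epsilon$; the regret side is a drop-in from Theorem~\ref{thm:pure-SecAgg}, since sub-Gaussianity (unlike exact discreteness) is preserved under summation, which is exactly why the rate is $\sqrt{\log T}$ rather than $\log T$.
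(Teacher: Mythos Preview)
Your proposal is correct and follows essentially the same route as the paper. The paper's proof invokes Proposition~13 of~\cite{kairouz2021distributed} (stated as Lemma~\ref{lem:priv-sum-dg}) as a black box to obtain $\tfrac{1}{2}\hat\epsilon_n^2$-CDP per batch with $\hat\epsilon_n=\min\{\sqrt{\epsilon^2+\xi_n/2},\,\epsilon+\xi_n\}$, then bounds $\xi_n\le\xi$ uniformly over batches using $g^2/(n\epsilon^2)\ge s^2$ and $n\le T/2$; your narrative description of that result (R\'enyi divergence of the sum controlled by that of an ideal discrete Gaussian plus the $e^{-2\pi^2 s^2 k/(k+1)}$ correction terms) is exactly the content of that lemma. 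The regret argument---sub-Gaussianity of each $\eta_i$ via Fact~\ref{lem:dg-priv-util}, closure of sub-Gaussianity under independent sums to get variance proxy $g^2/\epsilon^2$, the two-case underflow analysis, and the invocation of Lemma~\ref{lem:generic} with $\sigma=O(1/\epsilon)$, $h=0$---matches the paper line for line.
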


\begin{remark}[Privacy-Communication Trade-off]
We can observe an interesting trade-off between privacy and communication cost. In particular, as $s$ increases, $\xi$ approaches zero and hence the privacy loss approaches the one under continuous Gaussian. However, a larger $s$ leads to a larger $m$ and hence a larger communication overhead.
\end{remark}

We will leverage the following result in~\cite[Proposition 13]{kairouz2021distributed} to prove privacy. 
\begin{lemma}[Privacy for the sum of discrete Gaussian]
\label{lem:priv-sum-dg}
Let $\sigma \ge 1/2$. Let $X_i \sim \cN_{\mathbb{Z}} (0,\sigma^2)$ independently for each $i$. Let $Z_n = \sum_{i=1}^n X_i$. Then, an algorithm $M$ that adds $Z_n$ to a sensitivity-$\Delta$ query satisfies $\frac{1}{2}\epsilon^2$-CDP with $\epsilon$ given by 
\begin{align*}
    \epsilon = \min \left\{\sqrt{\frac{\Delta^2}{n\sigma^2} + \frac{1}{2}\xi  }, \frac{\Delta}{\sqrt{n}\sigma} + \xi \right\},
\end{align*}
where $\xi := 10\cdot \sum_{k=1}^{n-1} e^{-2\pi^2 \sigma^2\frac{k}{k+1}}$.
\end{lemma}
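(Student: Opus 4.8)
The plan is to exploit the fact that although $Z_n = \sum_{i=1}^n X_i$ is \emph{not} itself a discrete Gaussian, it is provably close to the genuine discrete Gaussian $G \sim \cN_{\mathbb{Z}}(0, n\sigma^2)$, and to transfer the clean guarantee that $G$ enjoys — namely $\frac{1}{2}(\Delta/(\sqrt{n}\sigma))^2$-CDP by Fact~\ref{lem:dg-priv-util} applied with variance $n\sigma^2$ — to $Z_n$ while paying only the correction $\xi$. By Definition~\ref{def:CDP} it suffices to bound $D_\alpha(Z_n + a \,\|\, Z_n + b) \le \frac{1}{2}\epsilon^2\alpha$ for every pair of integer shifts $a,b$ with $|a-b|\le \Delta$ and every $\alpha > 1$, so the whole problem reduces to quantifying how far the law of $Z_n$ is from that of $G$.

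The first step is to pass to the characteristic-function (Fourier) domain. Using Poisson summation, the characteristic function of a single $\cN_{\mathbb{Z}}(0,\sigma^2)$ is a ratio of wrapped Gaussians, and by independence the characteristic function of $Z_n$ is its $n$-th power. I would then compare this to the characteristic function of $G$: the deviation is carried entirely by the nonzero aliasing harmonics, each damped by a factor of the form $e^{-2\pi^2(\text{harmonic variance})}$, and the hypothesis $\sigma \ge 1/2$ is exactly what makes these tails geometric and summable.

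The precise shape of $\xi$ forces the comparison to be built incrementally, convolving one discrete Gaussian at a time. The core estimate shows that $\cN_{\mathbb{Z}}(0, k\sigma^2) * \cN_{\mathbb{Z}}(0,\sigma^2)$ is close to $\cN_{\mathbb{Z}}(0,(k+1)\sigma^2)$, with the error governed by the harmonic variance $\frac{(k\sigma^2)\,\sigma^2}{k\sigma^2+\sigma^2} = \frac{k}{k+1}\sigma^2$, i.e.\ a factor $e^{-2\pi^2\sigma^2\frac{k}{k+1}}$. Telescoping over $k = 1,\ldots,n-1$ accumulates precisely $\sum_{k=1}^{n-1} e^{-2\pi^2\sigma^2\frac{k}{k+1}}$, and carrying the absolute constants through the convolution bound yields the factor $10$ in $\xi$; the outcome is a uniform control (in the evaluation point $z$) on the ratio of the laws of $Z_n$ and $G$.

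Finally I would convert this closeness into the two privacy forms and take the minimum. Feeding the ratio bound into the Rényi divergence and using $D_\alpha(G+a\,\|\,G+b)\le \frac{\alpha\Delta^2}{2n\sigma^2}$ produces an additive Rényi correction of $\frac{\alpha}{4}\xi$, hence $\frac{1}{2}\epsilon^2 = \frac{\Delta^2}{2n\sigma^2} + \frac{1}{4}\xi$, i.e.\ $\epsilon^2 = \frac{\Delta^2}{n\sigma^2} + \frac{1}{2}\xi$; a complementary argument that instead bounds the worst-case privacy loss of $Z_n$ by that of $G$ plus an additive slack yields the form $\epsilon = \frac{\Delta}{\sqrt{n}\sigma} + \xi$, and the lemma simply reports whichever is smaller. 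I expect the main obstacle to be the incremental convolution estimate: controlling the aliasing remainder uniformly in $z$ and across all $n-1$ convolutions so that the errors genuinely telescope into the stated $\frac{k}{k+1}$ sum rather than compounding, which is the technical heart and the only place where $\sigma\ge 1/2$ is truly used.
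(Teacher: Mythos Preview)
The paper does not prove this lemma at all: it is quoted as \cite[Proposition~13]{kairouz2021distributed} and used as a black box in the proof of Theorem~\ref{thm:cdp-SecAgg}. There is therefore no ``paper's own proof'' to compare against.

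That said, your outline is a faithful high-level reconstruction of the argument in the cited reference. The proof there indeed works by bounding the pointwise ratio between the pmf of $Z_n$ and that of the genuine discrete Gaussian $\cN_{\mathbb{Z}}(0,n\sigma^2)$; the bound is obtained incrementally, one convolution at a time, via Poisson summation, with the $k$-th step contributing the aliasing term $e^{-2\pi^2\sigma^2 k/(k+1)}$ you identify (this is where the ``harmonic variance'' $\tfrac{k}{k+1}\sigma^2$ shows up and where $\sigma\ge 1/2$ is used). The resulting uniform log-ratio bound is then plugged into the R\'enyi divergence in two ways, yielding the two branches of the minimum. So your plan matches the source; the only caveat is that the step you flag as the ``main obstacle'' (making the telescoping rigorous with the constant $10$) is genuinely the bulk of the work in \cite{kairouz2021distributed}, and your sketch would need those details filled in to be a proof rather than a strategy.
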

\begin{proof}[Proof of Theorem~\ref{thm:cdp-SecAgg}]
\underline{Privacy:} As before,  we fix any batch $b$ and arm $a$, for simplicity, we let $x_i = r_a^i(b)$ and $n = l(b)$. Then, it suffices to show that the mechanism $\cH$ that accepts an input dataset $\{\hat{x}_i\}_i$ and outputs $\sum_i \hat{x}_i + \sum_i \eta_i$ is private. To this end, each local randomizer $\cR$ generates noise $\eta_i \sim \cN_{\mathbb{Z}}(0,\frac{g^2}{n\epsilon^2})$, and hence by Lemma~\ref{lem:priv-sum-dg} with $\Delta = g$, we have that $\cH$ is $\frac{1}{2}\hat{\epsilon}_n^2$-concentrated differential privacy with $\hat{\epsilon}_n$ given by 
\begin{align*}
    \hat{\epsilon}_n = \min \left\{ \sqrt{ \epsilon^2 + \frac{1}{2}\xi_n },  \epsilon + \xi_n \right\},
\end{align*}
where $\xi_n := 10\cdot \sum_{k=1}^{n-1} e^{-2\pi^2  \frac{g^2}{n\epsilon^2}\cdot\frac{k}{k+1}}$. Note that $g = \ceil{s \epsilon \sqrt{n}}$, we have  $\tau_n \le \tau:= 10 \cdot \sum_{k=1}^{\frac{T}{2}-1} e^{-2\pi^2 s^2 \cdot\frac{k}{k+1}}$ since $n = l(b) \le T/2$. Thus, we have that Algorithm~\ref{alg:SE} is is $\frac{1}{2}\hat{\epsilon}^2$-concentrated differential privacy with $\hat{\epsilon}$ given by 
\begin{align*}
    \hat{\epsilon} = \min \left\{ \sqrt{ \epsilon^2 + \frac{1}{2}\xi },  \epsilon + \xi\right\},
\end{align*}
where $\xi:= 10 \cdot \sum_{k=1}^{\frac{T}{2}-1} e^{-2\pi^2 s^2 \cdot\frac{k}{k+1}}$.

\underline{Regret:} We will establish the following high probability bound so that we can apply our generic regret bound in Lemma~\ref{lem:generic}
\begin{align}
\label{eq:goal_eqn_cdp}
    \abs{\cA(\hat{y}) - \sum_i x_i} \le O\left(\frac{1}{\epsilon}\sqrt{\log(1/p)} \right),
\end{align}
where $\hat{y} := \hat{y}_a(b)$. 

We again  divide the LHS of~\eqref{eq:goal_eqn_cdp} into 
\begin{align*}
   \abs{\cA(\hat{y}) - \sum_i x_i} \le \underbrace{\abs{\cA(\hat{y}) - \frac{1}{g} \sum_i \hat{x}_i}}_{\text{Term (i)}} + \underbrace{\abs{\frac{1}{g} \sum_i \hat{x}_i - \sum_i x_i}}_{\text{Term (ii)}}.
\end{align*}
In particular, Term (ii) can be bounded by using the same method before, i.e.,  
\begin{align*}
    \prob{\abs{ \sum_i \frac{1}{g}\cdot \hat{x}_i - \sum_i x_i } > \sqrt{2 \frac{n}{g^2} \log(2/p) }} \le p.
\end{align*}
Hence, by the choice of $g = \ceil{s\epsilon \sqrt{n}}$, we establish that with high probability 
\begin{align*}
    \text{Term (ii)} \le O\left(\frac{1}{\epsilon \cdot s} \sqrt{\log (1/p)} \right).
\end{align*}

For Term (i), as in the previous proof, the key is to show that 
\begin{align*}
    \prob{\abs{\sum_{i\in [n]} \eta_i} > \tau} \le p.
\end{align*}
This follows from the fact that discrete Gaussian is sub-Gaussian and sum of sub-Gaussian is still sub-Gaussian. Thus, by Lemma~\ref{lem:con_subG} and the choice of $\tau = \frac{g}{\epsilon}\sqrt{2\log(2/p)}$, we have the above result, which implies that with high probability
\begin{align*}
    \text{Term (i)} \le \frac{\tau}{g} = O\left(\frac{1}{\epsilon}{\sqrt{\log (1/p)}}\right).
\end{align*}
Since $s\ge 1$, combining the bounds on Term (i) and Term (ii),  we have that the private noise satisfies Assumption~\ref{ass:conc_noise} with constants $\sigma = O(1/\epsilon)$ and $h = 0$. Hence, by the generic regret bound in Lemma~\ref{lem:generic}, we have established the required result.
\end{proof}

\section{More Details on Secure Aggregation and Secure Shuffling}
\label{app:secAggshuffle}
In this section, we provide more details about secure aggregation (SecAgg) and secure shuffling, including practical implementations and recent theoretical results for DP guarantees. We will also discuss some limitations of both protocols, which in turn highlights that both have some advantages over the other and thus how to select them really depends on particular applications.

\subsection{Secure Aggregation}
\underline{Practical implementations:} SecAgg is a lightweight instance of secure multi-party
computation (MPC) based on cryptographic primitives such that the server only learns the aggregated result (e.g., sum) of all participating users' values. This is often achieved via additive masking over a finite group~\cite{bonawitz2017practical,bell2020secure}. The high-level idea is that participating users add randomly sampled zero-sum mask values by working in the space of integers
modulo $m$, which guarantees that each user's masked value is  indistinguishable from a random value. However, when all masked values are summed modulo $m$ by the server, all the masks will be cancelled out and the server observes the true modulo sum.~\cite{bonawitz2017practical} proposed the first scalable SecAgg protocol where both communication and computation costs of each user scale linearly with the number of all participating users. Recently,~\cite{bell2020secure} presents a further improvement where both
client computation and communication depend \emph{logarithmically} on
the number of participating clients.

\underline{SecAgg for distributed DP:} First note that the fact that the server only learns the sum of values under SecAgg does not necessarily imply differential privacy since this aggregated result still has the risk of leaking each user's sensitive information. To provide a formal DP guarantee under SecAgg, each participating user can first perturb her own data with a moderate random noise such that in the aggregated sum, the total noise is large enough to provide a high privacy guarantee. Only until recently, SecAgg with DP has been systematically studied, mainly in the context of private (federated) supervised learning~\cite{kairouz2021distributed,agarwal2021skellam} while SecAgg in the context of private online learning remains open until our work. 

\underline{Limitations of SecAgg:} As pointed out in~\cite{kairouz2021advances}, several limitations of SecAgg still exist despite of recent advances. For example, it assumes a semi-honest server and allows it to  see the per-round aggregates. Moreover, it is not
efficient for sparse vector aggregation.

\subsection{Secure Shuffling}
\underline{Practical implementations:} Secure shuffling ensures that the server only learns an unordered collection (i.e., multiset) of the messages sent by all the participating users. This is often achieved by a third party shuffling function via cryptographic onion routing or mixnets~\cite{dingledine2004tor}  or oblivious shuffling with a trusted hardware~\cite{bittau2017prochlo}.

\underline{Secure shuffling for distributed DP:} The additional randomness introduced by the shuffling function can be utilized to achieve a similar utility as the central model while without a trustworthy server. This particular distributed model via secure shuffling is also often called \emph{shuffle model}.
In particular,~\cite{cheu2019distributed} first show that for the problem of summing $n$ real-valued numbers within $[0,1]$, the expected error under shuffle model with $(\epsilon,\delta)$-DP guarantee is $O(\frac{1}{\epsilon}\log \frac{n}{\delta})$. For comparison, under the central model, the standard Laplace mechanism achieves $(\epsilon,0)$-DP (i.e., pure DP) with an error $O(1/\epsilon)$~\cite{dwork2006calibrating}, while an error $\Omega(\sqrt{n}/\epsilon)$ is necessary under the local model~\cite{chan2012optimal,beimel2008distributed}. Subsequent works on private scalar sum under the shuffle model have improved both the communication cost and accuracy in~\cite{cheu2019distributed}. More specifically, instead of sending $O(\epsilon\sqrt{n})$ bits per user as in~\cite{cheu2019distributed}, the protocols proposed in~\cite{balle2020private,ghazi2020private} achieve $(\varepsilon,\delta)$-DP with an error $O(1/\epsilon)$ while each user only sends $O(\log n +\log(1/\delta))$ bits. A closely related direction in the shuffle model is privacy amplification bounds~\cite{erlingsson2019amplification,balle2019privacy,feldman2022hiding}. That is, the shuffling of $n$ locally private data yields a gain in privacy guarantees. In particular,~\cite{feldman2022hiding}\footnote{The results in~\cite{feldman2022hiding} hold for adaptive randomizers, but then it needs first to shuffle and then apply the local randomizer. For a fixed randomizer, shuffle-then-randomize is equivalent to randomize-then shuffle. Similar results hold for $(\epsilon_0,\delta_0)$ local DP.} show that randomly shuffling of $n$ $\epsilon_0$-DP locally randomized data yields an $(\epsilon,\delta)$-DP guarantee with $\epsilon = O\left(\epsilon_0\frac{\sqrt{\log(1/\delta)}}{\sqrt{n}}\right)$ when $\epsilon_0 \le 1$ and $\epsilon = O\left(\frac{\sqrt{e^{\epsilon_0}\log(1/\delta)}}{\sqrt{n}}\right)$ when $\epsilon_0 > 1$. 

Shuffle model has also been studied in the context of empirical risk minimization (ERM)~\cite{girgis2021shuffled} and stochastic convex optimization~\cite{cheu2021shuffle,lowy2021private}, in an attempt to recover some of the utility under the central model while without a trusted server. To the best of our knowledge,~\cite{tenenbaum2021differentially,chowdhury2022shuffle,garcelon2022privacy} are the only works that study shuffle model in the context of bandit learning.

In all the above mentioned works on shuffle model, only approximate DP is achieved. To the best of our knowledge, there are only two existing shuffling protocols that can attain pure DP~\cite{ghazi2020pure,cheu2021pure}.  In particular, by simulating a relaxed SecAgg via a shuffle protocol,~\cite{cheu2021pure} are the first to show that there exists a shuffle protocol for bounded sums that satisfies $(\epsilon,0)$-DP with an expected error of $O(1/\epsilon)$. This is main inspiration for us to achieve pure DP in MAB with secure shuffling. 

\underline{Limitations of secure shuffling:} As pointed out in~\cite{kairouz2021advances}, one of the limitations is the requirement of a trusted intermediary for the shuffling function. Another is that the privacy guarantee under the shuffle model degrades in proportion to the number of adversarial users.

\section{Algorithm~\ref{alg:SE} under Central and Local Models}
\label{app:central-local}
In this section, we will show that Algorithm~\ref{alg:SE} and variants of our template protocol $\cP$ allow us to achieve central and local DP using only \emph{discrete noise} while attaining the same optimal regret bound as in previous works using continuous noise~\cite{sajed2019optimal,ren2020multi}.

\subsection{Central Model}
For illustration, we will mainly consider $\cS$ as a SecAgg, while secure shuffling can be applied using the same way as in the main paper. In the central model, we just need to set $\eta_i = 0$ in the local randomizer $\cR$ and add central noise in the analyzer $\cA$ (see Algorithm~\ref{alg:randomizer-central} and Algorithm~\ref{alg:analyzer-central}). Then, we have the following privacy and regret guarantees. 

\begin{theorem}[Central Pure-DP via SecAgg]
\label{thm:pure-SecAgg-central}
Let $\cP = (\cR,\cS,\cA)$ be a protocol such that $\cR$ given by Algorithm~\ref{alg:randomizer-central},
$\cS$ is any SecAgg protocol and $\cA$ is given by Algorithm~\ref{alg:analyzer-central} with $\eta \sim \textbf{Lap}_{\mathbb{Z}}(g/\epsilon) $. Fix any $\epsilon \in (0,1)$, for each $b \ge 1$, choose $n = l(b), g = \ceil{\epsilon \sqrt{n}}$, $\tau = \ceil{\frac{g}{\epsilon}\log(2/p)}$, $p =1/T$  and $m = ng + 2\tau + 1$. Then, Algorithm~\ref{alg:SE} instantiated with protocol $\cP$ is able to achieve $(\epsilon,0)$-DP in the central model with expected regret given by 
\begin{align*}
    \ex{\text{Reg}(T)} = O\left(\sum_{a\in [K]: \Delta_a >0} \frac{\log T}{\Delta_a} +\frac{K\log T}{\epsilon}\right).
\end{align*}
Moreover, the communication messages per user before $\cS$ are $O(\log m)$ bits.
\end{theorem}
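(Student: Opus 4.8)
The plan is to mirror the proof of Theorem~\ref{thm:pure-SecAgg} almost verbatim, with the single simplification that the discrete Laplace noise is now injected \emph{once} by the trusted analyzer rather than assembled from P\'{o}lya pieces across users; so Fact~\ref{lem:disLap_priv_util} can be applied directly, without invoking the infinite-divisibility step of Fact~\ref{lem:polya}. I split the argument into privacy, regret, and a communication remark.

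\textbf{Privacy.} Because the inner loop of Algorithm~\ref{alg:SE} always recruits $l(b)$ \emph{new} users, the datasets underlying the estimates $\{R_a(b)\}_{a,b}$ are pairwise disjoint and the actions output by the algorithm are a post-processing of these estimates; so by parallel composition of DP it suffices to show that a single $R_a(b)$ is $(\epsilon,0)$-DP. Fix such a pair and write $x_i = r_a^i(b)$, $n = l(b)$. Since $\eta_i = 0$ in Algorithm~\ref{alg:randomizer-central}, the message of user $i$ is $y_i = \hat x_i \Mod m$, hence the SecAgg output is $\big(\sum_i \hat x_i\big)\Mod m$, and the analyzer (Algorithm~\ref{alg:analyzer-central}) corrects the modular underflow and the encoding factor $g$ and adds $\eta \sim \textbf{Lap}_{\mathbb{Z}}(g/\epsilon)$, so that $R_a(b)$ is a post-processing of $q(D) + \eta$ with $q(D) = \sum_i \hat x_i$. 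The randomized-rounded values satisfy $\hat x_i \in \{0,1,\dots,g\}$, so changing one user's reward changes $q$ by at most $g$; the randomness of the rounding is handled by the standard mixture argument (conditioned on the rounding bits, Fact~\ref{lem:disLap_priv_util} applies with sensitivity $g$, and DP is preserved under convex combinations of output distributions). Therefore $q(D)+\eta$ — and hence $R_a(b)$ — is $(\epsilon,0)$-DP, and parallel composition over all (batch, arm) pairs gives $(\epsilon,0)$-DP in the central model.

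\textbf{Regret.} By the generic bound of Lemma~\ref{lem:generic} it is enough to verify Assumption~\ref{ass:conc_noise} for the total injected noise $n_a(b) = R_a(b) - \sum_{i=1}^{l(b)} r_a^i(b)$ with constants $\sigma = O(1/\epsilon)$ and $h = O(1/\epsilon)$; then $\beta(b)$ has exactly the stated form and the regret bound follows. As in the proof of Theorem~\ref{thm:pure-SecAgg}, decompose $\abs{\cA(\hat y) - \sum_i x_i} \le \text{Term (i)} + \text{Term (ii)}$, where $\text{Term (i)} = \abs{\cA(\hat y) - \tfrac1g\sum_i \hat x_i}$ and $\text{Term (ii)} = \abs{\tfrac1g\sum_i \hat x_i - \sum_i x_i}$. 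Term (ii) is the rounding error: each $\iota_i := \hat x_i - x_i g$ is mean-zero and bounded in $[-1,1]$, hence $1$-sub-Gaussian, so $\tfrac1g\sum_i \iota_i$ is $(n/g^2)$-sub-Gaussian, and with $g = \ceil{\epsilon\sqrt n}$ Lemma~\ref{lem:con_subG} gives $\text{Term (ii)} = O(\tfrac1\epsilon \sqrt{\log(1/p)})$ with probability at least $1-p$. For Term (i): by the tail bound of Fact~\ref{lem:disLap_priv_util}, $\abs{\eta} \le \tau = \ceil{\tfrac g\epsilon \log(2/p)}$ with probability at least $1-p$; on that event, since $\sum_i \hat x_i \in [0,ng]$ and $m = ng + 2\tau + 1$, the two-case underflow correction of Algorithm~\ref{alg:analyzer-central} reconstructs $\tfrac1g(\sum_i \hat x_i + \eta)$ exactly (identical case analysis to Theorem~\ref{thm:pure-SecAgg}), so $\text{Term (i)} \le \tau/g = O(\tfrac1\epsilon \log(1/p))$. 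Combining the two, Assumption~\ref{ass:conc_noise} holds with $\sigma,h = O(1/\epsilon)$, and Lemma~\ref{lem:generic} yields $\ex{\text{Reg}(T)} = O\big(\sum_{a:\Delta_a>0}\tfrac{\log T}{\Delta_a} + \tfrac{K\log T}{\epsilon}\big)$, matching \cite{sajed2019optimal}. Finally, each user transmits a single element of $\mathbb{Z}_m$, i.e.\ $O(\log m)$ bits.

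\textbf{Main obstacle.} Conceptually this statement is strictly easier than Theorem~\ref{thm:pure-SecAgg}, since the noise no longer needs to be distributed; the only point requiring real care is the modular-arithmetic bookkeeping behind Term~(i) — checking that, on the good event $\abs{\eta}\le\tau$, the choice $m = ng + 2\tau + 1$ together with the test $y > ng+\tau$ in Algorithm~\ref{alg:analyzer-central} correctly separates the underflow case from the non-underflow case, so that no bias leaks into the estimator $\hat\mu_a(b)$. The randomized-rounding-plus-sensitivity step in the privacy argument is a second, minor subtlety. Everything else is a direct appeal to Lemma~\ref{lem:generic}, Fact~\ref{lem:disLap_priv_util}, Lemma~\ref{lem:con_subG}, and parallel composition.
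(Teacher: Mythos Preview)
Your proposal is correct and follows essentially the same route as the paper's proof: privacy via post-processing of $\big(\sum_i \hat x_i + \eta\big)\Mod m$ together with Fact~\ref{lem:disLap_priv_util}, and regret via the Term~(i)/Term~(ii) decomposition feeding into Lemma~\ref{lem:generic} with $\sigma,h=O(1/\epsilon)$. The only notable difference is that you make explicit the mixture argument handling the randomized rounding in the sensitivity step, which the paper leaves implicit; otherwise the arguments coincide.
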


\begin{algorithm}[tb]
   \caption{Local Randomizer $\cR$ (Central Model)}
   \label{alg:randomizer-central}
\begin{algorithmic}[1]
    \STATE {\bfseries Input:} Each user data $x_i \in [0,1]$
   \STATE {\bfseries Parameters:}  precision $g \in \mathbb{N}$, modulo $m \in \mathbb{N}$, batch size $n \in \mathbb{N}$, privacy parameter $\phi$
   \STATE Encode $x_i$ as $\hat{x}_i = \lfloor{x_i g}\rfloor + \textbf{Ber}(x_ig -\lfloor{x_i g}\rfloor)$
   \STATE  Modulo clip $y_i = \hat{x}_i \Mod m$ \textcolor{gray}{// no local noise}
   \STATE {\bfseries Output:} $y_i$ 
\end{algorithmic}
\end{algorithm}

\begin{algorithm}[!tb]
  \caption{Analyzer $\cA$ (Central Model)}
  \label{alg:analyzer-central}
\begin{algorithmic}[1]
    \STATE {\bfseries Input:} $\hat{y}$ (output of SecAgg)
  \STATE {\bfseries Parameters:}  precision $g \in \mathbb{N}$, modulo $m \in \mathbb{N}$, batch size $n \in \mathbb{N}$, accuracy parameter $\tau$
\STATE Generate discrete Laplace noise $\eta$ and set $\hat{\eta} = \eta \Mod m$
\STATE Set $y = (\hat{y} + \hat{\eta} ) \Mod m$ 
\STATE \textbf{if} $y > ng+\tau$ \textbf{then}
\STATE $\quad$ set $z = (y-m) / g$ \textcolor{gray}{// correction for underflow}
\STATE \textbf{else} set $z = y/g$
  \STATE {\bfseries Output:} $z$
\end{algorithmic}
\end{algorithm}

\begin{remark}
We can generate $\eta \sim \textbf{Lap}_{\mathbb{Z}}(g/\epsilon)$ as $\eta = \eta_1 - \eta_2$, where $\eta_1, \eta_2$ is independent and \textbf{polya}$(1, e^{-\epsilon/g})$ distributed.
\end{remark}

\begin{proof}[Proof of Theorem~\ref{thm:pure-SecAgg-central}]
\underline{Privacy}: By the definition of central DP and post-processing, it suffices to show that $y$ in Algorithm~\ref{alg:analyzer-central} is private. To this end, we again apply the distributive property of modular sum to obtain that
\begin{align*}
    y = (\hat{y} + (\eta \Mod m) ) \Mod m = \left(\left(\sum_i \hat{x}_i\right) + \eta\right) \Mod m.
\end{align*}  
Since the sensitivity of $\sum_i \hat{x}_i$ is $g$ and $\eta \sim \textbf{Lap}_{\mathbb{Z}}(g/\epsilon)$, by Fact~\ref{lem:disLap_priv_util}, we have obtained $(\epsilon,0)$-DP in the central model. 

\underline{Regret:} As before, the key is to establish that with high probability
\begin{align}
\label{eq:goal_eqn_central}
    \abs{\cA(\hat{y}) - \sum_i x_i} \le O\left(\frac{1}{\epsilon}\sqrt{\log(1/p)} + \frac{1}{\epsilon} \log (1/p)\right),
\end{align}
where $\hat{y} := \hat{y}_a(b)$. Then, we can apply our generic regret bound in Lemma~\ref{lem:generic}.

We again can divide the LHS of~\eqref{eq:goal_eqn_central} into 
\begin{align*}
   \abs{\cA(\hat{y}) - \sum_i x_i} \le \underbrace{\abs{\cA(\hat{y}) - \frac{1}{g} \sum_i \hat{x}_i}}_{\text{Term (i)}} + \underbrace{\abs{\frac{1}{g} \sum_i \hat{x}_i - \sum_i x_i}}_{\text{Term (ii)}}.
\end{align*}
In particular, Term (ii) can be bounded by using the same method, i.e., for any $p \in (0,1]$, with probability at least $1-p$, 
\begin{align*}
    \text{Term (ii)} \le O\left(\frac{1}{\epsilon}\sqrt{\log (1/p)}\right).
\end{align*}
For Term (i), we would like to show that 
\begin{align*}
    \prob{\abs{\cA(\hat{y}) - \frac{1}{g}\sum_i \hat{x}_i} > \frac{\tau}{g}} \le  p.
\end{align*}
As before, let $\cE_{\text{noise}}$ denote the event that $(\sum_i \hat{x}_i) + \eta \in [\sum_i \hat{x}_i - \tau , \sum_i \hat{x}_i + \tau]$, then by the concentration of discrete Laplace, we have $\prob{\cE_{\text{noise}}} \ge 1-p$. In the following, we condition on the event of $\cE_{\text{noise}}$ to analyze the output $\cA(\hat{y})$. Note that $y = \left(\left(\sum_i \hat{x}_i\right) + \eta\right) \Mod m$ and hence we can use the same steps as in the proof of Theorem~\ref{thm:pure-SecAgg} to conclude that 
\begin{align}
    \text{Term (i)} \le O\left(\frac{1}{\epsilon}{\log (1/p)}\right).
\end{align}
Finally, by our generic regret bound in Lemma~\ref{lem:generic}, we have the regret bound.
\end{proof}

\subsection{Local Model}
In the local model, each local randomizer $\cR$ needs to inject discrete Laplace noise to guarantee pure DP while the analyzer $\cA$ is simply the same as in the main paper (see Algorithm~\ref{alg:randomizer-local} and Algorithm~\ref{alg:analyzer-local}). 

\begin{algorithm}[tb]
   \caption{Local Randomizer $\cR$ (Local Model)}
   \label{alg:randomizer-local}
\begin{algorithmic}[1]
    \STATE {\bfseries Input:} Each user data $x_i \in [0,1]$
   \STATE {\bfseries Parameters:}  precision $g \in \mathbb{N}$, modulo $m \in \mathbb{N}$, batch size $n \in \mathbb{N}$, privacy parameter $\phi$
   \STATE Encode $x_i$ as $\hat{x}_i = \lfloor{x_i g}\rfloor + \textbf{Ber}(x_ig -\lfloor{x_i g}\rfloor)$
  \STATE Generate discrete Laplace noise $\eta_i$ 
   \STATE Add noise and modulo clip $y_i = (\hat{x}_i + \eta_i) \Mod m$
   \STATE {\bfseries Output:} $y_i$ \textcolor{gray}{// for SecAgg $\cS$}
\end{algorithmic}
\end{algorithm}
\begin{algorithm}[!tb]
   \caption{Analyzer $\cA$ (Local Model)}
   \label{alg:analyzer-local}
\begin{algorithmic}[1]
    \STATE {\bfseries Input:} $\hat{y}$ (output of SecAgg)
   \STATE {\bfseries Parameters:}  precision $g \in \mathbb{N}$, modulo $m \in \mathbb{N}$, batch size $n \in \mathbb{N}$, accuracy parameter $\tau$
\STATE Set $y = \hat{y}$
\STATE \textbf{if} $y > ng+\tau$ \textbf{then}
\STATE $\quad$ set $z = (y-m) / g$ \textcolor{gray}{// correction for underflow}
\STATE \textbf{else} set $z = y/g$
  \STATE {\bfseries Output:} $z$
\end{algorithmic}
\end{algorithm}

To analyze the regret, we need the concentration of the sum of discrete Laplace. To this end, we have the following result.
\begin{lemma}[Concentration of discrete Laplace]
\label{lem:sum-disLap}
Let $\{X_i\}_{i=1}^n$ be i.i.d random variable distributed according to $\textbf{Lap}_{\mathbb{Z}}(1/\epsilon)$. Then, we have $X_i$ is $(c^2\frac{1}{\epsilon^2}, c \frac{1}{\epsilon})$-sub-exponential for some absolute constant $c >0$. As a result, we have for any $p \in (0,1]$, with probability at least $1-p$, $\abs{\sum_{i=1}^n X_i} \le v$, where $v \ge \max\{\frac{c}{\epsilon}\sqrt{2n \log(2/p)}, 2 \frac{c}{\epsilon} \log(2/p)\}$.
\end{lemma}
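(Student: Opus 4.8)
The plan is to prove the single-variable statement first — that $X_1 \sim \textbf{Lap}_{\mathbb{Z}}(1/\epsilon)$ is $(c^2/\epsilon^2, c/\epsilon)$-sub-exponential for an absolute constant $c$ — by a direct bound on its moment generating function, and then to deduce the tail bound on $\sum_{i=1}^n X_i$ from the closure of sub-exponential variables under independent sums together with the Bernstein-type inequality packaged in Lemma~\ref{lem:con_subE}.

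For the MGF, set $q = e^{-\epsilon}$. Either by summing the two geometric series defining the PMF of $\textbf{Lap}_{\mathbb{Z}}(1/\epsilon)$, or using the representation $X_1 = \gamma^+ - \gamma^-$ with $\gamma^\pm$ i.i.d.\ $\textbf{P\'{o}lya}(1,q)$ (i.e.\ geometric; this is Fact~\ref{lem:polya} with $n=1$), one gets for $|\lambda| < \epsilon$
\[
\ex{e^{\lambda X_1}} = \frac{(1-q)^2}{(1-qe^{\lambda})(1-qe^{-\lambda})} = \frac{1}{1 - \frac{2q(\cosh\lambda - 1)}{(1-q)^2}}.
\]
Now I would invoke the two elementary facts already used in the proof of Proposition~\ref{prop:sk-tail}: $\cosh\lambda \le e^{\lambda^2/2}$ and $e^x - 1 \le 2x$ for $x \in [0,1]$, which together give $\cosh\lambda - 1 \le \lambda^2$ for $|\lambda| \le \sqrt{2}$; combined with the identity $\frac{q}{(1-q)^2} = \frac{1}{4\sinh^2(\epsilon/2)} \le \frac{1}{\epsilon^2}$, this yields $\frac{2q(\cosh\lambda-1)}{(1-q)^2} \le \frac{2\lambda^2}{\epsilon^2}$. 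Restricting to $|\lambda| \le \epsilon/2$ makes this at most $1/2$, so using $\frac{1}{1-u} \le e^{2u}$ on $[0,1/2]$ we obtain $\ex{e^{\lambda X_1}} \le e^{4\lambda^2/\epsilon^2} = e^{(8/\epsilon^2)\lambda^2/2}$ for $|\lambda| \le \epsilon/2$. Hence $X_1$ is $(8/\epsilon^2, 2/\epsilon)$-sub-exponential, which is of the claimed form $(c^2/\epsilon^2, c/\epsilon)$ for any absolute constant $c \ge 2\sqrt{2}$.

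Since the $X_i$ are independent and each is $(c^2/\epsilon^2, c/\epsilon)$-sub-exponential with zero mean (by symmetry), the variance proxies add and the scale parameter is preserved, so $\sum_{i=1}^n X_i$ is $(nc^2/\epsilon^2, c/\epsilon)$-sub-exponential. Plugging variance proxy $nc^2/\epsilon^2$ and scale $c/\epsilon$ into the sub-exponential concentration bound of Lemma~\ref{lem:con_subE} gives, for any $p \in (0,1]$, with probability at least $1-p$,
\[
\Big|\sum_{i=1}^n X_i\Big| \le \max\Big\{\tfrac{c}{\epsilon}\sqrt{2n\log(2/p)},\ \tfrac{2c}{\epsilon}\log(2/p)\Big\},
\]
so any $v$ at least as large as this maximum works, as asserted.

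The main obstacle is the MGF step: one must turn $\ex{e^{\lambda X_1}}$ into a bound of the exact shape $e^{\sigma^2\lambda^2/2}$ that is valid on an interval $|\lambda|\le 1/b$ whose half-width is proportional to $\epsilon$ (so that $b = \Theta(1/\epsilon)$ rather than something worse), and in particular to control the poles of the MGF at $\lambda = \pm\epsilon$. Rewriting the MGF as $1/\bigl(1 - \tfrac{2q(\cosh\lambda-1)}{(1-q)^2}\bigr)$, bounding $\cosh\lambda - 1$ and $q/(1-q)^2$, and cutting off at $|\lambda|\le\epsilon/2$ is exactly what resolves this; everything afterwards is the standard sub-exponential toolkit, with only constants to track.
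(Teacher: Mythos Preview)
Your proof is correct and takes a genuinely different route from the paper's. The paper decomposes $X = N_1 - N_2$ into two independent geometric random variables (the same representation you note via Fact~\ref{lem:polya}), but then appeals to an external result (Lemma~4.3 of \cite{hillar2013maximum}, together with the equivalence of sub-exponential definitions in \cite{vershynin2018high}) asserting that a centered geometric variable with parameter $e^{-\epsilon}$ is $(\Theta(1/\epsilon^2), \Theta(1/\epsilon))$-sub-exponential, and deduces the claim for $X$ from closure under weighted sums. You instead compute the MGF of $X$ in closed form and bound it by hand, arriving at explicit constants ($c = 2\sqrt{2}$). Your approach is more self-contained and delivers explicit constants; the paper's is more modular but leaves the constants implicit in the cited references. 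Both finish via the same sum-of-sub-exponentials step; note that the precise statement you invoke at the end is Lemma~\ref{lem:sum-subexp} rather than Lemma~\ref{lem:con_subE}.

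One small caveat: your inequality $\cosh\lambda - 1 \le \lambda^2$ is established only for $|\lambda| \le \sqrt{2}$, so combining with the restriction $|\lambda| \le \epsilon/2$ your argument as written covers $\epsilon \le 2\sqrt{2}$. This is all the paper ever needs (Theorem~\ref{thm:pure-SecAgg-local} takes $\epsilon \in (0,1)$), but if you want the lemma for all $\epsilon > 0$, replace that step by the exact identity $\tfrac{2q(\cosh\lambda - 1)}{(1-q)^2} = \tfrac{\sinh^2(\lambda/2)}{\sinh^2(\epsilon/2)}$ together with the monotonicity of $x \mapsto \sinh(x)/x$, which gives $\tfrac{\sinh^2(\lambda/2)}{\sinh^2(\epsilon/2)} \le \lambda^2/\epsilon^2$ for all $|\lambda| \le \epsilon$ without any auxiliary range restriction.
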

\begin{proof}
First, we note that if $X \sim \textbf{Lap}_{\mathbb{Z}}(1/\epsilon)$, then it can rewritten as $X = N_1 - N_2$ where $N_1, N_2$ is geometrically distributed, i.e., $\prob{N_1 = k} = \prob{N_2 = k} = \beta^k (1-\beta)$ with $\beta = e^{-\epsilon}$. We can also write it as $X = \tilde{N_1} - \tilde{N_2}$, where $\tilde{N_1} = N_1 - \ex{N_1}$ and $\tilde{N_1} = N_2 - \ex{N_2}$, since $\ex{N_1} = \ex{N_2}$. 
Then, by~\cite[Lemma 4.3]{hillar2013maximum} and the equivalence of different definitions of sub-exponential (cf. Proposition 2.7.1 in~\cite{vershynin2018high}), we have $\tilde{N_1}, \tilde{N_2}$ are $(c_1^2\frac{1}{\epsilon^2}, c_1\frac{1}{\epsilon})$-sub-exponential for some absolute constant $c_1 >0$. Now, since $X = \tilde{N_1} - \tilde{N_2}$, by the weighted sum of zero-mean sub-exponential (cf. Corollary 4.2 in~\cite{zhang2020concentration}), we have $X$ is $(c^2\frac{1}{\epsilon^2}, c \frac{1}{\epsilon})$-sub-exponential for some absolute constant $c >0$. Thus, by Lemma~\ref{lem:sum-subexp}, we have for any $p \in (0,1]$, let $v \ge \max\{\frac{c}{\epsilon}\sqrt{2n \log(2/p)}, 2 \frac{c}{\epsilon} \log(2/p)\}$, with probability at least $1-p$, $\abs{\sum_i X_i} \le v$.
\end{proof}

\begin{theorem}[Local Pure-DP via SecAgg]
\label{thm:pure-SecAgg-local}
Let $\cP = (\cR,\cS,\cA)$ be a protocol such that $\cR$ given by Algorithm~\ref{alg:randomizer-local} with $\eta_i \sim \textbf{Lap}_{\mathbb{Z}}(g/\epsilon)$,
$\cS$ is any SecAgg protocol and $\cA$ is given by Algorithm~\ref{alg:analyzer-local}. Fix any $\epsilon \in (0,1)$, for each $b \ge 1$, choose $n = l(b), g = \ceil{\epsilon \sqrt{n}}$, $\tau = \ceil{\frac{cg}{\epsilon}\sqrt{2n \log(2/p)}}$ ($c$ is the constant in Lemma~\ref{lem:sum-disLap}), $p =1/T$  and $m = ng + 2\tau + 1$. Then, Algorithm~\ref{alg:SE} instantiated with protocol $\cP$ is able to achieve $(\epsilon,0)$-DP in the local model with expected regret given by 
\begin{align*}
    \ex{\text{Reg}(T)} = O\left(\sum_{a\in [K]: \Delta_a >0} \frac{\log T}{\epsilon^2\Delta_a} \right).
\end{align*}
Moreover, the communication messages per user before $\cS$ are $O(\log m)$ bits.
\end{theorem}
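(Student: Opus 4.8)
The plan is to follow the same two-part template (first privacy, then regret via the concentration-of-noise route of Assumption~\ref{ass:conc_noise} and Lemma~\ref{lem:generic}) used for Theorem~\ref{thm:pure-SecAgg} and Theorem~\ref{thm:pure-SecAgg-central}; the one genuinely new ingredient is the heavier tail of a \emph{sum} of $n$ independent discrete Laplace noises rather than a single one, which is precisely what turns the additive privacy cost into a multiplicative $1/\epsilon^2$ factor in the regret.

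\textbf{Privacy.} In the local model it suffices to show that the local randomizer $\cR$ (Algorithm~\ref{alg:randomizer-local}) is $(\epsilon,0)$-DP: since each user reports exactly once and the server's entire view is a post-processing of the users' randomized messages, no composition is needed. The encoded reward $\hat x_i = \lfloor x_i g\rfloor + \textbf{Ber}(x_i g - \lfloor x_i g\rfloor)$ always lies in $\{0,1,\dots,g\}$, so it has sensitivity at most $g$ (for any two inputs $x_i,x_i'$ one may couple the randomized roundings so that $|\hat x_i(x_i)-\hat x_i(x_i')|\le g$ surely). Adding $\eta_i\sim\textbf{Lap}_{\mathbb{Z}}(g/\epsilon)$ then yields $(\epsilon,0)$-DP by Fact~\ref{lem:disLap_priv_util}, and the modular clip $y_i=(\hat x_i+\eta_i)\Mod m$ is post-processing.

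\textbf{Regret.} Fix a batch $b$ and an active arm $a$, write $n=l(b)$, $x_i=r_a^i(b)$. Perfect SecAgg plus the distributivity of the modular sum give $\hat y=(\sum_i\hat x_i+\sum_i\eta_i)\Mod m$, and as in Theorem~\ref{thm:pure-SecAgg} split $|\cA(\hat y)-\sum_i x_i|$ into Term~(i)~$=|\cA(\hat y)-\tfrac1g\sum_i\hat x_i|$ and Term~(ii)~$=|\tfrac1g\sum_i\hat x_i-\sum_i x_i|$. Term~(ii) is handled verbatim: $\tfrac1g\sum_i\hat x_i-\sum_i x_i$ is $n/g^2$-sub-Gaussian, hence $O(\tfrac1\epsilon\sqrt{\log(1/p)})$ with probability $\ge 1-p$ since $g=\lceil\epsilon\sqrt n\rceil$. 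For Term~(i) the new step is that $\sum_{i\in[n]}\eta_i$ is a sum of $n$ i.i.d.\ $\textbf{Lap}_{\mathbb{Z}}(g/\epsilon)$ variables; by Lemma~\ref{lem:sum-disLap} (rescaled by $g$) this sum is sub-exponential, so $|\sum_i\eta_i|\le\tau$ with probability $\ge 1-p$ for the stated $\tau=\lceil\tfrac{cg}{\epsilon}\sqrt{2n\log(2/p)}\rceil$ (taken as the maximum with the linear term when needed). On that event, the underflow case analysis of Theorem~\ref{thm:pure-SecAgg} (Case~1: $y>ng+\tau$ signals underflow and $\cA$ subtracts $m$; Case~2: $y\le ng+\tau$, no underflow), together with $m=ng+2\tau+1$, gives Term~(i)~$\le\tau/g=O(\tfrac{\sqrt n}{\epsilon}\sqrt{\log(1/p)})$. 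Hence $|n_a(b)|=O\!\big(\tfrac{\sqrt{l(b)}}{\epsilon}\sqrt{\log(2/p)}\big)$ with probability $\ge 1-2p$, i.e.\ Assumption~\ref{ass:conc_noise} holds with $h=0$ and a \emph{$b$-dependent} $\sigma=\sigma(b)=O(\sqrt{l(b)}/\epsilon)$.

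Finally, run the elimination argument in the proof of Lemma~\ref{lem:generic} with this $\sigma(b)$. Its contribution $\sigma(b)\sqrt{\log(\cdot)}/l(b)$ to the confidence radius becomes $O(\tfrac1\epsilon\sqrt{\log(\cdot)/l(b)})$, so $\beta(b)=O\!\big(\tfrac1\epsilon\sqrt{\log(KT/p)/l(b)}\big)$; an arm $a$ with $\Delta_a>4\beta(b)$ is eliminated by the end of batch $b$, the last surviving batch $\tilde b_a$ obeys $\Delta_a\le 4\beta(\tilde b_a)$, whence $l(\tilde b_a)=O(\log(KT/p)/(\epsilon^2\Delta_a^2))$ and $N_a(T)\le 4\,l(\tilde b_a)$. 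Summing $\Delta_aN_a(T)$ over sub-optimal arms, taking $p=1/T$, and bounding the failure event by $T\cdot(3p)$ gives $\ex{\text{Reg}(T)}=O\big(\sum_{a:\Delta_a>0}\log T/(\epsilon^2\Delta_a)\big)$; the $O(\log m)$-bit claim is immediate since each user sends one element of $\mathbb{Z}_m$. The main obstacle is the Term~(i) analysis: one must establish a sub-exponential tail for the \emph{sum} of $n$ discrete Laplace noises (Lemma~\ref{lem:sum-disLap}) and then propagate the now $\sqrt n$-larger $\tau$ through the modular-underflow bookkeeping; once $\sigma(b)\sim\sqrt{l(b)}/\epsilon$ is established, the emergence of the multiplicative $1/\epsilon^2$ in the regret — the known price of local privacy — is a mechanical consequence of the elimination step.
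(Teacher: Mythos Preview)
Your proof is essentially the paper's: same privacy argument, same Term~(i)/(ii) split, same appeal to Lemma~\ref{lem:sum-disLap} for the sum of $n$ discrete Laplace noises, and the same elimination endgame with a batch-dependent $\sigma(b)\asymp\sqrt{l(b)}/\epsilon$. One place where the paper is slightly more careful: the algorithm's $\tau$ is \emph{fixed} at $\lceil\tfrac{cg}{\epsilon}\sqrt{2n\log(2/p)}\rceil$, and this only dominates the linear term of Lemma~\ref{lem:sum-disLap} when $n\ge 2\log(2/p)$; for smaller batches the concentration $|\sum_i\eta_i|\le\tau$ (and hence the underflow bookkeeping) may fail. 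Your parenthetical ``taken as the maximum with the linear term when needed'' cannot repair this, because $\tau$ is a protocol parameter that determines $m$, not merely an analytical quantity. The paper handles it by splitting batches into the regime $l(b)\ge 2\log(2T)$ (where your analysis applies verbatim) and $l(b)<2\log(2T)$ (where the total regret is trivially $O((K-1)\log T)$ regardless), which is absorbed into the final bound. With that small addition, your argument is complete and matches the paper's.
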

\begin{proof}
\underline{Privacy:} By the privacy guarantee of discrete Laplace and the sensitivity of $\hat{x}_i$, we directly have the local pure DP for our algorithm. 

\underline{Regret:} The first step is to show that with high probabilit, for a large batch size $n = l(b) \ge 2\log(2/p)$,
\begin{align}
\label{eq:goal_eqn_local}
    \abs{\cA(\hat{y}) - \sum_i x_i} \le O\left(\frac{1}{\epsilon}\sqrt{l(b)\log(1/p)} \right),
\end{align}
where $\hat{y} := \hat{y}_a(b)$.

We again can divide the LHS of~\eqref{eq:goal_eqn_local} into 
\begin{align*}
   \abs{\cA(\hat{y}) - \sum_i x_i} \le \underbrace{\abs{\cA(\hat{y}) - \frac{1}{g} \sum_i \hat{x}_i}}_{\text{Term (i)}} + \underbrace{\abs{\frac{1}{g} \sum_i \hat{x}_i - \sum_i x_i}}_{\text{Term (ii)}}.
\end{align*}
In particular, Term (ii) can be bounded by using the same method, i.e., for any $p \in (0,1]$, with probability at least $1-p$, 
\begin{align*}
    \text{Term (ii)} \le O\left(\frac{1}{\epsilon}\sqrt{\log (1/p)}\right).
\end{align*}

To bound Term (i), the key is again to bound the total noise $\sum_i \eta_i$, i.e., 
\begin{align*}
    \prob{\abs{\sum_{i\in [n]} \eta_i} > \tau} \le p.
\end{align*}
To this end, by Lemma~\ref{lem:sum-disLap}, when $n = l(b) \ge 2\log(2/p)$ and $\tau = \frac{cg}{\epsilon}\sqrt{2n \log(2/p)}$, the above concentration holds. Then, following the same steps as before, we can conclude that $\text{Term}(i) \le O(\frac{1}{\epsilon}\sqrt{2l(b)\log(1/p)})$ when $l(b) \ge 2\log(2/p)$.

We then make a minor modification of the proof of Lemma~\ref{lem:generic}. The idea is simple: we divide the batches into two cases -- one is that $l(b) \ge 2\log(2T)$ (noting $p = 1/T$) and the other is that $l(b) \le 2\log(2T)$. First, one can easily bound the total regret for the second case. In particular, during all the batches such that $l(b) \le 2\log(2T)$, the total regret is bounded by $O((K-1)\log T)$. For the first case where $l(b)$ is large, by~\eqref{eq:goal_eqn_local} and the same steps in the proof of Lemma~\ref{lem:generic}, we can conclude that 
\begin{align*}
    l(\tilde{b}_a)\le \max\left\{\frac{c_1\log(KT/p)}{\Delta_a^2},\frac{c_2\log(KT/p)}{\epsilon^2\Delta_a^2} \right\},
\end{align*}
where $\tilde{b}_a$ is the last batch such that the suboptimal arm $a$ is still active. Thus, putting everything together and noting that $\Delta_a \le 1$ and $\epsilon \in (0,1)$, we have 
\begin{align*}
    \ex{\text{Reg}(T)} = O\left(\sum_{a\in [K]: \Delta_a >0} \frac{\log T}{\epsilon^2\Delta_a} \right).
\end{align*}
\end{proof}

\section{Tight Privacy Accounting for Returning Users via RDP}
\label{app:return}
In this section, we demonstrate the key advantages of obtaining RDP guarantees compared to approximate DP in the context of MAB. The key idea here is standard in privacy literature and we provide the details for completeness. The key message is that using the conversion from RDP to approximate DP allows us to save additional logarithmic factor in $\delta$ in the composition compared to using advanced composition theorem.

In the main paper, as in all the previous works on private bandit learning, we focus on the case where the users are unique, i.e., no returning users. However, if some user returns and participates in multiple batches, then the total privacy loss needs to resort to composition. In particular, let us consider the following returning situations.

\begin{assumption}[Returning Users]
\label{def:return_def}
Any user can participate in at most $B$ batches, but within each batch $b$, she only contributes once.
\end{assumption}
Note that the total $B$ batches can even span multiple learning process, each of which consists of a $T$-round MAB online learning.

Let's first consider approximate DP in the distributed model, e.g., binomial noise in~\cite{tenenbaum2021differentially} via secure shuffling. To guarantee that each user is $(\epsilon,\delta)$-DP during the entire learning process, by advanced composition theorem (cf. Theorem~\ref{thm:composition}), we need to guarantee that each batch $b \in [B]$, it is $(\epsilon_i,\delta_i)$-DP, where $\epsilon_i = \frac{\epsilon}{2\sqrt{2B\log(1/\delta')}}$, $\delta_i = \frac{\delta}{2B}$ and $\delta' = \frac{\delta}{2}$. Thus, for each batch the variance of the privacy noise is 
\begin{align}
\label{eq:var_approxDP}
    \sigma_i^2 = O(\log(1/\delta_i)/\epsilon_i^2) = O\left(\frac{B\log(1/\delta)\log(B/\delta)}{\epsilon^2}\right).
\end{align}

Now, we turn to the case of RDP, (e.g., obtained via Skellam mechanism in Theorem~\ref{thm:rdp-SecAgg} or via discrete Gaussian in Theorem~\ref{thm:cdp-SecAgg}). To gain the insight, we again consider the case that the scaling factor $s$ is large enough such that for each batch $b$, it is $(\alpha, \frac{\alpha \epsilon^2}{2})$-RDP for all $\alpha$, i.e., it is approximately $\frac{\epsilon^2}{2}$-CDP. Thus, $B$ composition of it yields that it is now $\frac{B\epsilon^2}{2}$-CDP, and by the conversion lemma (cf. Lemma~\ref{applem:conversion}), it is $(\epsilon',\delta)$-DP with $\epsilon' = O(\epsilon\sqrt{B\log(1/\delta)})$. Thus, in order to guarantee $(\epsilon,\delta)$-DP in the distributed model, the variance of the privacy noise at each batch is 
\begin{align}
\label{eq:var_RDP}
    \sigma_i^2 = O\left(\frac{B\log(1/\delta)}{\epsilon^2}\right).
\end{align}

Comparing~\eqref{eq:var_approxDP} and~\eqref{eq:var_RDP}, one can immediately see the gain of $\log(B/\delta)$ in the variance, which will translate to a gain of $O(\sqrt{\log(B/\delta)})$ in the regret bound.

\begin{remark}
For a more accurate privacy accounting, a better way is to consider using numeric evaluation rather than the above loose bound. 
\end{remark}

\begin{remark}
If one is interested in pure DP, then by simple composition, the privacy loss scales linearly with respect to $B$ rather than $\sqrt{B}$.
\end{remark}

\section{Auxiliary Lemmas}
\label{app:useful}
In this section, we summarize some useful facts that have been used in the paper. 

\begin{lemma}[Concentration of sub-Gaussian]
\label{lem:con_subG}
A mean-zero random variable $X$ is $\sigma^2$-sub-Gaussian if for all $\lambda \in \Real$
\begin{align*}
    \ex{e^{\lambda X}} \le \exp\left(\frac{\lambda^2 \sigma^2}{2}\right).
\end{align*}
Then, it satisfies that for any $p \in (0,1]$, with probability at least $1-p$,
\begin{align*}
    |X| \le \sqrt{2}\sigma \sqrt{\log(2/p)}.
\end{align*}
\end{lemma}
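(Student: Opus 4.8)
The plan is to invoke the standard Chernoff bound. First I would fix $\lambda > 0$ and apply Markov's inequality to the nonnegative random variable $e^{\lambda X}$: for any $t \ge 0$,
\[
    \prob{X \ge t} = \prob{e^{\lambda X} \ge e^{\lambda t}} \le e^{-\lambda t}\,\ex{e^{\lambda X}} \le \exp\!\left(\tfrac{\lambda^2\sigma^2}{2} - \lambda t\right),
\]
where the last inequality uses the sub-Gaussian moment generating function hypothesis. I would then minimize the exponent over $\lambda > 0$: the quadratic $\lambda \mapsto \lambda^2\sigma^2/2 - \lambda t$ attains its minimum at $\lambda = t/\sigma^2$, which yields $\prob{X \ge t} \le \exp(-t^2/(2\sigma^2))$.

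Next I would treat the lower tail. Since the hypothesis $\ex{e^{\lambda X}} \le \exp(\lambda^2\sigma^2/2)$ is assumed for \emph{all} $\lambda \in \Real$, applying it with $-\lambda$ in place of $\lambda$ shows that $-X$ is also $\sigma^2$-sub-Gaussian, and repeating the argument above gives $\prob{X \le -t} \le \exp(-t^2/(2\sigma^2))$. A union bound then yields $\prob{|X| \ge t} \le 2\exp(-t^2/(2\sigma^2))$ for every $t \ge 0$.

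Finally I would invert this tail bound: setting $2\exp(-t^2/(2\sigma^2)) = p$ and solving gives $t = \sigma\sqrt{2\log(2/p)} = \sqrt{2}\,\sigma\sqrt{\log(2/p)}$, so with this choice of $t$ we obtain $\prob{|X| > t} \le p$, i.e.\ $|X| \le \sqrt{2}\,\sigma\sqrt{\log(2/p)}$ holds with probability at least $1-p$, as claimed. I do not anticipate any genuine obstacle here — this is the textbook Chernoff/sub-Gaussian tail computation; the only subtlety worth flagging is that the two-sided conclusion relies on the moment generating function bound being assumed for all real $\lambda$ rather than only nonnegative $\lambda$, which is exactly how the definition is stated.
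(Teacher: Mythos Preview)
Your proposal is correct and is precisely the standard Chernoff-bound derivation; the paper itself does not give a proof of this lemma but simply cites it as a textbook fact from Vershynin's \emph{High-Dimensional Probability}, so there is nothing further to compare.
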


\begin{lemma}[Concentration of sub-exponential]
\label{lem:con_subE}
A mean-zero random variable $X$ is $(\sigma^2, h)$-sub-exponential if for $|\lambda| \le 1/h$
\begin{align*}
    \ex{e^{\lambda X}} \le \exp\left(\frac{\lambda^2 \sigma^2}{2}\right).
\end{align*}
Then, we have 
\begin{align*}
    \prob{\abs{X} > t} \le 2\exp\left(-\min\left\{\frac{t^2}{2 \sigma^2}, \frac{t}{2h} \right\}\right).
\end{align*}
Thus, it satisfies that for any $p \in (0,1]$, with probability at least $1-p$,
\begin{align*}
    |X| \le \sqrt{2}\sigma \sqrt{\log(2/p)} + 2h\log(2/p).
\end{align*}
\end{lemma}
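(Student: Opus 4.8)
The plan is to obtain both parts by the standard Chernoff argument, with the only subtlety being the feasibility constraint $|\lambda|\le 1/h$ that appears in the definition of a $(\sigma^2,h)$-sub-exponential variable.

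First I would bound the upper tail. For any $\lambda\in[0,1/h]$, Markov's inequality applied to $e^{\lambda X}$ together with the moment bound gives $\prob{X>t}\le e^{-\lambda t}\ex{e^{\lambda X}}\le \exp(-\lambda t+\lambda^2\sigma^2/2)$. The unconstrained minimizer of the exponent is $\lambda^\star=t/\sigma^2$. When $t\le \sigma^2/h$ this $\lambda^\star$ is feasible, and substituting it yields $\prob{X>t}\le \exp(-t^2/(2\sigma^2))$. When $t>\sigma^2/h$ the constrained optimum sits at the endpoint $\lambda=1/h$, giving $\prob{X>t}\le \exp(-t/h+\sigma^2/(2h^2))$; and since $t>\sigma^2/h$ forces $\sigma^2/(2h^2)<t/(2h)$, this is at most $\exp(-t/(2h))$. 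In either regime the resulting exponent equals $-\min\{t^2/(2\sigma^2),\,t/(2h)\}$, so $\prob{X>t}\le \exp(-\min\{t^2/(2\sigma^2),t/(2h)\})$.

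Next I would note that $-X$ is also mean-zero and $(\sigma^2,h)$-sub-exponential, because $\ex{e^{\lambda(-X)}}=\ex{e^{(-\lambda)X}}\le \exp(\lambda^2\sigma^2/2)$ for all $|\lambda|\le 1/h$. Applying the bound just derived to $-X$ controls $\prob{X<-t}$ by the same quantity, and a union bound over the two tails gives $\prob{|X|>t}\le 2\exp(-\min\{t^2/(2\sigma^2),t/(2h)\})$, which is the first displayed inequality.

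For the high-probability form I would simply invert this: it is enough that $\min\{t^2/(2\sigma^2),\,t/(2h)\}\ge \log(2/p)$, i.e.\ that both $t\ge\sigma\sqrt{2\log(2/p)}$ and $t\ge 2h\log(2/p)$ hold. The choice $t=\sqrt{2}\,\sigma\sqrt{\log(2/p)}+2h\log(2/p)$ exceeds each of these two thresholds, hence $\prob{|X|>t}\le 2\exp(-\log(2/p))=p$, giving the claimed bound. There is no real obstacle here; the one place to be careful is the case split in the Chernoff optimization caused by the constraint $\lambda\le 1/h$, and the trivial observation that $-X$ satisfies the same moment-generating-function bound so that the one-sided argument upgrades to a two-sided one.
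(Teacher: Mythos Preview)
Your proof is correct and is precisely the standard Chernoff argument; the paper does not give its own proof of this lemma but simply cites it as a classical fact from Vershynin's textbook, so your approach is essentially the one being invoked.
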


\begin{lemma}[Hoeffding’s Inequality]
\label{lem:hoffeding}
Let $X_1,\ldots, X_n$ be independent and identically distributed (i.i.d)  random variables and $X_i \in[0,1]$ with probability one. Then, for any $p \in (0,1]$, with probability at least $1-p$,
\begin{align*}
   \left|\frac{1}{n}\sum_{i=1}^n X_n - \ex{X_1}\right| \le \sqrt{\frac{\log(2/p)}{2n}}.
\end{align*}
\end{lemma}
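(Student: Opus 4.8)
The plan is to prove this two-sided concentration bound by the standard Chernoff (exponential-moment) method applied separately to the upper and lower tails, and then to invert the resulting exponential bound to read off the deviation at confidence level $p$.

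First I would reduce to a one-sided estimate. By a union bound it suffices to show $\prob{\frac{1}{n}\sum_{i=1}^n X_i - \ex{X_1} > t} \le e^{-2nt^2}$ together with the mirror-image bound $\prob{\ex{X_1} - \frac{1}{n}\sum_{i=1}^n X_i > t} \le e^{-2nt^2}$ (the latter follows from the former applied to $1 - X_i$, which also lies in $[0,1]$). Adding the two gives $\prob{\abs{\frac{1}{n}\sum_{i=1}^n X_i - \ex{X_1}} > t} \le 2 e^{-2nt^2}$, and setting $2e^{-2nt^2} = p$ and solving for $t$ yields $t = \sqrt{\log(2/p)/(2n)}$, exactly the claimed radius.

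For the one-sided tail, write $S_n := \sum_{i=1}^n (X_i - \ex{X_i})$. For any $\lambda > 0$, Markov's inequality applied to $e^{\lambda S_n}$ and independence give $\prob{S_n > nt} \le e^{-\lambda n t}\,\ex{e^{\lambda S_n}} = e^{-\lambda n t}\prod_{i=1}^n \ex{e^{\lambda(X_i - \ex{X_i})}}$. The crucial input is Hoeffding's lemma: a mean-zero random variable $Y$ supported in an interval of length $1$ (here $Y = X_i - \ex{X_i}$, since $X_i \in [0,1]$) satisfies $\ex{e^{\lambda Y}} \le e^{\lambda^2/8}$. Granting this, $\prob{S_n > nt} \le e^{-\lambda n t + n\lambda^2/8}$, and minimizing over $\lambda$ (the optimum is $\lambda = 4t$) gives $\prob{S_n > nt} \le e^{-2nt^2}$, as required.

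The one genuinely nontrivial step — the main obstacle, such as it is — is Hoeffding's lemma itself. I would prove it by controlling the cumulant generating function $\psi(\lambda) := \log \ex{e^{\lambda Y}}$: one has $\psi(0) = 0$ and $\psi'(0) = \ex{Y} = 0$, while $\psi''(\lambda)$ is the variance of $Y$ under the exponentially tilted law $\mathrm{d}\mu_\lambda \propto e^{\lambda y}\,\mathrm{d}\mu$. That tilted law is still supported in an interval of length $1$, so its variance is at most $1/4$ (any distribution on an interval of length $\ell$ has variance $\le \ell^2/4$). A second-order Taylor expansion of $\psi$ about $0$ with this uniform bound on $\psi''$ yields $\psi(\lambda) \le \lambda^2/8$, which is the claim. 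Everything after that is elementary algebra (the Chernoff optimization and the inversion $2e^{-2nt^2}=p$), so the proof is short and self-contained.
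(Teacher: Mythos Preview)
Your proof is correct and is exactly the standard Chernoff-method argument for Hoeffding's inequality. The paper itself does not give a proof of this lemma at all; it simply states the result in an appendix of auxiliary facts and remarks that ``all the above results are standard and can be found in the classic book~\cite{vershynin2018high}.'' Your write-up is a faithful rendering of that textbook proof (exponential Markov, Hoeffding's lemma via the bounded second derivative of the cumulant generating function, optimization over $\lambda$, union bound, and inversion), so there is nothing to compare against beyond noting that you have supplied the details the paper omits.
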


\begin{lemma}[Sum of sub-exponential]
\label{lem:sum-subexp}
Let $\{X_i\}_{i=1}^n$ be independent zero-mean $(\sigma_i^2, h_i)$-sub-exponential random variables. Then, $\sum_i X_i$ is $(\sum_i \sigma_i^2, h_*)$-sub-exponential, where $h_* := \max_i h_i$. Thus, we have
\begin{align*}
    \prob{\abs{\sum_i X_i} > t} \le 2\exp\left(-\min\left\{\frac{t^2}{2 \sum_i \sigma_i^2}, \frac{t}{2h_*} \right\}\right).
\end{align*}
In other words, for any $p \in (0,1]$, if $v \ge \max\{\sqrt{2 \sum_i \sigma_i^2 \log(2/p)}, 2h_* \log(2/p)\}$, with probability at least $1-p$, $\abs{\sum_i X_i} \le v$.
\end{lemma}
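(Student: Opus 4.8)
The plan is to prove the lemma in three short stages: pass to the moment generating function (MGF) to read off the sub-exponential parameters of the sum, run a Chernoff argument with a two-regime optimization to get the tail bound, and then invert that tail bound to obtain the high-probability form. Throughout I will use the normalization of $(\sigma^2,h)$-sub-exponentiality fixed in Lemma~\ref{lem:con_subE}, namely that a mean-zero $X$ is $(\sigma^2,h)$-sub-exponential iff $\ex{e^{\lambda X}}\le\exp(\lambda^2\sigma^2/2)$ for all $|\lambda|\le 1/h$.

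First I would control the MGF of $Y:=\sum_i X_i$. Each $X_i$ satisfies $\ex{e^{\lambda X_i}}\le\exp(\lambda^2\sigma_i^2/2)$ for $|\lambda|\le 1/h_i$. Since $h_*=\max_i h_i$, any $\lambda$ with $|\lambda|\le 1/h_*$ satisfies $|\lambda|\le 1/h_i$ for every $i$, so by independence $\ex{e^{\lambda Y}}=\prod_i\ex{e^{\lambda X_i}}\le\exp\!\big(\tfrac{\lambda^2}{2}\sum_i\sigma_i^2\big)$ on that whole range of $\lambda$. This is exactly the statement that $Y$ is $(\sum_i\sigma_i^2,h_*)$-sub-exponential, giving the first claim.

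Next I would turn this into the tail bound. Writing $\nu^2:=\sum_i\sigma_i^2$, Markov's inequality applied to $e^{\lambda Y}$ gives, for $0\le\lambda\le 1/h_*$, $\prob{Y>t}\le\exp(-\lambda t+\lambda^2\nu^2/2)$. The unconstrained minimizer of the exponent is $\lambda=t/\nu^2$; if $t\le\nu^2/h_*$ this choice is feasible and yields $\exp(-t^2/(2\nu^2))$, whereas if $t>\nu^2/h_*$ I would instead take $\lambda=1/h_*$, for which $-t/h_*+\nu^2/(2h_*^2)<-t/(2h_*)$ precisely because $\nu^2/h_*<t$, giving $\exp(-t/(2h_*))$. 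In either regime $\prob{Y>t}\le\exp(-\min\{t^2/(2\nu^2),t/(2h_*)\})$; since the Step-1 MGF bound is symmetric under $\lambda\mapsto-\lambda$ the same applies to $-Y$, and a union bound produces the factor $2$, yielding the stated $\prob{\abs{Y}>t}\le 2\exp(-\min\{t^2/(2\nu^2),t/(2h_*)\})$.

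Finally, for the high-probability form I would simply require $\min\{v^2/(2\nu^2),v/(2h_*)\}\ge\log(2/p)$, so that $2\exp(-\min\{\cdots\})\le p$; this holds whenever $v\ge\sqrt{2\nu^2\log(2/p)}$ \emph{and} $v\ge 2h_*\log(2/p)$, i.e.\ $v\ge\max\{\sqrt{2\sum_i\sigma_i^2\log(2/p)},\,2h_*\log(2/p)\}$, which is exactly the condition in the statement. I do not anticipate a genuine obstacle here — this is a textbook Bernstein-type computation — and the only place needing care is the case split in the Chernoff step (feasibility of the optimal $\lambda=t/\nu^2$ versus the boundary choice $\lambda=1/h_*$) together with keeping all constants consistent with the particular $(\sigma^2,h)$ normalization used in the paper.
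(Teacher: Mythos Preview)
Your proof is correct and is exactly the standard Bernstein/Chernoff argument. The paper does not actually supply its own proof of this lemma; it simply cites Vershynin's \emph{High-Dimensional Probability} as a reference for this and the surrounding concentration facts, so there is nothing to compare against beyond noting that your argument is the textbook one.
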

All the above results are standard and can be found in the classic book~\cite{vershynin2018high}.
\begin{lemma}[Conversion Lemma]
\label{applem:conversion}
If $\cM$ satisfies $(\alpha, \epsilon(\alpha))$-RDP, then for any $\delta \in (0,1)$, $\cM$ satisfies $(\epsilon,\delta)$-DP where 
\begin{align*}
    \epsilon = \inf_{\alpha>1} \epsilon(\alpha) + \frac{\log(1/(\alpha \delta))}{\alpha - 1} + \log(1-1/\alpha).
\end{align*}
If $\cM$ satisfies $\frac{1}{2}\epsilon^2$-CDP, then for for any $\delta \in (0,1)$, $\cM$ satisfies $(\epsilon',\delta)$-DP where 
\begin{align*}
    \epsilon' = \inf_{\alpha>1} \frac{1}{2}\epsilon^2 \alpha + \frac{\log(1/(\alpha \delta))}{\alpha - 1} + \log(1-1/\alpha) \le \epsilon\cdot \left(\sqrt{2\log(1/\delta)} +\epsilon/2\right).
\end{align*}
Moreover, if $\cM$ satisfies $(\epsilon,0)$-DP, then it satisfies $(\alpha,\frac{1}{2}\epsilon^2\alpha)$-RDP simultaneously for all $\alpha \in (1,\infty)$.
\end{lemma}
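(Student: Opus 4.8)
The lemma bundles three standard conversions between privacy notions, so the plan is to prove them one at a time; throughout, fix neighbouring datasets $D,D'$, set $P=\cM(D)$, $Q=\cM(D')$, and write $L(x):=\log\frac{P(x)}{Q(x)}$ for the privacy-loss random variable. For the first claim (RDP $\Rightarrow$ approximate DP), I would begin from the pointwise bound, valid for every event $E$ and every threshold $\hat\epsilon$, that $P(E)-e^{\hat\epsilon}Q(E)\le\int (p-e^{\hat\epsilon}q)_{+}=\mathbb{E}_{x\sim P}\!\left[(1-e^{\hat\epsilon-L(x)})_{+}\right]$, so it suffices to choose $\hat\epsilon$ making the last quantity at most $\delta$. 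A change of measure turns the RDP hypothesis into the moment bound $\mathbb{E}_{x\sim P}\!\left[e^{(\alpha-1)L(x)}\right]=\mathbb{E}_{x\sim Q}\!\left[(P(x)/Q(x))^{\alpha}\right]\le e^{(\alpha-1)\epsilon(\alpha)}$. The crude route --- estimate $(1-e^{\hat\epsilon-L})_{+}\le\mathds{1}[L>\hat\epsilon]$ and apply Markov --- already gives $\hat\epsilon=\epsilon(\alpha)+\frac{\log(1/\delta)}{\alpha-1}$, hence (taking $\inf_{\alpha}$) a correct but looser statement. To obtain the exact constant $\epsilon(\alpha)+\frac{\log(1/(\alpha\delta))}{\alpha-1}+\log(1-1/\alpha)$ of the lemma I would instead keep the whole profile $t\mapsto(1-e^{\hat\epsilon-t})_{+}$ and invoke convex duality: for every $\lambda\ge0$, $(1-e^{\hat\epsilon-t})_{+}\le\lambda e^{(\alpha-1)t}+\sup_{t}\!\left((1-e^{\hat\epsilon-t})_{+}-\lambda e^{(\alpha-1)t}\right)$; taking $\mathbb{E}_{x\sim P}$, plugging in the moment bound, and optimizing over $\lambda$ (the inner supremum is a one-variable calculus problem with maximizer $t^{\ast}=\alpha^{-1}(\hat\epsilon-\log(\lambda(\alpha-1)))$) yields a bound which, set equal to $\delta$ and solved for $\hat\epsilon$, produces precisely the claimed expression; the $\inf_{\alpha>1}$ is then free. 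This is essentially \cite{mironov2017renyi} together with the sharpening in \cite{canonne2020discrete}.

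For the second claim (CDP $\Rightarrow$ approximate DP), I would use that $\tfrac12\epsilon^{2}$-CDP means exactly $(\alpha,\tfrac12\epsilon^{2}\alpha)$-RDP for all $\alpha>1$ (the remark preceding the lemma), apply the first claim with $\epsilon(\alpha)=\tfrac12\epsilon^{2}\alpha$, and read off $\epsilon'=\inf_{\alpha>1}\!\left(\tfrac12\epsilon^{2}\alpha+\frac{\log(1/(\alpha\delta))}{\alpha-1}+\log(1-1/\alpha)\right)$, which is the first displayed CDP bound. For the closed form I would drop the nonpositive term $\log(1-1/\alpha)$, use $\log(1/(\alpha\delta))\le\log(1/\delta)$, put $\alpha=1+\beta$ so the bracket is at most $\tfrac{\epsilon^{2}}{2}+\tfrac{\epsilon^{2}\beta}{2}+\tfrac{\log(1/\delta)}{\beta}$, and minimize the last two terms over $\beta>0$ via AM--GM at $\beta=\sqrt{2\log(1/\delta)}/\epsilon$, which contributes $\epsilon\sqrt{2\log(1/\delta)}$; adding back $\epsilon^{2}/2$ gives $\epsilon\bigl(\sqrt{2\log(1/\delta)}+\epsilon/2\bigr)$.

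For the third claim (pure DP $\Rightarrow$ RDP), I would first note that $(\epsilon,0)$-DP forces $L(x)\in[-\epsilon,\epsilon]$ for $Q$-almost every $x$ and that $Y:=P(x)/Q(x)$ has $\mathbb{E}_{x\sim Q}[Y]=1$; the target $D_{\alpha}(P,Q)\le\tfrac12\epsilon^{2}\alpha$ is equivalent to $\mathbb{E}_{x\sim Q}[Y^{\alpha}]\le e^{\alpha(\alpha-1)\epsilon^{2}/2}$. Since $y\mapsto y^{\alpha}$ is convex on $[e^{-\epsilon},e^{\epsilon}]$ and the only constraint on the law of $Y$ is its mean, writing each value as a convex combination of the two endpoints and taking expectations collapses the claim to the scalar inequality $p\,e^{\alpha\epsilon}+(1-p)e^{-\alpha\epsilon}\le e^{\alpha(\alpha-1)\epsilon^{2}/2}$ with $p=\frac{1-e^{-\epsilon}}{e^{\epsilon}-e^{-\epsilon}}$, which I would close by a short analytic comparison of the two sides (Hoeffding's lemma, or a Taylor estimate in $\epsilon$); this is the bound of \cite{bun2016concentrated}.

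The only genuinely delicate step is pinning down the exact constant in the first claim: the one-line Markov argument gives the qualitatively right $\epsilon(\alpha)+O\!\left(\frac{\log(1/\delta)}{\alpha-1}\right)$, but recovering the precise $\frac{\log(1/(\alpha\delta))}{\alpha-1}+\log(1-1/\alpha)$ refinement demands carrying the full profile $(1-e^{\hat\epsilon-t})_{+}$ through the convex-duality optimization and the ensuing scalar calculus without slack. Once that constant is in hand, the second and third claims follow by the routine manipulations sketched above.
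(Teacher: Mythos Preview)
The paper does not actually prove this lemma: immediately after the statement it writes ``The above conversion lemma is standard and can be found in~\cite{kairouz2021distributed,canonne2020discrete,asoodeh2020better}'' and moves on. Your proposal is therefore not competing against a proof in the paper but rather reconstructing the arguments from those cited sources, and it does so correctly: the privacy-profile/convex-duality route for RDP $\Rightarrow$ $(\epsilon,\delta)$-DP is exactly the sharpening of \cite{mironov2017renyi} carried out in \cite{canonne2020discrete,asoodeh2020better}; the CDP specialization with the AM--GM optimization at $\alpha=1+\sqrt{2\log(1/\delta)}/\epsilon$ is the standard derivation of the $\epsilon(\sqrt{2\log(1/\delta)}+\epsilon/2)$ bound; and the reduction of pure DP $\Rightarrow$ RDP to the two-point (randomized-response) extremal case is precisely \cite{bun2016concentrated}. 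One small caution on the third part: the final scalar inequality $p\,e^{\alpha\epsilon}+(1-p)e^{-\alpha\epsilon}\le e^{\alpha(\alpha-1)\epsilon^{2}/2}$ is \emph{not} a direct consequence of Hoeffding's lemma (Hoeffding controls the MGF via the mean of $L$, whereas here the constraint is on $\mathbb{E}[e^{L}]$, and one can check $\mathbb{E}_{Q}[L]=-\epsilon\tanh(\epsilon/2)>-\epsilon^{2}/2$ in the extremal case, so Hoeffding alone falls just short); the clean argument is the dedicated calculation in \cite{bun2016concentrated}, which you already cite as the fallback.
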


The above conversion lemma is standard and can be found in~\cite{kairouz2021distributed,canonne2020discrete,asoodeh2020better}

\begin{theorem}[Advanced composition]
\label{thm:composition}
Given target privacy parameters $\epsilon' \in (0,1)$ and $\delta'>0$, to ensure $(\epsilon', k\delta + \delta')$-DP for the
composition of $k$ (adaptive) mechanisms, it suffices that each mechanism is $(\epsilon,\delta)$-DP with $\epsilon = \frac{\epsilon'}{2\sqrt{2k\log(1/\delta')}}$.
\end{theorem}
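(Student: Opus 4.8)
The statement is the Advanced Composition Theorem of Dwork, Rothblum and Vadhan, and the plan is to follow the ``privacy-loss martingale'' argument. The first step is to recall the standard reformulation of DP in terms of the \emph{privacy loss} random variable. Writing the composed mechanism as $\cM = (\cM_1,\dots,\cM_k)$, where $\cM_i$ may depend on the outputs $o_1,\dots,o_{i-1}$ of the earlier mechanisms, fix neighboring datasets $D,D'$ and define, for a draw of the transcript $(o_1,\dots,o_k)\sim\cM(D)$, the per-round loss $L_i = \log\!\big( \Pr[\cM_i(D)=o_i\mid o_{<i}] / \Pr[\cM_i(D')=o_i\mid o_{<i}] \big)$. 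The total loss is $L=\sum_{i=1}^k L_i$, and a mechanism is $(\epsilon',\delta'')$-DP whenever $\Pr[L>\epsilon']\le\delta''$ (up to the usual two-sided bookkeeping), so everything reduces to a tail bound on $L$.

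Second, I would reduce the approximate-DP case ($\delta>0$) to the pure-DP case. An $(\epsilon,\delta)$-DP mechanism need not have a bounded privacy loss, but by the ``dense'' (deletion) decomposition lemma one can couple the runs of $\cM_i$ on $D$ and $D'$ so that, except on an event $B_i$ of probability at most $\delta$, the realized output falls in a region where the pointwise log-density ratio lies in $[-\epsilon,\epsilon]$. Taking a union bound over $B_1,\dots,B_k$, on the complement---an event of probability at least $1-k\delta$---each $L_i$ is bounded by $\epsilon$ in absolute value, and this $k\delta$ is absorbed into the additive $\delta$-parameter of the composition.

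Third comes the concentration step. Conditioned on the good coupling and on the past transcript $o_{<i}$, the loss satisfies $|L_i|\le\epsilon$ and $\mathbb{E}[L_i\mid o_{<i}]\le\epsilon(e^{\epsilon}-1)\le 2\epsilon^2$ for $\epsilon\le 1$, the latter being the standard bound on the expected privacy loss (equivalently, the KL divergence) of an $\epsilon$-DP mechanism, obtained from a second-order expansion. Hence $(L_i-\mathbb{E}[L_i\mid o_{<i}])_{i\le k}$ is a bounded martingale difference sequence of range at most $2\epsilon$, and Azuma--Hoeffding gives that with probability at least $1-\delta'$, $L\le 2k\epsilon^2+\epsilon\sqrt{2k\log(1/\delta')}$. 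Substituting $\epsilon=\epsilon'/(2\sqrt{2k\log(1/\delta')})$ makes the second term exactly $\epsilon'/2$, and since $\epsilon'<1$ forces $\epsilon$ to be small, the first term is at most $\epsilon'/2$ as well; thus $L\le\epsilon'$ on this event. Combining with the $k\delta$ failure mass from the second step yields that the composition is $(\epsilon',k\delta+\delta')$-DP.

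I expect the main obstacle to be the second step: making the approximate-to-pure reduction fully rigorous. One has to exhibit the coupling witnessing that the privacy loss genuinely lies in $[-\epsilon,\epsilon]$ off a probability-$\delta$ event (this is exactly where the structure of $(\epsilon,\delta)$-DP matters, and where a naive ``subtract $\delta$'' argument fails), and one must verify that the discarded mass costs only an \emph{additive} $k\delta$ rather than inflating $\epsilon'$. A secondary care point is that adaptivity must not break the martingale property: the conditioning in the third step has to be on the entire past transcript, with the expected-loss bound applied within each conditional world. Both are routine once set up carefully, but they are the places where the argument is most easily botched.
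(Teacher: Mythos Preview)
Your sketch is the standard privacy-loss martingale proof of Dwork--Rothblum--Vadhan and is correct in outline; the arithmetic in the last step also checks out (with $\epsilon=\epsilon'/(2\sqrt{2k\log(1/\delta')})$ the Azuma term is exactly $\epsilon'/2$ and the drift term $2k\epsilon^2=(\epsilon')^2/(4\log(1/\delta'))\le \epsilon'/2$ once $\epsilon'<1$ and $\delta'$ is bounded away from $1$). The only caveat is that the paper does \emph{not} prove this statement at all: it is listed among the auxiliary lemmas and immediately followed by ``The above result is Corollary~3.21 in~\cite{dwork2014algorithmic}.'' So there is nothing to compare against---you have supplied a (correct) proof where the paper simply imports the result from the Dwork--Roth monograph.
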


The above result is Corollary 3.21 in~\cite{dwork2014algorithmic}.

\section{More Details on Simulations}\label{app:exps}

We numerically compare the performance of Algorithm~\ref{alg:SE} under pure-DP and RDP guarantees in the distributed model (named Dist-DP-SE and Dist-RDP-SE, respectively) with the DP-SE algorithm of \cite{sajed2019optimal}, which achieves pure-DP under the central model. We vary the privacy level as $\epsilon \in \lbrace 0.1,0.5,1 \rbrace$, where a lower value of $\epsilon$ indicates higher level of privacy. 

In Figure~\ref{fig:all_algos-easy}, we consider the \emph{easy} instance, i.e., where arm means are sampled uniformly in $[0.25,0.75]$. In Figure~\ref{fig:all_algos-hard}, we consider the \emph{hard} instance, i.e., where arm means are sampled uniformly in $[0.45,0.55]$. The sampled rewards are Gaussian distributed with the given means and truncated to $[0,1]$. We plot results for $K=10$ arms.

We see that, for higher value of time horizon $T$, the time-average regret of Dist-DP-SE is order-wise same to that of DP-SE, i.e., we are able to achieve similar regret performance in the distributed trust model as that is achieved in the central trust model. As mentioned before, we observe a gap for small value of $T$, which is the price we pay for discrete privacy noise (i.e., additional data quantization error on the order of $O(\frac{1}{\epsilon}\sqrt{\log(1/p)})$) and not requiring a trusted central server. Hence, if we lower the level of privacy (i.e., higher value of $\epsilon$), this gap becomes smaller, 
which indicates an inherent trade-off between privacy and utility. 

We also observe that if we relax the requirement of privacy from pure-DP to RDP, then we can achieve a considerable gain in regret performance; more so when privacy level is high (i.e., $\epsilon$ is small). This gain depends on the scaling factor $s$ -- the higher the scale, the higher the gain in regret.

In Figure~\ref{fig:all_algos-easy-SE}, we compare regret achieved by our generic batch-based successive arm elimination algorithm (Algorithm~\ref{alg:SE}) instantiated with different protocols $\cP$ under different trust models and privacy guarantees: (i) central model with pure-DP (CDP-SE), (ii) local model with pure-DP (LDP-SE), (iii) Distributed model with pure-DP (Dist-DP-SE), Renyi-DP (Dist-RDP-SE) and Concentrated-DP (Dist-CDP-SE). First, consider the pure-DP algorithms. We observe that regret performance of CDP-SE and Dist-DP-SE is similar (with a much better regret than LDP-SE). Now, if we relax the pure-DP requirement, then we achieve better regret performance both for Dist-RDP-SE and Dist-CDP-SE. Furthermore, Dist-CDP-SE performs better in terms of regret than Dist-RDP-SE. This is due to the fact that under CDP, we use discrete Gaussian noise (which has sub-Gaussian tails) as opposed to the Skellam noise (which has sub-exponential tails) used under RDP.

\begin{figure}
		\begin{subfigure}[t]{.5\linewidth}
		\includegraphics[width=4in]{Figures/Gauss-easy-new-10-very-low.pdf}
		\end{subfigure}\\
		\vskip -7mm
		\begin{subfigure}[t]{.5\linewidth}
			\includegraphics[width=4in]{Figures/Gauss-easy-new-10-low.pdf}
		\end{subfigure}\\
			\vskip -7mm
		\begin{subfigure}[t]{.5\linewidth}
			\includegraphics[width=4in]{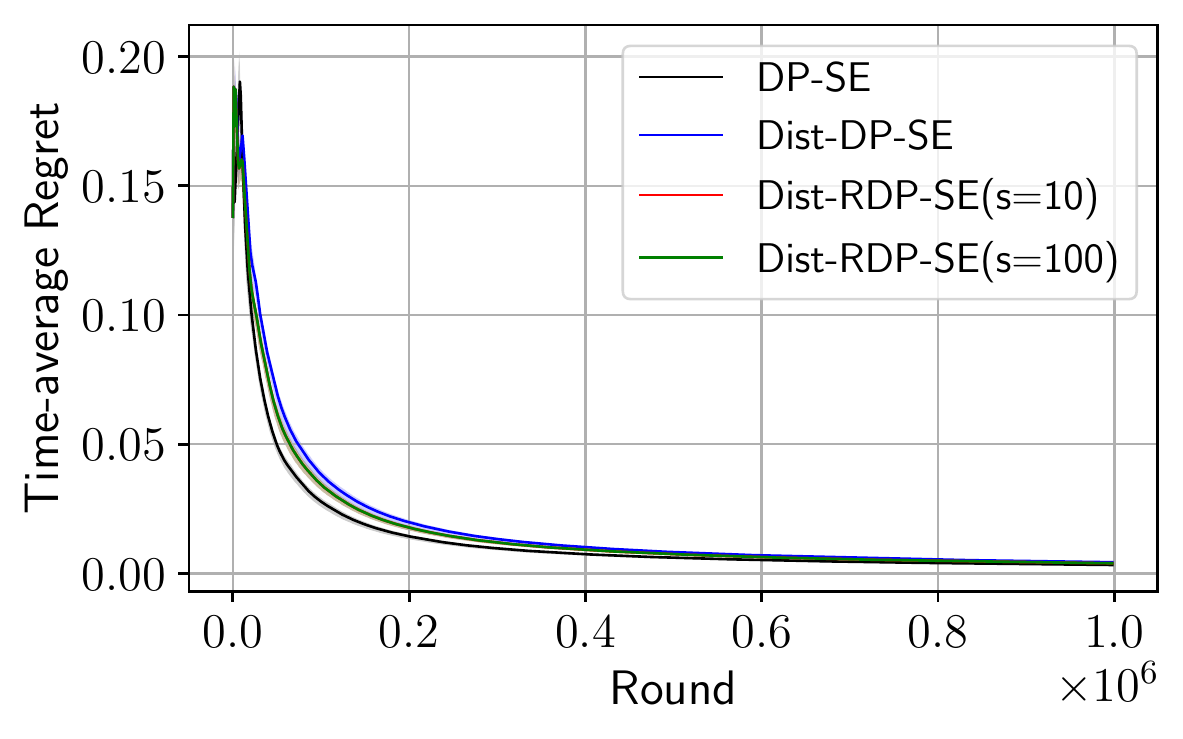}
		\end{subfigure}
		\vskip -4mm
		\caption{Comparison of time-average regret for Dist-DP-SE, Dist-RDP-SE, and DP-SE in Gaussian bandit instances under large reward gap (easy instance) with privacy level $\epsilon =0.1$ (top), $\epsilon =0.5$ (mid) and $\epsilon =1$ (bottom)} 
\label{fig:all_algos-easy}
\end{figure}

\begin{figure}
		\begin{subfigure}[t]{.5\linewidth}
		\includegraphics[width=4in]{Figures/Gauss-hard-new-10-very-low.pdf}
		\end{subfigure}\\
		\vskip -7mm
		\begin{subfigure}[t]{.5\linewidth}
			\includegraphics[width=4in]{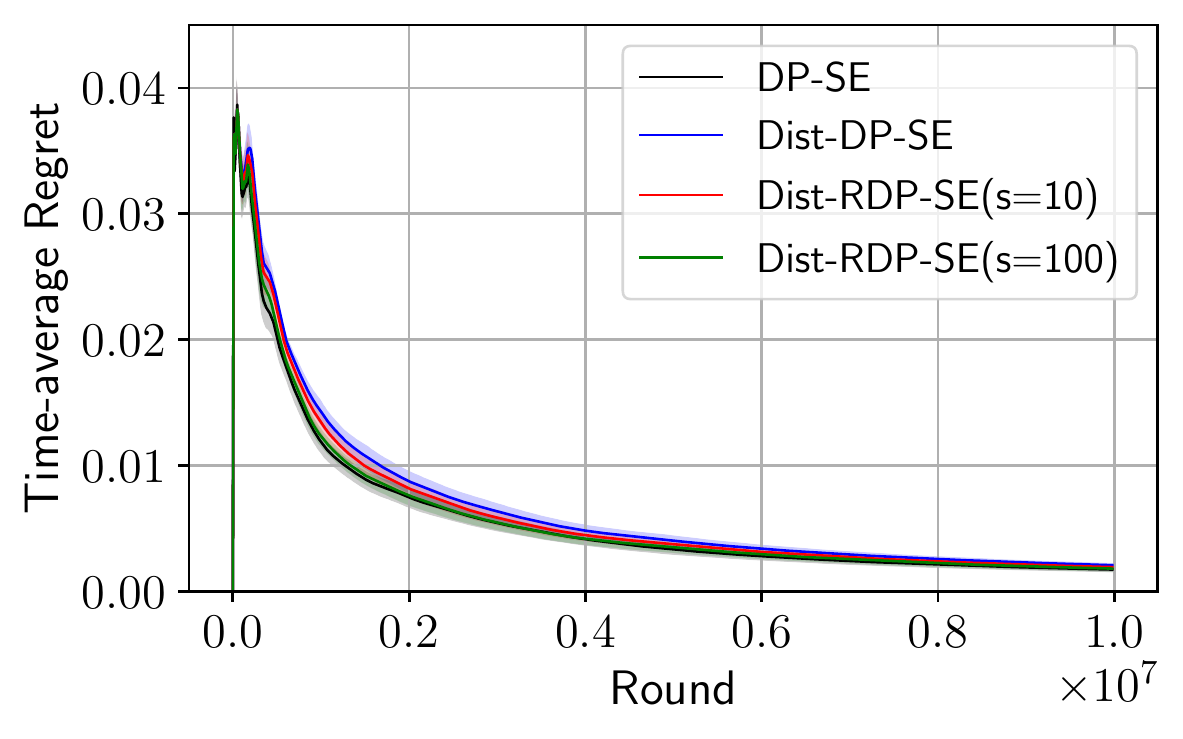}
		\end{subfigure}\\
		\vskip -7mm
		\begin{subfigure}[t]{.5\linewidth}
			\includegraphics[width=4in]{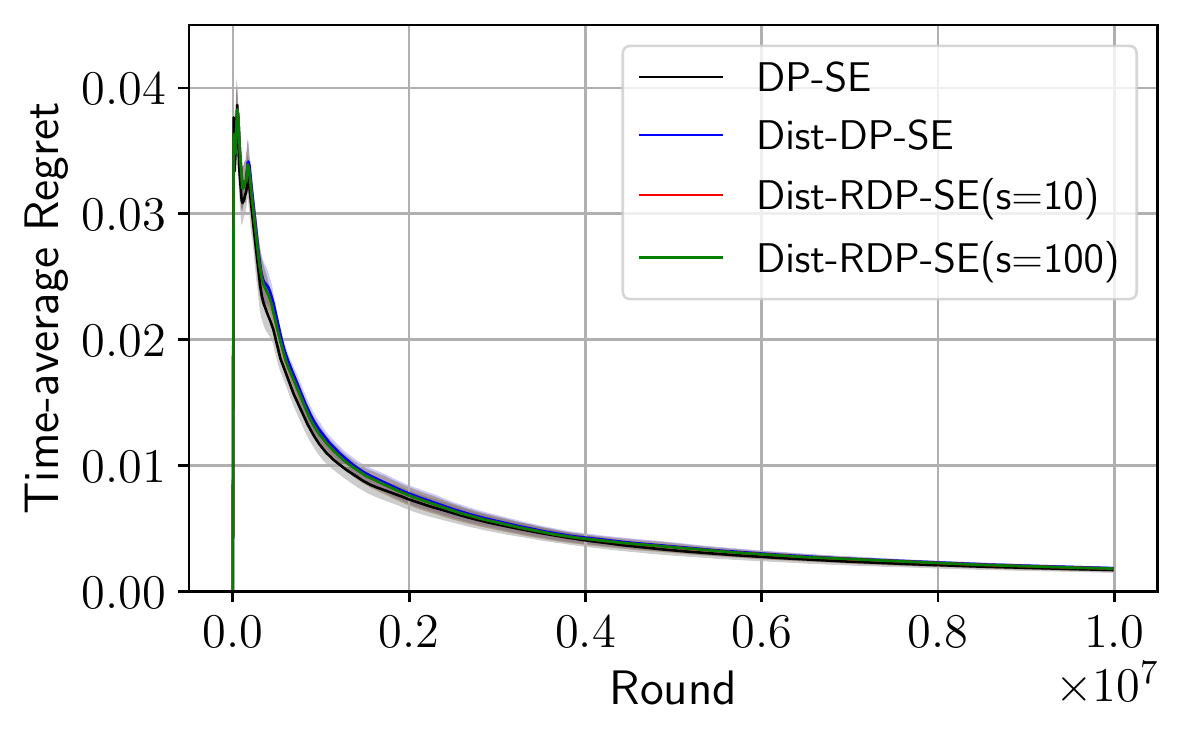}
		\end{subfigure}
		\vskip -4mm
		\caption{Comparison of time-average regret for Dist-DP-SE, Dist-RDP-SE, and DP-SE in Gaussian bandit instances under small reward gap (hard instance) with privacy level $\epsilon =0.1$ (top), $\epsilon =0.5$ (mid) and $\epsilon =1$ (bottom)} 
\label{fig:all_algos-hard}
\end{figure}

\begin{figure}
		\begin{subfigure}[t]{.5\linewidth}
			\includegraphics[width=4in]{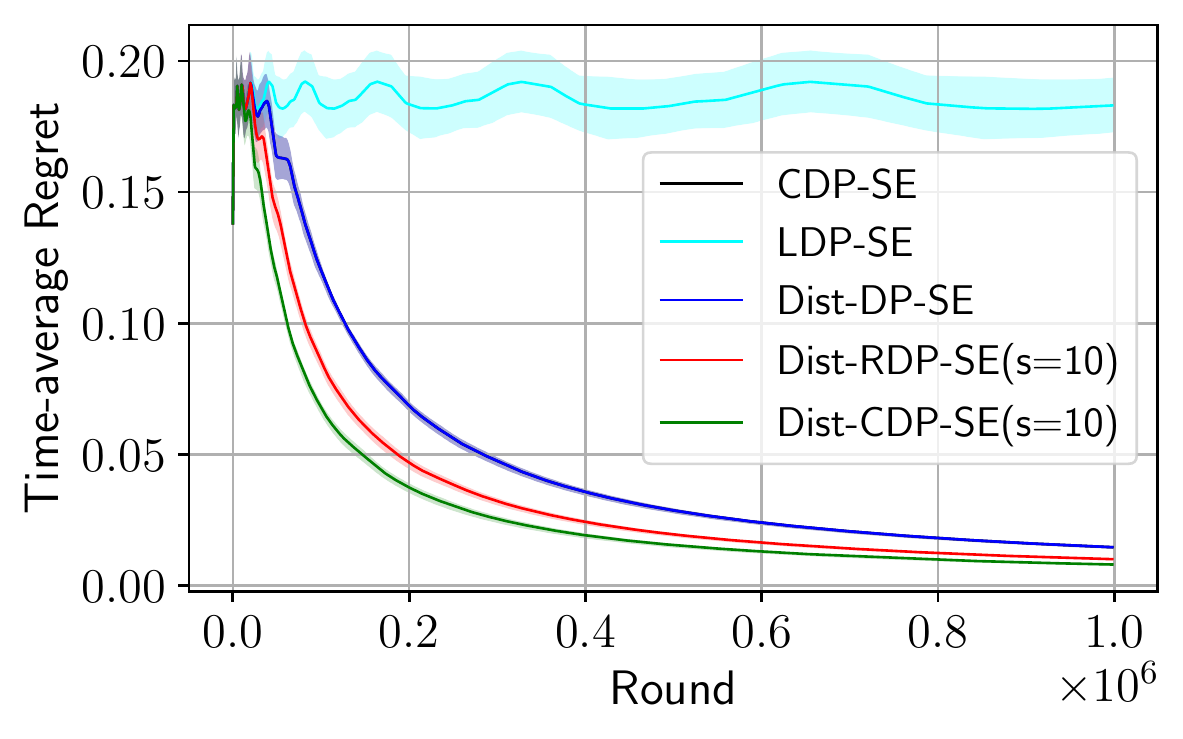}
		\end{subfigure}\\
		\vskip -7mm
		\begin{subfigure}[t]{.5\linewidth}
			\includegraphics[width=4in]{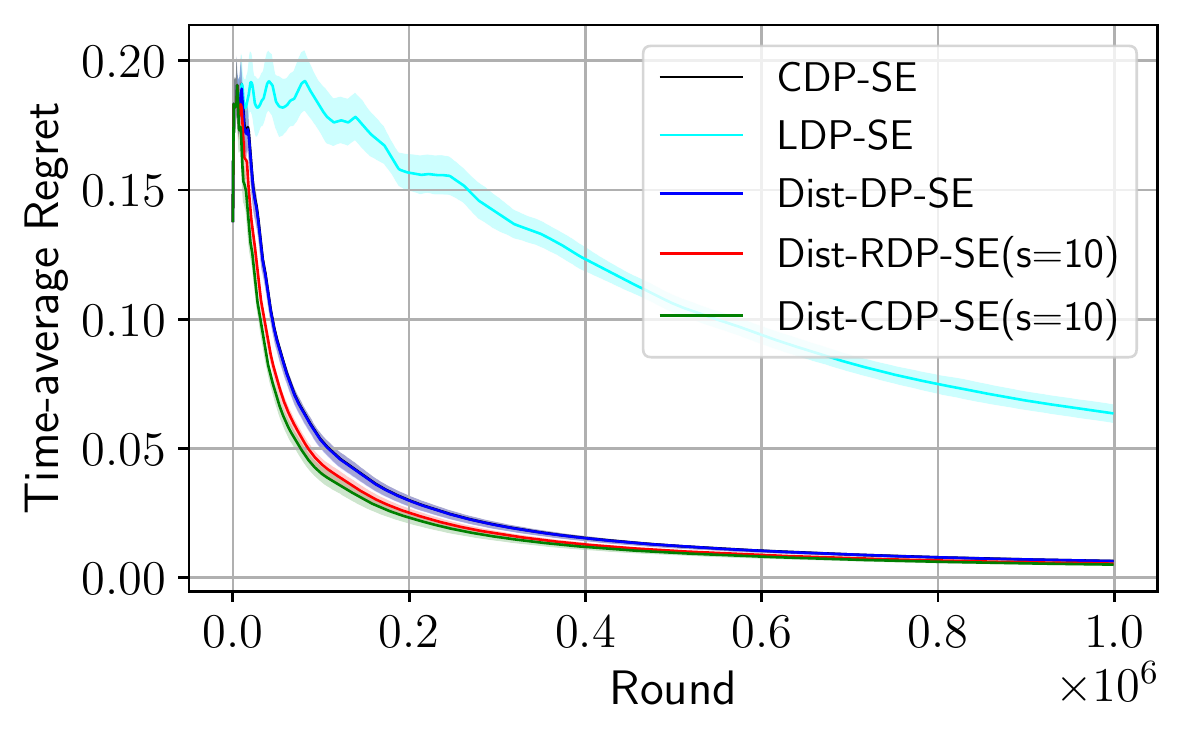}
		\end{subfigure}\\
		\vskip -7mm
		\begin{subfigure}[t]{.5\linewidth}
			\includegraphics[width=4in]{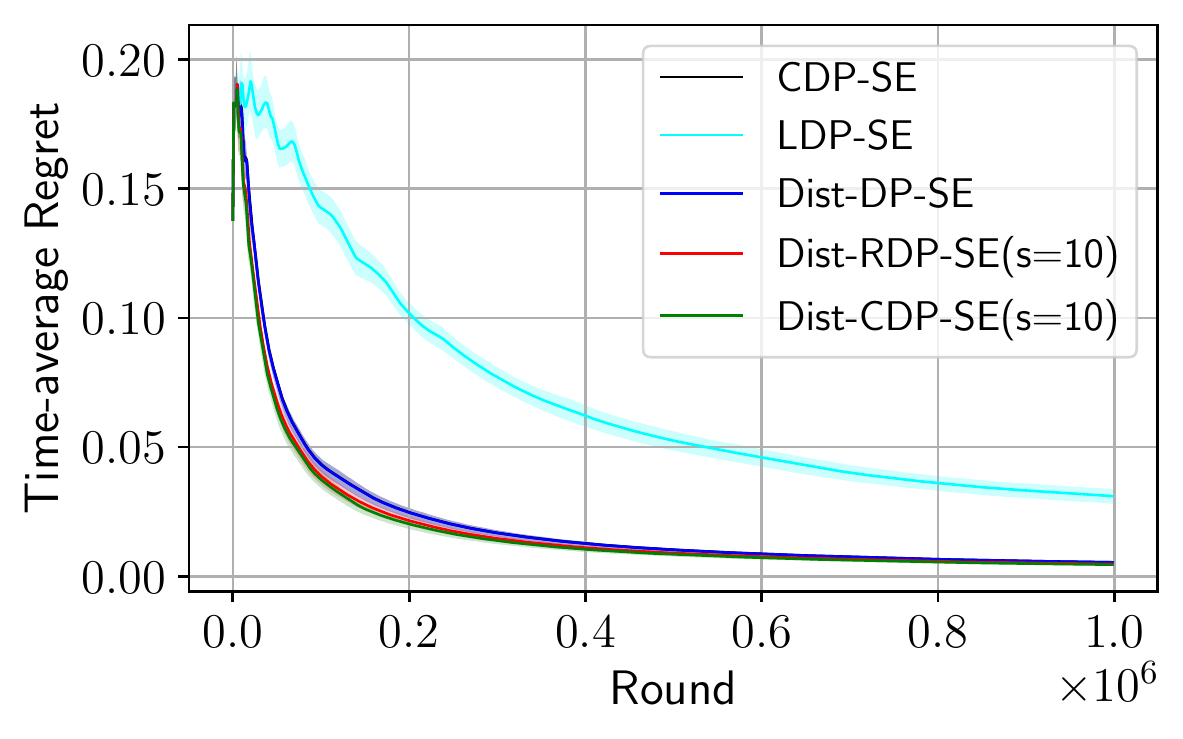}
		\end{subfigure}
		\vskip -4mm
		\caption{Comparison of time-average regret for CDP-SE, LDP-SE, Dist-DP-SE, Dist-RDP-SE and Dist-CDP-SE in Gaussian bandit instances under large reward gap (easy instance) with privacy level $\epsilon =0.1$ (top), $\epsilon =0.5$ (mid) and $\epsilon =1$ (bottom)} 
\label{fig:all_algos-easy-SE}
\end{figure}

\end{document}